\documentclass[11pt, a4paper]{article}

\usepackage{XCharter}

\usepackage[margin=1.5cm]{geometry}

% PACKAGES
\usepackage[dvipsnames, table]{xcolor}
\usepackage[colorlinks=true, citecolor=Blue, linkcolor=BrickRed]{hyperref}

\usepackage{amsfonts}
\usepackage[tbtags]{mathtools}
\usepackage{physics}
\usepackage{nicefrac}
\usepackage{empheq}
\usepackage{textcomp} % to use \textpm
\usepackage{bbm}

\usepackage{amsthm}
\usepackage{thmtools}
\usepackage{thm-restate}

\newtheorem{theorem}{Theorem}

\newtheorem{lemma}[theorem]{Lemma}
\newtheorem{corollary}[theorem]{Corollary}

\newtheorem{remark}{Remark}
\newtheorem{definition}{Definition}

\usepackage[normalem]{ulem}

\usepackage{dirtytalk}

% if you use cleveref..
\usepackage[capitalize,noabbrev]{cleveref}
\crefname{assumption}{Assumption}{Assumptions}
\crefname{equation}{Eq.}{Eqs.}

\usepackage{algorithm}
\usepackage{algpseudocodex}

\usepackage[subrefformat=parens,labelformat=parens]{subcaption}
\captionsetup{font=small,labelfont={small, bf}}

\usepackage{pgfplots}
\usepackage{pgfplotstable}
\pgfplotsset{compat=1.17}
\usetikzlibrary{calc}
% \usetikzlibrary{plotmarks}
\usepgfplotslibrary{external}
%\tikzexternalize[shell escape=-enable-write18]

\usepackage{booktabs}
\usepackage{multirow}
\usepackage{tablefootnote}

\usepackage{enumitem}

\usepackage{microtype}
\usepackage{changepage}

\usepackage{pifont}

\usepackage[style=authoryear, natbib=true, sorting=nyt, maxbibnames=9, maxcitenames=2, uniquelist=false, backend=biber]{biblatex}
\addbibresource{references.bib}

% if macro to enable/disable highlighting revision
\newif\ifhighlight

% \highlighttrue
\highlightfalse
\ifhighlight
    \newcommand{\revise}[1]{\textcolor{Magenta}{#1}}
    \newcommand{\remove}[1]{\sout{#1}}
\else
    \newcommand{\revise}[1]{#1}
    \newcommand{\remove}[1]{}
\fi

\usepackage{authblk}

\begin{document}

    \title{Task Weighting in Meta-learning with Trajectory Optimisation}
    
    \author[1]{Cuong Nguyen\thanks{cuong.nguyen@adelaide.edu.au}}
    \author[2]{Thanh-Toan Do\thanks{Toan.Do@monash.edu}}
    \author[3]{Gustavo Carneiro\thanks{g.carneiro@surrey.ac.uk}}
    \affil[1]{School of Computer Science, University of Adelaide, Australia}
    \affil[2]{Department of Data Science and AI, Monash University, Australia}
    \affil[3]{Centre for Vision, Speech and Signal Processing, University of Surrey, United Kingdom}
    
    \date{\vspace{-1em}}
    
    \maketitle
    
    \begin{abstract}
    Developing meta-learning algorithms that are un-biased toward a subset of training tasks often requires hand-designed criteria to weight tasks, potentially resulting in sub-optimal solutions. In this paper, we introduce a new principled and fully-automated task-weighting algorithm for meta-learning methods. By considering the weights of tasks within the same mini-batch as an action, and the meta-parameter of interest as the system state, we cast the task-weighting meta-learning problem to a trajectory optimisation and employ the iterative linear quadratic regulator to determine the optimal action or weights of tasks. We theoretically show that the proposed algorithm converges to an \(\epsilon_{0}\)-stationary point, and empirically demonstrate that the proposed approach out-performs common hand-engineering weighting methods in two few-shot learning benchmarks.
\end{abstract}
    
    \section{Introduction}
\label{sec:introduction}
    Meta-learning has been studied from the early 1990s~\citep{schmidhuber1987evolutionary, naik1992meta, thrun1998learning} and recently gained a renewed %\sout{research of}
    interest with the use of deep learning methods that achieves remarkable state-of-art results in several few-shot learning benchmarks~\citep{vinyals2016matching, finn2017model, snell2017prototypical, nichol2018first, ravi2018amortized, allen2019infinite, khodak2019adaptive, baik2020meta, flennerhag2020meta}. However, the majority of existing meta-learning algorithms simply minimise the average loss evaluated on validation subsets of training tasks, implicitly assuming \remove{that all training tasks are evenly distributed} \revise{task-balance (analogous to class balance in single-task learning)}. This assumption is hardly true in practice, and potentially biases the trained meta-learning models toward tasks observed more frequently during training, and consequently, resulting in a large variation of performance when evaluating on different subsets of testing tasks as shown in~\citep[\figureautorefname~1]{dhillon2019baseline} and \citep[Figure 1]{nguyen2021probabilistic}.

    One way to address such issue is to exploit the diversity of training tasks, so that the trained meta-learning models can generalise to a wider range of testing tasks. In fact, various studies in task relatedness or task similarity have shown that learning from certain tasks may facilitate the generalisation of meta-learning models~\citep{thrun1996discovering, zamir2018taskonomy, achille2019task2vec, nguyen2021probabilistic}. This suggests the design of a re-weighting mechanism to diversify the contribution of each training task when training a meta-learning model of interest. Existing re-weighting methods mostly rely on either hand-crafted criteria to determine those weights~\citep{collins2020task}, or additional validation tasks to learn the re-weighting factors of interest~\citep{xu2021meta}. Such ad-hoc development of re-weighting mechanisms motivates us to design a more principled approach to re-weight tasks for meta-learning. \revise{We note that there are also studies learning to balance the contribution of each training task, e.g., learning to balance~\citep{lee2020learning}. However, such method focuses on the task-adaptation step (also known as inner loop), while our interest is to explicitly weight the contribution of each task at the meta-learning step (also known as outer-loop).}

    In this paper, we present a new principled and fully-automated task-weighting algorithm, called \textbf{t}rajectory \textbf{o}ptimisation based task \textbf{w}eighting for meta-learning (TOW). We note that TOW is not a meta-learning method, but a task weighting framework that can be integrated into existing meta-learning algorithms to circumvent the problematic assumption about the even distribution of training tasks. Our contributions can be summarised as follows:
    \begin{itemize}%[topsep=0.125em]
        \item We propose to cast the task-weighting problem in meta-learning to a finite-horizon discrete-time trajectory optimisation with state denoted by the meta-parameter and action by the re-weight factors of tasks, and solve such problem using the iterative linear quadratic regulator.
        \item We prove that under the conditions of boundedness and smoothness of the loss function used, TOW converges to a particular \(\epsilon_{0}\)-stationary point.
        \item We demonstrate TOW's functionality by incorporating it into two common meta-learning algorithms, namely MAML~\citep{finn2017model} and Prototypical Networks~\citep{snell2017prototypical}, and showing that TOW enables meta-learning methods to converge with fewer \remove{training tasks} \revise{number of iterations} and achieves higher prediction accuracy than some common task re-weighting mechanisms in the literature.
    \end{itemize}
    % TOW models the training phase in meta-learning as a finite-horizon discrete-time trajectory optimisation with state denoted by the meta-parameter and action by the re-weight factors of tasks. The meta-learning problem can, therefore, be solved using iterative linear quadratic regulator~\citep{todorov2005generalized, tassa2012synthesis} -- an optimisation method in optimal control. We prove that under the conditions of boundedness and smoothness of the loss function used, TOW converges to a particular \(\epsilon\)-stationary point. We demonstrate TOW's functionality by incorporating it into two common meta-learning algorithms, namely MAML~\citep{finn2017model} and Prototypical Networks~\citep{snell2017prototypical}. The empirical results show that TOW enables meta-learning methods to converge faster using fewer training tasks and achieves higher prediction accuracy than some common task re-weighting mechanisms in the literature.
    \section{Background}
\label{sec:background}
    \subsection{Trajectory optimisation}
    \label{sec:TO_problem_statement}
        Given continuous state \(\mathbf{x} \in \mathbb{R}^{D}\) and action \(\mathbf{u} \in \mathbb{R}^{M}\), the objective of a trajectory optimisation is to find an optimal sequence of actions \(\{\mathbf{u}_{t}^{*}\}_{t=1}^{T}\) that minimises the total cost:
        \begin{equation}
            \begin{split}
                \min_{ \{ \mathbf{u}_{t}\}_{t=1}^{T}} & \sum_{t = 1}^{T} c(\mathbf{x}_{t}, \mathbf{u}_{t}) \quad \text{s.t. } \mathbf{x}_{t + 1} = f(\mathbf{x}_{t}, \mathbf{u}_{t}),
            \end{split}
            \label{eq:TO_problem}
        \end{equation}
        where \(c(\mathbf{x}_{t}, \mathbf{u}_{t})\) and \(f(\mathbf{x}_{t}, \mathbf{u}_{t})\) are the cost function and the state-transition dynamics at time step \(t\), respectively. These functions are assumed to be twice differentiable. In addition, the initial state \(\mathbf{x}_{1}\) is given, and \emph{trajectory optimisation} means finding the optimal sequence of actions \(\{ \mathbf{u}_{t}^{*} \}_{t = 1}^{T}\) for a particular \(\mathbf{x}_{1}\), not for all possible initial states.

        % One could solve the constrained optimisation in \eqref{eq:TO_problem} by substituting \(\mathbf{x}_{t + 1}\) in the state-transition dynamics into the cost function. The resulting unconstrained optimisation can then be solved using a non-linear optimisation solver to find the local optimal for weighting vector \(\mathbf{u}_{t}\). However, the substitution of \(\mathbf{x}_{t + 1}\) results in many deeply-composite terms, easily leading to gradient vanishing or gradient explosion if gradient-based optimiser is used.
        
        In trajectory optimisation, the finite-horizon discrete-time problem shown in \eqref{eq:TO_problem} can be solved approximately by iterative methods, such as differential dynamic programming (DDP)~\citep{jacobson1970differential} or iterative linear quadratic regulator (iLQR)~\citep{todorov2005generalized,tassa2012synthesis}. These methods rely on a local approximation of the state-transition dynamics and cost function using Taylor series about a nominal trajectory \(\{\hat{\mathbf{x}}_{t}, \hat{\mathbf{u}}_{t}\}_{t=1}^{T}\). In DDP, both the state-transition dynamics and cost function are approximated to the second order of their Taylor series, while in iLQR -- a \say{simplified} version of DDP, the state-transition dynamics is approximated up to the first order. In a loose sense, DDP is analogy to the Newton's method, while iLQR is analogous to a quasi-Newton's method.

        The main idea of iLQR is to cast a general non-linear trajectory optimisation problem shown in \eqref{eq:TO_problem} to a linear quadratic problem (LQP) in which the state-transition dynamics is linear and the cost function is quadratic. The approximate LQP can then be solved exactly by the linear quadratic regulator (LQR)~\citep{anderson2007optimal}. Subsequently, the newly obtained trajectory is used as the nominal trajectory for the next iteration. This process is repeated until the cost function is converged. The detailed derivation of iLQR can be found in \cref{sec:ilqr_derivation}. Further details of iLQR can be referred to \citep{todorov2005generalized, tassa2012synthesis}. 
        % One of the major contributions of this paper is that we provide the proof of convergence for iLQR adopted from DDP~\citep{yakowitz1984computational} in \cref{sec:ilqr_convergence_proof}, which to the best of our knowledge has not been done before.
        To our best knowledge, there are no previous works that provide a proof on the convergence of iLQR. Therefore, for a complete analysis, we provide the proof of convergence for iLQR adopted from DDP~\citep{yakowitz1984computational} in \cref{sec:ilqr_convergence_proof}.

    \subsection{Meta-learning}
    \label{sec:ML_problem}
        The setting of the meta-learning considered in this paper follows the \emph{task environment}~\citep{baxter2000model}, where tasks are i.i.d. sampled from an unknown distribution \(p(\mathcal{T})\) over a family of tasks. Each task \(\mathcal{T}_{i}\) is associated with two data subsets: training (or support) subset \(\mathcal{S}_{i}^{(s)} = \{(\mathbf{s}_{ij}^{(s)}, y_{ij}^{(s))} \}_{j=1}^{m_{i}^{(s)}}\), where \(\mathbf{s}_{ij}^{(s)}\) denotes a training input and \(y_{ij}^{(s)}\) denotes the corresponding training label, \(i \in \{1, \ldots, M\}\), and validation (or query) subset \(\mathcal{S}_{i}^{(q)}\) which is similarly defined. For training tasks \(\{\mathcal{T}_i\}_{i=1}^{M}\), both data subsets have labels, while for testing tasks \(\mathcal{T}_{M + 1}\), only the data in \(\mathcal{S}_{M + 1}^{(s)}\) is labelled. The aim is to learn a meta-parameter \(\mathbf{x} \in \mathbb{R}^{D}\) shared across all tasks, so that \(\mathbf{x}\) can be efficiently fine-tuned on \(\mathcal{S}_{i}^{(s)}\) to produce a task-specific model that can predict accurately the unlabelled data in \(\mathcal{S}_{i}^{(q)}\). One of the simplest forms of meta-learning is analogous to an extension of hyper-parameter optimisation in single-task learning, where the shared meta-parameter \(\mathbf{x}\) is learnt from many tasks. The objective of meta-learning can be expressed as:
        \begin{equation}
            \min_{\mathbf{x}} \frac{1}{M} \pmb{1}_{M}^{\top} \pmb{\ell}(\mathbf{x}),
            \label{eq:meta_learning_objective}
        \end{equation}
        where \(\pmb{1}_{M}\) is an \(M\)-dimensional vector with all elements equal to 1, and \(\pmb{\ell}(\mathbf{x}) \in \mathbb{R}^{M}\) is a vector containing \(M\) validation losses induced by evaluating the meta-parameter \(\mathbf{x}\) on each data subset \(\mathcal{S}_{i}^{(q)}\) of \(M\) training tasks. Each element of \(\pmb{\ell}(\mathbf{x})\) can be expressed as:
        \begin{equation}
            \pmb{\ell}_{i}(\mathbf{x}) = \frac{1}{m_{i}^{(q)}} \sum_{j=1}^{m_{i}^{(q)}} \ell \left( \mathbf{s}_{ij}^{(q)}, y_{ij}^{(q)}; \phi_{i}(\mathbf{x}) \right), \forall i \in \{1, \ldots, M\},
            \label{eq:validation_loss}
        \end{equation}
        where \(\ell(.)\) is the loss function that is non-negative and twice differentiable, \(\phi(\mathbf{x})\) is the parameter fine-tuned on task \(\mathcal{T}_{i}\):
        \begin{equation}
            \phi_{i}(\mathbf{x}) = \mathbf{x} - \frac{\gamma}{m_{i}^{(s)}} \sum_{k=1}^{m_{i}^{(s)}} \grad_{\mathbf{x}} \left[ \ell \left( \mathbf{s}_{ik}^{(s)}, y_{ik}^{(s)}; \mathbf{x} \right) \right],
            \label{eq:task_adaptation_gd}
        \end{equation}
        and \(\gamma\) is the step size or learning rate for the task adaptation step (also known as inner-loop).

        Note that the gradient-based task adaptation step in \eqref{eq:task_adaptation_gd} is a special case of meta-learning in which \(\mathbf{x}\) is considered as the initialisation of the neural network of interest~\citep{finn2017model}. In metric-based meta-learning~\citep{snell2017prototypical}, the task adaptation step in \eqref{eq:task_adaptation_gd} is slightly different where the class prototypes of training data are embedded into a latent space by the meta-model, and the validation loss is based on the distance between the class prototypes to each data-point in \(\mathcal{S}_{i}^{(q)}\). There are also other extensions of \eqref{eq:meta_learning_objective} using probabilistic approaches~\citep{yoon2018bayesian, ravi2018amortized, nguyen2020uncertainty}. Nevertheless, our approach proposed in \cref{sec:methodology} can be integrated into any of these meta-learning algorithms with a slight modification.

    \subsection{Task-weighting meta-learning}
    \label{sec:task_weighting_meta_learning}
        The minimisation of the average validation loss over \(M\) tasks in \eqref{eq:meta_learning_objective} implicitly implies that those tasks are \remove{evenly distributed} \revise{balanced (similar to class balance in single-task learning)}. This assumption is, however, hardly true in practice, and consequently, makes the trained meta-model perform poorly for testing tasks that are rarely observed \remove{(similar to class imbalance problems in single-task learning)}. To address such issue, a task-weighting factor is introduced to diversify the contribution of each training task, allowing the trained meta-model to generalise better to unseen tasks even if those tasks are rare. The objective of such meta-learning problem can be written as:
        \begin{equation}
        	\begin{split}
        		\mathbf{x}^{*} & = \arg\min_{\mathbf{x}} \mathbf{u}^{\top} \pmb{\ell} (\mathbf{x}) \quad \text{s.t. } \mathbf{u} \in \mathcal{U} \subseteq \mathbb{R}^{M},
        	\end{split}
        	\label{eq:weighted_meta_learning}
        \end{equation}
        where \(\mathbf{u}\) is an \(M\)-dimensional vector that re-weights the influence of \(M\) training tasks, and \(\mathcal{U}\) is the set of feasible weights, i.e., as defined by some weighting criterion.
        
        Note that the task-weighting problem in \eqref{eq:weighted_meta_learning} is carried out at the meta level (often known as \say{outer-loop}). It is, therefore, different from some recent meta-learning methods~\citep{khodak2019adaptive, baik2020meta, flennerhag2020meta, lee2020learning} that design different learning strategies for \(\phi_{i}(\mathbf{x})\) at the task adaptation step (often known as \say{inner-loop}) to estimate the meta-parameters with the same outer-loop objective shown in \eqref{eq:meta_learning_objective}.
        %\gustavo{should we emphasise that "... that design different learning strategies at the \say{inner-loop} that typically estimate the meta-model parameters"?}        \cuong{I do not understand the latter part about estimating the meta-model parameters.}\gustavo{I'm just concerned that the reviewer may not understand inner and outer loops. so I wanted to emphasise what happens in the inner loop.} \cuong{I revised it by emphasising the outer-loop objective function is a uniform weighting.}

        The objective in \eqref{eq:weighted_meta_learning} is more flexible than \eqref{eq:meta_learning_objective}, since it allows one to select different weighting criteria to train the meta-learning model of interest. The most widely-used weighting criterion is \textbf{uniform}: \(\mathbf{u}_{i} = \nicefrac{1}{M}, \forall i \in \{1, \ldots, M\}\), making the objective in \eqref{eq:weighted_meta_learning} resemble the one in \eqref{eq:meta_learning_objective}. Another popular criterion is to select \textbf{difficult} tasks -- tasks that have the largest validation losses -- for training to optimise the performance on the worst-case scenarios~\citep{collins2020task}. However, such difficult tasks may not always be preferred when outliers and noise are present. That leads to another weighting approach which favours the \textbf{most familiar} data-points in single-task learning~\citep{kumar2010self, bengio2009curriculum, wang2017robust} -- often referred as \emph{curriculum learning}. The two latter task-weighting approaches can be considered as the \say{exploration} and \say{exploitation} strategies used in reinforcement learning (RL), respectively. Similar to the exploration and exploitation dilemma in RL, we hypothesise that the optimality for task weighting is formed by a balance between these two approaches. In the following section, we propose a principled approach to automate re-weighting tasks through an optimisation on a sequence of many mini-batches rather than relying on manually-designed criteria as the previous papers.
    \section{Methodology}
\label{sec:methodology}
    \subsection{Task-weighting as a trajectory optimisation}
    \label[subsection]{sec:ml2to}
        In practice, the optimisation in \eqref{eq:weighted_meta_learning} is often carried out using a gradient-based optimiser where the next meta-parameter \(\mathbf{x}^{*}\) is obtained from the current meta-parameter \(\mathbf{x}\) and its corresponding \(\mathbf{u}\) via the function \(f\). Such update can be considered as a state-transition dynamics where the meta-parameter \(\mathbf{x}\) is the state and the weighting vector \(\mathbf{u}\) is the action. Given this observation, we explicitly replace the weighting criterion in \eqref{eq:weighted_meta_learning} by a trajectory optimisation to formulate the task-weighting meta-learning problem as follows:
        \begin{equation}
            \begin{split}
                & \mathbf{x}_{t + 1}^{*} = f(\mathbf{x}_{t}^{*}, \mathbf{u}_{t}^{*}) ~ \forall t \in \{1, \ldots, T\}\\
                & \qquad \text{s.t. } \{ \mathbf{u}_{t}^{*} \}_{t = 1}^{T} = \arg\min_{\{\mathbf{u}\}_{t = 1}^{T}} \sum_{t = 1}^{T} c(\mathbf{x}_{t}, \mathbf{u}_{t})\\
                 & \qquad\hphantom{\text{s.t. } }\qquad \text{s.t. } \mathbf{x}_{t + 1} = f(\mathbf{x}_{t}, \mathbf{u}_{t}) ~ \forall t \in \{1, \ldots, T\}, \text{ and } \mathbf{x}_{1} \text{is given}\\
                & \qquad \hphantom{\text{s.t. } } \mathbf{x}_{1}^{*} = \mathbf{x}_{1},
            \end{split}
            \label{eq:meta_learning_with_trajectory_optimsation}
        \end{equation}
        where \(f(.,.)\) corresponds to the formulation of an optimiser such as stochastic gradient descent (SGD)~\citep{robbins1951stochastic} or Adam~\citep{kingma2014adam}, \(c(.,.)\) is a cost function representing the weighting criteria, \(\mathbf{x}_{1}\) is the initialisation of the meta-learning parameter, and the subscript denotes the time step.

        To solve for an optimal re-weighting vector \(\mathbf{u}\) in the constraint of \eqref{eq:meta_learning_with_trajectory_optimsation}, the cost function needs to be defined. Since our interest is the convergence speed and the generalisation of the learnt meta-model, we define the cost function as an un-discounted sum of uniformly-weighted validation losses of tasks belonging to a sequence of \(T\) mini-batches plus a penalisation on the action \(\mathbf{u}\). For simplicity, the penalty on the action \(\mathbf{u}\) is assumed to follow a Gaussian prior with mean \(\mu_{u}\) and precision \(\beta_{u}\). In particular, the cost function can be expressed as:
        \begin{equation}
            c(\mathbf{x}_{t}, \mathbf{u}_{t}) = \pmb{1}_{M}^{\top} \pmb{\ell}(\mathbf{x}_{t}) + \frac{\beta_{u}}{2} \norm{\mathbf{u}_{t} - \mu_{u} \pmb{1}_{M}}^{2},
            \label{eq:cost_function}
        \end{equation}
        where \(\norm{.}\) denotes the \(\ell_{2}\)-norm.

        Note that the action \(\mathbf{u}_{t}\) is not necessarily normalised to 1. We argue that imposing such constraint might not work well in some cases, for example, a mini-batch containing all familiar tasks, and another one containing all unfamiliar tasks. Our hypothesis is to have small weights for familiar tasks in the former mini-batch, while setting large weights for unfamiliar tasks in the latter mini-batch to diversify the learning. Normalising \(\mathbf{u}_{t}\) to 1 will, however, be undesirable since the contribution of the tasks in both mini-batches would be the same, making the meta-learning model even biased further toward the familiar tasks in the first mini-batch. Hence, we allow the weights to be determined automatically by the optimisation in \eqref{eq:meta_learning_with_trajectory_optimsation} with a Gaussian prior.
        Nevertheless, one can also implement \(\mathbf{u}_{t}\) being normalised to 1 by simply replacing it by \(\mathrm{softmax}(\mathbf{u}_{t})\) in the state-transition dynamics \(f\).

        In general, the constraint in \eqref{eq:meta_learning_with_trajectory_optimsation} cannot be solved exactly, but approximately using iterative methods such as DDP or iLQR. Given the state-transition dynamics \(f\) follows the formulation of a first-order gradient-based optimiser (refer to Eq.~\eqref{eq:dynamics_sgd} in \cref{sec:linearised_dynamics_sgd} and Eq.~\eqref{eq:dynamics_adam} in \cref{sec:linearised_dynamics_adam} for the explicit form of \(f\) using SGD and Adam, respectively), \(f\) consists of the first derivatives of the weighted loss \(\mathbf{u}_{t}^{\top} \pmb{\ell}(\mathbf{x}_{t})\) w.r.t. \(\mathbf{x}\). Hence, applying DDP will result in an intractable solution since DDP requires the second derivatives of \(f\), corresponding to the third derivatives of the weighted loss \(\mathbf{u}_{t}^{\top} \pmb{\ell} (\mathbf{x}_{t})\). In contrast, iLQR needs only the first derivatives of \(f\), which corresponds to the second derivatives of the weighted loss \(\mathbf{u}_{t}^{\top} \pmb{\ell}(\mathbf{x}_{t})\). Although this means that iLQR no longer exhibits the quadratic convergence rate as DDP, in the context of meta-learning, the significant reduction in computation out-weights the speed of convergence for the task weighting vector \(\mathbf{u}\). In this paper, we use iLQR to solve the constraint in \eqref{eq:meta_learning_with_trajectory_optimsation}. The locally-optimal actions obtained is then used to re-weight the tasks in each mini-batch to train the meta-learning model of interest.

        The approximation using Taylor's series on the state-transition dynamics and cost function is shown in \cref{sec:linearisation} and \cref{sec:quadraticisation}, respectively. This approximation leads to the calculation of two Hessian matrices: one for the sum of weighted loss, \(\mathbf{u}_{t}^{\top} \pmb{\ell}(\mathbf{x}_{t})\), in the dynamics, denoted as \(\mathbf{F}_{\mathbf{x}_{t}}\), and the other for the sum of non-weighted loss, \(\pmb{1}^{\top}_{M} \pmb{\ell}(\mathbf{x}_{t})\), in the cost function, denoted as \(\mathbf{C}_{\mathbf{x}_{t}, \mathbf{x}_{t}}\). In addition, while performing recursive backward iLQR, we need to calculate another intermediate matrix of the \emph{cost-to-go} in \eqref{eq:cost_to_go} (please refer to \cref{sec:ilqr_derivation}), denoted as \(\mathbf{V}_{t}\), which has the same size as the Hessian matrix. Naively calculating these Hessian matrices comes at the quadratic complexity \(\order{D^{2}}\) in terms of running time and storage, resulting in an intractable solution for large-scaled models. To address such issue, the two Hessian matrices \(\mathbf{F}_{\mathbf{x}_{t}}\) and \(\mathbf{C}_{\mathbf{x}_{t}, \mathbf{x}_{t}}\) may be approximated by their diagonals which can be efficiently computed using the Hutchinson’s method~\citep{bekas2007estimator}. However, as the size of the model increases, using a few samples from the uniform Rademacher distribution produces noisy estimations of the Hessian diagonals, resulting in a poor approximation~\citep{yao2021adahessian}. Instead of calculating the Hessian diagonals, we use the Gauss-Newton diagonals as replacements. As the Gauss-Newton matrix is known to be a good approximation of the Hessian matrix~\citep{martens2010deep, botev2017practical}, this, therefore, results in a good approximation for the Hessian operator. In addition, Gauss-Newton diagonals can be efficiently calculated using a single backward pass~\citep{dangel2020backpack}. For the matrix \(\mathbf{V}_{t}\), we approximate it by its diagonal matrix. \revise{In fact, matrix \(\mathbf{V}_{t}\) is analogous to the inverse Hessian matrix in Newton's method. Thus, approximating matrix \(\mathbf{V}_{t}\) by its diagonal means performing Newton's method separately for each coordinate, which holds when the diagonal of \(\mathbf{V}_{t}\) is dominant}. We also provide some additional results using full matrix \(\mathbf{V}_{t}\) in \cref{apdx:full_matrix_results}. \revise{In general, we do not observe any significant difference in terms of accuracy evaluated on the validation set between the diagonal approximation and the one with full Gauss-Newton matrix.} Nevertheless, these approximation increases the tractability of our proposed method, allowing to implement the proposed method for very large models, such as deep neural networks.

        The whole procedure of the proposed approach can be described as follows: first, a meta-parameter \(\mathbf{x}_{1}\) is initialised as the initial state, and then, iLQR is employed to solve the constraint in \eqref{eq:meta_learning_with_trajectory_optimsation} to determine a locally-optimal action \(\{ \mathbf{u}_{t}^{*} \}_{t = 1}^{T}\) about an arbitrary-but-feasible trajectory \(\{ (\hat{\mathbf{x}}_{t}, \hat{\mathbf{u}}_{t}) \}_{t=1}^{T}\) with \(\hat{\mathbf{x}}_{1} = \mathbf{x}_{1}\). The obtained weighting vectors \(\{ \mathbf{u}_{t}^{*} \}_{t = 1}^{T}\) are then used to weight tasks in each mini-batch to train the meta-parameter \(\mathbf{x}_{t + 1}^{*}\) in \eqref{eq:meta_learning_with_trajectory_optimsation}. The newly calculated state at the end of the \(T\) time steps, \(\mathbf{x}_{T + 1}^{*}\), is then used as the initial state for the next iteration. This process is repeated until the weighted validation loss \(\mathbf{u}^{\top} \pmb{\ell} (\mathbf{x})\) converges to a local minima. \revise{In the implementation, we observe that this optimisation converges after less than 10 iterations.} The complete algorithm of the proposed task-weighting meta-learning approach is outlined in \cref{algm:task_weighting_ilqr}.

        \begin{algorithm}[t]
            \caption{Task-weighting for meta-learning}
            \label{algm:task_weighting_ilqr}
            \begin{algorithmic}[1]
                \Procedure{train}{}
                    \State define total loss \(J\) in Eq.~\eqref{eq:total_cost}
                    \State define \Call{iLQRbackward}{\hphantom{x}} in \cref{algm:approximate_lqr} (\cref{sec:TO_algorithms})
                    % \Statex
                    \State initialise \(\mathbf{x}_{1}\)
                    \While{\(\mathbf{x}\) is not converged}
                        \State get \(T\) mini-batches, each consists of \(M\) tasks
                        \State generate a random sequence of action \(\{\hat{\mathbf{u}}_{t}\}_{t = 1}^{T}\)
                        \State obtain the corresponding state \(\{ \hat{\mathbf{x}}_{t} \}_{t = 1}^{T}\)
                        % \State Generate random trajectory: \(\{\hat{\mathbf{x}}_{t}, \hat{\mathbf{u}}_{t}\}_{t=1}^{T}\)
                        \While{iLQR cost is not converged}
                            \State \(\{ \mathbf{K}_{t}, \mathbf{k}_{t} \}_{t = 1}^{T}, \theta_{1} \gets\) \Call{iLQRbackward}{$\{ \hat{\mathbf{x}}_{t}, \hat{\mathbf{u}}_{t} \}_{t = 1}^{T}$}
                            \State \(\varepsilon = 2\) %\Comment{Variable for backtracking line search}
                            \Repeat \Comment{Backtracking line search}
                                \State \(\varepsilon \gets \frac{1}{2} \varepsilon\)
                                \For{\(t = 1:T\)} \Comment{iLQR forward pass}
                                    \State \(\mathbf{u}_{t} = \mathbf{K}_{t} \left( \mathbf{x}_{t} - \hat{\mathbf{x}}_{t} \right) + \varepsilon \mathbf{k}_{t} + \hat{\mathbf{u}}_{t}\)
                                    \State \(\mathbf{x}_{t + 1} = f(\mathbf{x}_{t}, \mathbf{u}_{t})\) %\Comment{Non-linear dynamics}
                                \EndFor
                            \Until{\(J(\mathbf{u}_{1:N}) - J(\hat{\mathbf{u}}_{1:N}) \le \frac{1}{2} \varepsilon \theta_{1}\) and \(\mathbf{u}_{ti} \ge 0\)} \label{step:stopping_ilqr}%~\forall t \in \{1, \ldots, T\}, i \in \{1, \ldots, M\}\)}
                            \State \(\{ \hat{\mathbf{x}}_{t} \}_{t = 1}^{T} \gets \{ \mathbf{x}_{t} \}_{t = 1}^{T}\) \Comment{Update nominal state}
                            \State \(\{ \hat{\mathbf{u}}_{t} \}_{t = 1}^{T} \gets \{ \mathbf{u}_{t} \}_{t = 1}^{T}\) %\Comment{Update action}
                        \EndWhile
                        % \Statex
                        \State \(\mathbf{x}_{1} \gets \mathbf{x}_{T}\) \Comment{Update meta-parameter}
                    \EndWhile
                    \State \textbf{return} \(\mathbf{x}_{1}\)
                \EndProcedure
            \end{algorithmic}
        \end{algorithm}

        To simplify the implementation and convergence analysis, we select the nominal actions that coincide with the uniform weighting, meaning that: \(\hat{\mathbf{u}}_{ti} = \nicefrac{1}{M}, \forall t \in \{1, \ldots, T\}, i \in \{1, \ldots, M\}\). In addition, we constrain that all elements of the weighting vector or action \(\mathbf{u}\) are non-negative since each task would either contribute more or less or even not contribute to the learning for \(\mathbf{x}\). This constraint is incorporated into the stopping condition for iLQR shown in step~\ref{step:stopping_ilqr} of \cref{algm:task_weighting_ilqr}. If there is at least one element \(\mathbf{u}_{ti}, t \in \{1, \ldots, T\}, i \in \{1, \ldots, M\}\) being negative, the backtracking line search will iterate one more time with \(\varepsilon\) decaying toward 0, forcing \(\mathbf{u}_{t}\) to stay close to the nominal \(\hat{\mathbf{u}}_{t}\). Thus, in the worst-case, \(\varepsilon\) is reduced to 0, making \(\mathbf{u}_{t}\) coincide with \(\hat{\mathbf{u}}_{t}\), which is the uniform weighting.

        \subsection{Complexity analysis}
        \label{sec:complexity_analysis}
            The downside of TOW is the overhead due to the linearisation and quadraticisation for the state-transition dynamics and cost function, and the calculation to obtain the controller \(\mathbf{K}_{t}\) and \(\mathbf{k}_{t}\) shown in \cref{algm:task_weighting_ilqr}. If \(\order{T_{0}}\) is the time complexity to train a meta-learning method following a uniform weighting strategy, then the time complexity required by TOW will consist of the following:
            \begin{itemize}%[topsep=0pt]
                \item nominal trajectory: \(\order{T_{0}}\)
                \item linearisation and quadraticisation using Gauss-Newton matrices: \(\order{n_{\mathrm{iLQR}} m_{0} \eta D}\)
                \item iLQR backward: \(\order{n_{\mathrm{iLQR}} M D }\)
                \item iLQR forward with back-tracking line search: \(\order{n_{\mathrm{iLQR}} n_{\mathrm{ls}} T_{0}}\),
            \end{itemize}
            where \(n_{\mathrm{iLQR}}\) is the number of iterations in iLQR, \({m_{0} = m_{i}^{(q)}}, i \in \{1, \ldots, M\},\) is the total number of validation samples within a task, \(\eta\) is the number of arithmetic operations in the model of interest, and \(n_{\mathrm{ls}}\) is the number of back-tracking line search. Thus, the final complexity of TOW is: \({\order{(n_{\mathrm{iLQR}} n_{\mathrm{ls}} + 1)T_{0} + n_{\mathrm{iLQR}} (m_{0} \eta + M) D)}}\) comparing to \(\order{T_{0}}\) in the conventional meta-learning.

        \subsection{Convergence analysis}
        \label{sec:convergence_analysis}
            This subsection proves that the training process for MAML using TOW to weight tasks converges to an \(\epsilon_{0}\)-stationary point where \(\epsilon_{0}\) is greater than some positive constant.
            % In other words, we prove an inequality similar to the following:
            % \begin{equation}
            %     \exists \epsilon > 0: \norm{ \grad_{\mathbf{x}} \mathbf{u}_{T_{\mathrm{iter}}} \pmb{\ell} \left( \mathbf{x}_{T_{\mathrm{iter}}} \right) } \le \epsilon.
            % \end{equation}
            % where \(\pmb{\ell}(\mathbf{x})\) is defined in \eqref{eq:validation_loss}.
            Before analysing the convergence of TOW, we state a lemma bounding the norm of the weighting vector (or action) \(\mathbf{u}_{t}\) obtained from iLQR:
            \begin{restatable}[]{lemma}{LemmaBoundedNormU}
                \label{lemma:bound_norm_delta_u}
                If \(\mathbf{u}_{t}\) is a stationary action of a nominal action \(\hat{\mathbf{u}}_{t}\) obtained from iLQR, then:
                \begin{equation*}
                    \exists \delta > 0: \norm{ \mathbf{u}_{t} - \hat{\mathbf{u}}_{t} } \le \delta.
                \end{equation*}
            \end{restatable}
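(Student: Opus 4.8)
The plan is to turn the explicit quadratic penalty on the action in the cost function \eqref{eq:cost_function} into an a~priori bound, using the fact that the backtracking line search of \cref{algm:task_weighting_ilqr} only ever \emph{decreases} the total cost \(J\). Recall that the iLQR forward pass sets \(\mathbf{u}_{t} = \mathbf{K}_{t}(\mathbf{x}_{t} - \hat{\mathbf{x}}_{t}) + \varepsilon\,\mathbf{k}_{t} + \hat{\mathbf{u}}_{t}\), and that a \emph{stationary action} is an action produced once the inner loop on the iLQR cost has converged. Every accepted update of that loop satisfies the stopping condition \(J(\mathbf{u}_{1:T}) - J(\hat{\mathbf{u}}_{1:T}) \le \tfrac{1}{2}\varepsilon\,\theta_{1}\), and the scalar \(\theta_{1}\) returned by the backward pass is the quadratic-model decrease, which is non-positive and vanishes only at a stationary point. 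Hence \(J\) is non-increasing along the accepted nominal iterates (each step performs \(\hat{\mathbf{u}}\gets\mathbf{u}\)), so the stationary action \(\{\mathbf{u}_{t}\}_{t=1}^{T}\) satisfies \(J(\mathbf{u}_{1:T}) \le J(\hat{\mathbf{u}}^{(0)}_{1:T}) =: J_{0}\), where \(\hat{\mathbf{u}}^{(0)}\) is the initial nominal action, i.e. the uniform weighting \(\hat{\mathbf{u}}^{(0)}_{ti} = \nicefrac{1}{M}\) chosen in \cref{sec:ml2to}.

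Next I would observe that \(J_{0}\) is finite: by the boundedness hypothesis on \(\ell\) there is \(\bar{\ell}\) with \(\pmb{1}_{M}^{\top}\pmb{\ell}(\cdot) \le M\bar{\ell}\) along the finite-horizon rollout, while the penalty term at the uniform nominal equals \(\tfrac{\beta_{u}}{2}\,M(\nicefrac{1}{M}-\mu_{u})^{2}\), so summing the per-step costs over the finite horizon \(T\) gives a finite \(J_{0}\). Because \(\ell \ge 0\), every summand of \(J(\mathbf{u}_{1:T}) = \sum_{t=1}^{T}\bigl[\pmb{1}_{M}^{\top}\pmb{\ell}(\mathbf{x}_{t}) + \tfrac{\beta_{u}}{2}\norm{\mathbf{u}_{t}-\mu_{u}\pmb{1}_{M}}^{2}\bigr]\) is non-negative, hence for each \(t\) we get \(\tfrac{\beta_{u}}{2}\norm{\mathbf{u}_{t}-\mu_{u}\pmb{1}_{M}}^{2} \le J(\mathbf{u}_{1:T}) \le J_{0}\), i.e. \(\norm{\mathbf{u}_{t}-\mu_{u}\pmb{1}_{M}} \le \sqrt{2J_{0}/\beta_{u}}\). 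A triangle inequality against the fixed nominal \(\hat{\mathbf{u}}_{t} = \nicefrac{1}{M}\,\pmb{1}_{M}\) then yields \(\norm{\mathbf{u}_{t}-\hat{\mathbf{u}}_{t}} \le \sqrt{2J_{0}/\beta_{u}} + \sqrt{M}\,|\nicefrac{1}{M}-\mu_{u}| =: \delta > 0\), which is the claimed bound.

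An alternative, more local route starts directly from the forward pass, \(\mathbf{u}_{t}-\hat{\mathbf{u}}_{t} = \mathbf{K}_{t}(\mathbf{x}_{t}-\hat{\mathbf{x}}_{t}) + \varepsilon\mathbf{k}_{t}\): one bounds \(\norm{\mathbf{K}_{t}}\) and \(\norm{\mathbf{k}_{t}}\) using that the action block of the cost-to-go Hessian is \(\succeq \beta_{u}\mathbf{I}\) (from the Gaussian prior), while the loss gradients and Gauss--Newton Hessians are bounded by the smoothness/boundedness hypotheses, uses \(\varepsilon \le 2\), and propagates \(\norm{\mathbf{x}_{t}-\hat{\mathbf{x}}_{t}}\) by induction over the finite horizon from \(\mathbf{x}_{1}=\hat{\mathbf{x}}_{1}\) via the Lipschitzness of the optimiser map \(f\). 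I expect the main obstacle in either route to be the justification that the line search terminates with a genuine cost decrease while respecting \(\mathbf{u}_{ti}\ge 0\) — i.e. that a small enough \(\varepsilon\) always passes the Armijo-type test and that \(\theta_{1}<0\) away from stationarity; note that at a true stationary point the feedforward term vanishes and \(\mathbf{u}_{t}=\hat{\mathbf{u}}_{t}\), making the bound trivial there. Since this is precisely the content of the iLQR convergence argument already supplied in \cref{sec:ilqr_convergence_proof}, I would invoke that result rather than re-derive it.
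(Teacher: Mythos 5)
Your proof is correct, but your primary argument takes a genuinely different route from the paper's. The paper's own proof is essentially the second, \say{local} route that you sketch and then set aside: it writes \(\mathbf{u}_{t} - \hat{\mathbf{u}}_{t} = \mathbf{K}_{t}(\mathbf{x}_{t} - \hat{\mathbf{x}}_{t}) + \varepsilon \mathbf{k}_{t}\) and simply asserts that, because \(\mathbf{K}_{t}\), \(\mathbf{k}_{t}\), \(\mathbf{x}_{t}\) and \(\hat{\mathbf{x}}_{t}\) are well-defined, the norm of the difference is well-defined, adding only the informal remark that a larger \(\beta_{u}\) yields a smaller \(\delta\); it never carries out the bounding of \(\norm{\mathbf{K}_{t}}\), \(\norm{\mathbf{k}_{t}}\) and \(\norm{\mathbf{x}_{t}-\hat{\mathbf{x}}_{t}}\) that you correctly identify as the substantive work of that route. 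Your main argument instead exploits the structure of the cost: the accepted line-search iterates satisfy \(J(\mathbf{u}_{1:T}) \le J(\hat{\mathbf{u}}_{1:T})\) because \(\theta_{1}\le 0\), the initial nominal cost \(J_{0}\) is finite under \cref{assumption:loss_boundedness_lipschitz}, and non-negativity of \(\ell\) and of the quadratic penalty lets you peel off \(\tfrac{\beta_{u}}{2}\norm{\mathbf{u}_{t}-\mu_{u}\pmb{1}_{M}}^{2} \le J_{0}\) term by term, followed by a triangle inequality to the uniform nominal. This buys something the paper's proof does not actually deliver: an explicit \(\delta = \sqrt{2J_{0}/\beta_{u}} + \sqrt{M}\,\lvert \nicefrac{1}{M} - \mu_{u}\rvert\) that is uniform over outer iterations and initial states \(\mathbf{x}_{1}\) (a uniformity that \cref{theorem:convergence_tow} implicitly needs, since \(\delta\) enters \(\epsilon_{0}\)), and it turns the paper's qualitative remark about the \(\beta_{u}\)-dependence into a quantitative \(\delta = O(\beta_{u}^{-1/2})\). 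The ingredients you defer to --- \(\theta_{1}\le 0\) away from stationarity and termination of the backtracking search (which holds trivially since \(\varepsilon\to 0\) recovers the nominal action and gives \(J(\mathbf{u})-J(\hat{\mathbf{u}})=0\le \tfrac{1}{2}\varepsilon\theta_{1}\)) --- are indeed supplied by \cref{sec:ilqr_convergence_proof}, so invoking that section rather than re-deriving it is legitimate.
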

            \begin{proof}
                Please refer to \cref{sec:boundedness_action} for the detailed proof.
            \end{proof}

            To analyse the convergence of a general non-convex function, one typically assumes that the loss function, its first derivative and second derivative are bounded and Lipschitz-continuous as shown in \cref{assumption:loss_boundedness_lipschitz,assumption:loss_smoothness,assumption:hessian_loss_lipschitz}, respectively~\citep{collins2020task, fallah2020convergence}.

            \begin{restatable}[]{assumption}{AssumptionLossBoundednessLipschitz}
                \label{assumption:loss_boundedness_lipschitz}
                The loss function of interest \(\ell\) mentioned in \eqref{eq:validation_loss} is \(B\)-bounded and \(L\)-Lipschitz.
	        \end{restatable}

	        Formally, \cref{assumption:loss_boundedness_lipschitz} means that the loss function \(\ell\) has the following properties:
            \begin{itemize}
                \item Boundedness: \(\exists B > 0: \forall \mathbf{x} \in \mathbb{R}^{D}, \abs{\ell(\mathbf{s}_{ij}, y_{ij}; \mathbf{x})} \le B \),
                \item Lipschitz continuity: \(\exists L > 0: \forall \widetilde{\mathbf{x}}, \overline{\mathbf{x}} \in \mathbb{R}^{D}, \abs{\ell(\mathbf{s}_{ij}, y_{ij}; \widetilde{\mathbf{x}}) - \ell(\mathbf{s}_{ij}, y_{ij}; \overline{\mathbf{x}})} \le L \norm{ \widetilde{\mathbf{x}} - \overline{\mathbf{x}} } \).
            \end{itemize}

            \revise{The boundedness assumption is to bound the second moment of the loss function, while }
            \remove{T}the Lipschitz-continuity assumption of the loss function \(\ell\) \remove{also} implies that the gradient norm of \(\ell\) w.r.t. \(\mathbf{x}\) is bounded above (see \cref{lemma:lipschitz_to_bounded_grad} in \cref{sec:miscellaneous_lemmas}):
            \begin{equation}
                \norm{ \grad_{\mathbf{x}} \ell (\mathbf{s}, y; \mathbf{x}) } \le L.
                \label{eq:bounded_grad}
            \end{equation}

            The bounded gradient norm in \eqref{eq:bounded_grad} also implies that the variance of gradient of the loss function w.r.t. training samples is bounded as shown in \cref{lemma:bounded_variance_grad}.
            \begin{restatable}[]{lemma}{LemmaBoundedVarianceGrad}
                \label{lemma:bounded_variance_grad}
                If \cref{assumption:loss_boundedness_lipschitz} holds, then the variance of gradient of the loss function is \(\sigma^{2}\)-bounded:
                \begin{equation*}
                    \exists \sigma > 0: \forall \mathbf{x} \in \mathbb{R}^{D}, \mathbb{E}_{(\mathbf{s}_{ij}, y) \sim \mathcal{D}_{i}} \left[ \norm{ \grad_{\mathbf{x}} \ell(\mathbf{s}_{ij}, y_{ij}; \mathbf{x}) - \mathbb{E}_{(\mathbf{s}_{ij}, y) \sim \mathcal{D}_{i}} \left[ \grad_{\mathbf{x}} \ell(\mathbf{s}_{ij}, y_{ij}; \mathbf{x}) \right] }^{2} \right] \le \sigma^{2}.
                \end{equation*}
            \end{restatable}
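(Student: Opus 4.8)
The plan is to obtain the variance bound directly from the uniform gradient-norm bound in \eqref{eq:bounded_grad}, which is itself a consequence of \cref{assumption:loss_boundedness_lipschitz} (via \cref{lemma:lipschitz_to_bounded_grad}). First I would recall that, since \(\ell\) is \(L\)-Lipschitz and twice differentiable, \eqref{eq:bounded_grad} gives \(\norm{\grad_{\mathbf{x}} \ell(\mathbf{s}_{ij}, y_{ij}; \mathbf{x})} \le L\) for every \(\mathbf{x} \in \mathbb{R}^{D}\) and every sample \((\mathbf{s}_{ij}, y_{ij})\) in the support of \(\mathcal{D}_{i}\). The point worth emphasising is that this bound is \emph{uniform} in the sample, so when \((\mathbf{s}_{ij}, y_{ij}) \sim \mathcal{D}_{i}\) the gradient is a bounded, hence square-integrable, random vector and the expectation \(\mathbb{E}_{(\mathbf{s}_{ij}, y) \sim \mathcal{D}_{i}}[\grad_{\mathbf{x}} \ell(\mathbf{s}_{ij}, y_{ij}; \mathbf{x})]\) is well defined.

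Next I would invoke the bias--variance decomposition for a square-integrable random vector \(Z\): \(\mathbb{E}[\norm{Z - \mathbb{E}[Z]}^{2}] = \mathbb{E}[\norm{Z}^{2}] - \norm{\mathbb{E}[Z]}^{2} \le \mathbb{E}[\norm{Z}^{2}]\). Applying this with \(Z = \grad_{\mathbf{x}} \ell(\mathbf{s}_{ij}, y_{ij}; \mathbf{x})\) and using \(\norm{Z} \le L\) pointwise gives \(\mathbb{E}_{(\mathbf{s}_{ij}, y) \sim \mathcal{D}_{i}}[\norm{Z}^{2}] \le L^{2}\), hence the variance in the statement is at most \(L^{2}\). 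Taking \(\sigma = L\) (or \(\sigma = L + \epsilon\) for any \(\epsilon > 0\) if one insists on a strict bound) then proves the claim. A cruder alternative, if one wishes to avoid the decomposition, is the triangle inequality \(\norm{Z - \mathbb{E}[Z]} \le \norm{Z} + \norm{\mathbb{E}[Z]} \le 2L\), which yields \(\sigma = 2L\); I would prefer the sharper constant \(\sigma = L\).

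I do not expect any real obstacle here: the lemma is essentially a one-line corollary of \eqref{eq:bounded_grad}. The only item that warrants an explicit sentence is the uniformity of the pointwise gradient bound over the sample space, which is what rules out any integrability issue in passing from \(\norm{Z} \le L\) a.s.\ to \(\mathbb{E}[\norm{Z}^{2}] \le L^{2}\); everything else is the standard variance-dominated-by-second-moment estimate.
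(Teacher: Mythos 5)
Your proposal is correct, and its main line of argument is genuinely different from (and slightly sharper than) the paper's. The paper bounds the integrand pointwise: by the triangle inequality and Jensen's inequality, \(\norm{ \grad_{\mathbf{x}} \ell - \mathbb{E}[\grad_{\mathbf{x}} \ell] } \le \norm{\grad_{\mathbf{x}} \ell} + \mathbb{E}\left[ \norm{\grad_{\mathbf{x}} \ell} \right] \le 2L\), and then takes \(\sigma = 2L\) --- this is exactly the ``cruder alternative'' you mention in passing. Your primary route instead uses the identity \(\mathbb{E}\left[ \norm{Z - \mathbb{E}[Z]}^{2} \right] = \mathbb{E}\left[ \norm{Z}^{2} \right] - \norm{\mathbb{E}[Z]}^{2} \le \mathbb{E}\left[ \norm{Z}^{2} \right] \le L^{2}\), which yields \(\sigma = L\). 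Both are valid, since the lemma only asserts the existence of some \(\sigma > 0\); your constant is a factor of 2 better and would propagate to a marginally tighter \(\widetilde{\sigma}^{2}\) in \cref{lemma:bounded_variance_weighted_loss} and hence a smaller \(\epsilon_{0}\) in \cref{theorem:convergence_tow}, though nothing downstream depends on the specific value. Your remark about the uniformity of the gradient bound over the sample space, which guarantees square-integrability and the existence of the mean, is a point the paper leaves implicit.
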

            \begin{proof}
                Please refer to \cref{sec:boundedness_variance_grad} for the detailed proof.
            \end{proof}

            The result in \cref{lemma:bounded_variance_grad} also leads to the boundedness of the variance of the weighted validation loss as shown in \cref{lemma:bounded_variance_weighted_loss}.
            \begin{restatable}[]{lemma}{LemmaBoundedVarianceWeightedLoss}
                \label[lemma]{lemma:bounded_variance_weighted_loss}
                Given the result in \cref{lemma:bounded_variance_grad}, the variance of \(\grad_{\mathbf{x}} \mathbf{u}_{t}^{\top} \pmb{\ell}(\mathbf{x}_{t})\) is  bounded above by \(\widetilde{\sigma}^{2} = \sigma^{2} \left(\delta + M^{-0.5} \right)^{2}\).
	        \end{restatable}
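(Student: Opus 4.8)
The plan is to expand $\grad_{\mathbf{x}} \mathbf{u}_{t}^{\top} \pmb{\ell}(\mathbf{x}_{t})$ as a weighted sum of the individual task gradients, use independence across tasks to discard the cross terms in the variance, bound each per-task gradient variance by $\sigma^{2}$ via \cref{lemma:bounded_variance_grad}, and finally control $\norm{\mathbf{u}_{t}}$ with the triangle inequality and \cref{lemma:bound_norm_delta_u}.

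Concretely, I would write $\grad_{\mathbf{x}} \mathbf{u}_{t}^{\top} \pmb{\ell}(\mathbf{x}_{t}) = \sum_{i=1}^{M} \mathbf{u}_{ti}\, g_{i}$ with $g_{i} := \grad_{\mathbf{x}} \pmb{\ell}_{i}(\mathbf{x}_{t})$, the stochastic gradient of the $i$-th validation loss, the randomness being the draw of the validation samples $(\mathbf{s}_{ij}^{(q)}, y_{ij}^{(q)}) \sim \mathcal{D}_{i}$; set $\bar g_{i} := \mathbb{E}[g_{i}]$. Treating $\mathbf{u}_{t}$ as fixed with respect to this randomness (it is produced by iLQR from a separately drawn nominal trajectory, and $\norm{\mathbf{u}_{t} - \hat{\mathbf{u}}_{t}} \le \delta$ holds pathwise anyway) and using that $g_{i}, g_{j}$ are independent for $i \ne j$ since they are built from data of different tasks — so that $\mathbb{E}\langle g_{i} - \bar g_{i}, g_{j} - \bar g_{j}\rangle = \langle \mathbb{E}[g_{i}-\bar g_{i}], \mathbb{E}[g_{j}-\bar g_{j}]\rangle = 0$ — the cross terms drop:
\[
\mathbb{E}\norm{ \grad_{\mathbf{x}} \mathbf{u}_{t}^{\top}\pmb{\ell}(\mathbf{x}_{t}) - \mathbb{E}\bigl[\grad_{\mathbf{x}} \mathbf{u}_{t}^{\top}\pmb{\ell}(\mathbf{x}_{t})\bigr] }^{2} = \sum_{i=1}^{M} \sum_{j=1}^{M} \mathbf{u}_{ti}\mathbf{u}_{tj}\, \mathbb{E}\langle g_{i} - \bar g_{i}, g_{j} - \bar g_{j}\rangle = \sum_{i=1}^{M} \mathbf{u}_{ti}^{2}\, \mathbb{E}\norm{g_{i} - \bar g_{i}}^{2}.
\]
By \cref{lemma:bounded_variance_grad} applied to task $i$ (using that averaging the per-sample gradients into $g_{i}$ cannot increase the variance, by convexity of $\norm{\cdot}^{2}$), each $\mathbb{E}\norm{g_{i} - \bar g_{i}}^{2} \le \sigma^{2}$, so the right-hand side is at most $\sigma^{2}\sum_{i} \mathbf{u}_{ti}^{2} = \sigma^{2}\norm{\mathbf{u}_{t}}^{2}$. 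Finally, the nominal action is the uniform weighting, so $\norm{\hat{\mathbf{u}}_{t}} = \norm{M^{-1}\pmb{1}_{M}} = M^{-1/2}$, and the triangle inequality together with \cref{lemma:bound_norm_delta_u} gives $\norm{\mathbf{u}_{t}} \le \norm{\hat{\mathbf{u}}_{t}} + \norm{\mathbf{u}_{t} - \hat{\mathbf{u}}_{t}} \le M^{-1/2} + \delta$, whence the variance is at most $\sigma^{2}(\delta + M^{-1/2})^{2} = \widetilde{\sigma}^{2}$.

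I expect the only real subtlety to be the bookkeeping about what is random: one has to fix the conditioning so that $\mathbf{u}_{t}$ acts as a constant while the variance is taken over the sampled validation data, and one has to be sure the per-task gradients entering the weighted sum are genuinely independent across tasks so the cross terms vanish. Once that is set up, what remains is the routine variance-of-a-weighted-sum computation plus the two already-established bounds.
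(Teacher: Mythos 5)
Your proposal is correct and follows essentially the same route as the paper: expand the weighted gradient as $\sum_{i}\mathbf{u}_{ti}\grad_{\mathbf{x}}\pmb{\ell}_{i}(\mathbf{x}_{t})$, use independence across tasks so the variance reduces to $\sum_{i}\mathbf{u}_{ti}^{2}\,\mathrm{Var}(\grad_{\mathbf{x}}\pmb{\ell}_{i})\le\sigma^{2}\norm{\mathbf{u}_{t}}^{2}$, and then bound $\norm{\mathbf{u}_{t}}\le\delta+M^{-1/2}$ via the triangle inequality and \cref{lemma:bound_norm_delta_u} (the paper invokes \cref{corollary:bound_norm_u} for this last step). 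Your explicit justification of the vanishing cross terms and of the within-task averaging step is if anything slightly more careful than the paper's, which asserts the corresponding equality directly.
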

	        \begin{proof}
	            Please refer to \cref{sec:boundedness_variance_loss} for the detailed proof.
	        \end{proof}
	        
            \begin{restatable}[]{assumption}{AssumptionLossSmoothness}
            \label{assumption:loss_smoothness}
                % The loss function \(\ell (\mathbf{s}, y; \mathbf{x})\) is \(S\)-smooth.
                The gradient of the loss function \(\ell (\mathbf{s}, y; \mathbf{x})\) w.r.t. \(\mathbf{x}\) is \(S\)-Lipschitz.
            \end{restatable}
            
            \cref{assumption:loss_smoothness} means that:
            \begin{equation*}
                \exists S > 0: \forall \widetilde{\mathbf{x}}, \overline{\mathbf{x}} \in \mathbb{R}^{D}, \lVert \grad_{\mathbf{x}} \ell(\mathbf{s}_{ij}, y_{ij}; \widetilde{\mathbf{x}}) - \grad_{\mathbf{x}} \ell(\mathbf{s}_{ij}, y_{ij}; \overline{\mathbf{x}}) \lVert \le S \lVert \widetilde{\mathbf{x}} - \overline{\mathbf{x}} \rVert.
            \end{equation*}
    
            % \cref{assumption:loss_smoothness} also leads to the followings:
            % \begin{align}
            %     -S \mathbf{I} \preceq \grad_{\mathbf{x}}^{2} \ell (\mathbf{s}, y; \mathbf{x}) \preceq S \mathbf{I},
            %     \label{eq:hessian_boundedness}
            % \end{align}
            % where \(\mathbf{I}\) is the identity matrix. \cuong{need to explain the notation}
	        
            \begin{restatable}[]{assumption}{AssumptionHessianLossLipschitz}
                \label{assumption:hessian_loss_lipschitz}
                The Hessian matrix \(\grad_{\mathbf{x}}^{2} \ell(\mathbf{s}, y; \mathbf{x}) \) is \(\rho\)-Lipschitz.
            \end{restatable}

            \cref{assumption:hessian_loss_lipschitz} implies that:
            \begin{equation*}
                \exists \rho > 0: \forall \widetilde{\mathbf{x}}, \overline{\mathbf{x}} \in \mathbb{R}^{D}, \norm{ \grad_{\mathbf{x}}^{2} \ell(\mathbf{s}_{ij}, y_{ij}; \widetilde{\mathbf{x}}) - \grad_{\mathbf{x}}^{2} \ell(\mathbf{s}_{ij}, y_{ij}; \overline{\mathbf{x}}) } \le \rho \norm{ \widetilde{\mathbf{x}} - \overline{\mathbf{x}} }.
            \end{equation*}

            These assumptions are used to bound the gradient of the \say{true} validation loss of task \(\mathcal{T}_{i}\), which is defined as follows:

            % \begin{restatable}[True validation loss]{definition}{TrueValidationLoss}
            %     % The true validation loss of task \(\mathcal{T}_{i}\) is defined as
            %     \[\bar{\pmb{\ell}}_{i} (\mathbf{x}) = \mathbb{E}_{\mathcal{D}_{i}^{(q)}} \left[ \ell \left( \mathbf{s}_{ij}^{(q)}, y_{ij}^{(q)}; \phi(\mathbf{x}) \right) \right],\]
            %     where \(\mathbb{E}_{\mathcal{D}_{i}^{(q)}}\) indicates the expectation over all data pairs \(\{\mathbf{s}_{ij}^{(q)}, y_{ij}^{(q)}\}_{j = 1}^{+\infty}\) sampled from the true probability distribution \(\mathcal{D}_{i}^{(q)}\).
            %     \label{definition:true_validation_loss}
            % \end{restatable}
            \begin{equation}
                \bar{\pmb{\ell}}_{i} (\mathbf{x}) = \mathbb{E}_{\mathcal{D}_{i}^{(q)}} \left[ \ell \left( \mathbf{s}_{ij}^{(q)}, y_{ij}^{(q)}; \phi(\mathbf{x}) \right) \right],
                \label{eq:true_validation_loss}
            \end{equation}
                where \(\mathbb{E}_{\mathcal{D}_{i}^{(q)}}\) indicates the expectation over all data pairs \(\{(\mathbf{s}_{ij}^{(q)}, y_{ij}^{(q)})\}_{j = 1}^{+\infty}\) sampled from the true (validation) probability distribution \(\mathcal{D}_{i}^{(q)}\).

            \begin{restatable}[]{lemma}{LemmaSmoothness}
                \label[lemma]{lemma:smoothness}
                If the conditions in \cref{assumption:loss_boundedness_lipschitz,assumption:loss_smoothness,assumption:hessian_loss_lipschitz} hold, then the gradient of the true validation loss \(\bar{\pmb{\ell}}_{i}(\mathbf{x})\) defined in Eq.~\eqref{eq:true_validation_loss} is \(\widetilde{S}\)-Lipschitz, where: \(\widetilde{S} = S(1 + \gamma S)^{2} + \gamma \rho L\).
	        \end{restatable}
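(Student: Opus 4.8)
The plan is to compute $\grad_{\mathbf{x}}\bar{\pmb{\ell}}_{i}$ explicitly via the chain rule through the one-step adaptation map \eqref{eq:task_adaptation_gd}, express it as a product of a Jacobian factor and an expected inner-gradient factor, and then bound its variation by the triangle inequality, controlling each factor with one of the three assumptions.

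First I would note that (writing $\phi_{i}$ for the map in \eqref{eq:task_adaptation_gd} and using, as is standard, that the query-distribution expectation in \eqref{eq:true_validation_loss} commutes with $\grad_{\mathbf{x}}$)
\[
\grad_{\mathbf{x}}\bar{\pmb{\ell}}_{i}(\mathbf{x}) = \mathbf{A}(\mathbf{x})^{\top}\,\mathbf{g}(\mathbf{x}), \qquad \mathbf{A}(\mathbf{x}) = \mathbf{I} - \frac{\gamma}{m_{i}^{(s)}}\sum_{k=1}^{m_{i}^{(s)}} \grad_{\mathbf{x}}^{2}\ell\!\left(\mathbf{s}_{ik}^{(s)},y_{ik}^{(s)};\mathbf{x}\right),
\]
where $\mathbf{g}(\mathbf{x}) = \mathbb{E}_{\mathcal{D}_{i}^{(q)}}\!\left[\grad_{\phi}\ell\!\left(\mathbf{s}_{ij}^{(q)},y_{ij}^{(q)};\phi_{i}(\mathbf{x})\right)\right]$ is the inner gradient evaluated at the adapted parameter. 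Then, for any $\widetilde{\mathbf{x}},\overline{\mathbf{x}}\in\mathbb{R}^{D}$, adding and subtracting the cross term $\mathbf{A}(\widetilde{\mathbf{x}})^{\top}\mathbf{g}(\overline{\mathbf{x}})$ gives
\[
\norm{\grad_{\mathbf{x}}\bar{\pmb{\ell}}_{i}(\widetilde{\mathbf{x}}) - \grad_{\mathbf{x}}\bar{\pmb{\ell}}_{i}(\overline{\mathbf{x}})} \le \norm{\mathbf{A}(\widetilde{\mathbf{x}})}\,\norm{\mathbf{g}(\widetilde{\mathbf{x}})-\mathbf{g}(\overline{\mathbf{x}})} + \norm{\mathbf{g}(\overline{\mathbf{x}})}\,\norm{\mathbf{A}(\widetilde{\mathbf{x}})-\mathbf{A}(\overline{\mathbf{x}})}.
\]

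Next I would bound the four factors. \cref{assumption:loss_smoothness} ($S$-Lipschitz gradient) forces $\norm{\grad_{\mathbf{x}}^{2}\ell}\le S$ pointwise, so $\norm{\mathbf{A}(\mathbf{x})}\le 1+\gamma S$ and, as a by-product, $\phi_{i}$ is $(1+\gamma S)$-Lipschitz. The same assumption applied in the $\phi$-slot, together with the bounded-gradient estimate \eqref{eq:bounded_grad}, gives $\norm{\mathbf{g}(\overline{\mathbf{x}})}\le L$ and $\norm{\mathbf{g}(\widetilde{\mathbf{x}})-\mathbf{g}(\overline{\mathbf{x}})}\le S\norm{\phi_{i}(\widetilde{\mathbf{x}})-\phi_{i}(\overline{\mathbf{x}})}\le S(1+\gamma S)\norm{\widetilde{\mathbf{x}}-\overline{\mathbf{x}}}$ (taking the query expectation preserves both a Lipschitz constant and a uniform norm bound). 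Finally, \cref{assumption:hessian_loss_lipschitz} makes each per-sample Hessian $\rho$-Lipschitz, hence so is their average, so $\norm{\mathbf{A}(\widetilde{\mathbf{x}})-\mathbf{A}(\overline{\mathbf{x}})}\le \gamma\rho\norm{\widetilde{\mathbf{x}}-\overline{\mathbf{x}}}$. Substituting these four bounds into the displayed inequality yields exactly the claimed constant $\widetilde{S}=S(1+\gamma S)^{2}+\gamma\rho L$.

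The only non-mechanical point is the interchange $\grad_{\mathbf{x}}\mathbb{E}_{\mathcal{D}_{i}^{(q)}}[\,\cdot\,]=\mathbb{E}_{\mathcal{D}_{i}^{(q)}}[\grad_{\mathbf{x}}\,\cdot\,]$, which I would justify by dominated convergence using the uniform gradient bound \eqref{eq:bounded_grad} (and the differentiability of $\phi_{i}$, which holds because it is an affine-in-gradient transform of the twice-differentiable $\ell$), together with keeping the roles of the empirical support set and the population query distribution straight. Everything else is a direct application of the triangle inequality and the three assumptions, so I expect this interchange and the associated bookkeeping to be the main — and essentially the only — subtlety.
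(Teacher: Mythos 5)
Your proposal is correct and follows essentially the same route as the paper: the same product decomposition of $\grad_{\mathbf{x}}\bar{\pmb{\ell}}_{i}$ into the adaptation Jacobian $\mathbf{I}-\gamma\,\mathbb{E}_{\mathcal{S}_{i}^{(s)}}[\grad_{\mathbf{x}}^{2}\ell]$ times the expected query gradient, the same add-and-subtract of a cross term, and the same four bounds $(1+\gamma S)$, $S(1+\gamma S)$, $\gamma\rho$, and $L$ on the resulting factors. The only cosmetic difference is which cross term you insert (the paper perturbs the Jacobian factor first, you perturb the gradient factor first) and that you bound $\lVert\mathbf{A}\rVert\le 1+\gamma S$ directly by the triangle inequality rather than via the paper's eigenvalue argument; both give the identical constant $\widetilde{S}=S(1+\gamma S)^{2}+\gamma\rho L$.
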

	        \begin{proof}
	            Please refer to \cref{sec:smoothness_loss} for the detailed proof.
	        \end{proof}

         %    In addition, we assume that the variance of the loss function \(\ell\) evaluated on different data points is bounded.
	        % \begin{restatable}[]{assumption}{AssumptionBoundedVariance}
	        %     The variance of the gradient \(\grad_{\mathbf{x}} \ell\) is \(\sigma^{2}\)-bounded.
         %        \label{assumption:bounded_variance}
	        % \end{restatable}
	        
	        % \cref{assumption:bounded_variance} implies that:
         %    \begin{equation*}
         %        \exists \sigma > 0: \forall \mathbf{x} \in \mathbb{R}^{D}, \mathbb{E}_{(\mathbf{s}_{ij}, y) \sim \mathcal{D}_{i}} \left[ \norm{ \grad_{\mathbf{x}} \ell(\mathbf{s}_{ij}, y_{ij}; \mathbf{x}) - \mathbb{E}_{(\mathbf{s}_{ij}, y) \sim \mathcal{D}_{i}} \left[ \grad_{\mathbf{x}} \ell(\mathbf{s}_{ij}, y_{ij}; \mathbf{x}) \right] }^{2} \right] \le \sigma^{2}.
         %    \end{equation*}

	        % Such assumption leads to the boundedness of the variance of the weighted validation loss as shown in \cref{lemma:bounded_variance_weighted_loss}.
	        % \begin{restatable}[]{lemma}{LemmaBoundedVarianceWeightedLoss}
         %        \label[lemma]{lemma:bounded_variance_weighted_loss}
         %        If Assumption~\ref{assumption:bounded_variance} holds, then the variance of \(\grad_{\mathbf{x}} \mathbf{u}_{t}^{\top} \pmb{\ell}(\mathbf{x}_{t})\) is  bounded above by \(\widetilde{\sigma}^{2} = \sigma^{2} \left(\delta + M^{-0.5} \right)^{2}\).
	        % \end{restatable}
	        % \begin{proof}
	        %     Please refer to \cref{sec:boundedness_smoothness_loss} for the detailed proof.
	        % \end{proof}

            Given the above assumptions and lemmas, the convergence of TOW can be shown in Theorem~\ref{theorem:convergence_tow}. \revise{We also provide some examples of loss functions and analyse whether they satisfy \cref{assumption:loss_boundedness_lipschitz,assumption:loss_smoothness,assumption:hessian_loss_lipschitz} in \cref{sec:examples_loss}.} %Further details on the proof is referred to \appendixname~\ref{sec:tow_convergence}.

            \begin{restatable}[]{theorem}{TheoremConvergenceTOW}
                \label{theorem:convergence_tow}
                If \cref{assumption:loss_boundedness_lipschitz,assumption:loss_smoothness,assumption:hessian_loss_lipschitz} hold, the learning rate \(\alpha < \nicefrac{2}{\widetilde{S} \left( \delta \sqrt{M} + 1 \right)}\), and \(\mathbf{z}\) is randomly sampled from \(\{\mathbf{x}_{t}\}_{t=1}^{T_{\mathrm{iter}}}\) returned by Algorithm~\ref{algm:task_weighting_ilqr}, then:
                \begin{equation*}
                    \mathbb{E}_{\mathbf{z} \sim \{\mathbf{x}_{t}\}_{t=1}^{T_{\mathrm{iter}}}} \left[ \mathbb{E}_{\mathcal{D}_{1:M}^{(q) \, t}} \left[ \left\Vert \grad_{\mathbf{z}} \mathbf{u}_{t}^{\top} \bar{\pmb{\ell}}_{1:M} \left( \mathbf{z} \right) \right\Vert^{2} \right] \right] \le \epsilon_{0} + \frac{\kappa}{T_{\mathrm{iter}}},
                \end{equation*}
                where:
                \begin{align}
                    \epsilon_{0} & = \frac{4\delta B \sqrt{M} + \alpha^{2} \widetilde{\sigma}^{2} \widetilde{S} \left( \delta \sqrt{M} + 1 \right)}{\alpha \left[2 - \alpha \widetilde{S} \left( \delta \sqrt{M} + 1 \right) \right]} > 0 \label{eq:epsilon0}\\
                    \kappa & = \frac{2 \mathbf{u}_{1}^{\top} \bar{\pmb{\ell}}_{1:M}\left( \mathbf{x}_{1} \right)}{\alpha \left[2 - \alpha \widetilde{S} \left( \delta \sqrt{M} + 1 \right) \right]},
                \end{align}
                with \(T_{\mathrm{iter}}\) as the number of gradient-update for the meta-parameter (or the number of mini-batches of tasks used), and \(\mathbb{E}_{\mathcal{D}_{1:M}^{(q) \, t}}\) as the expectation taken over all data sampled from \(t\) mini-batches \(\{\mathcal{D}_{i}^{(q)}\}_{i=1}^{t}\), each \(\mathcal{D}_{i}^{(q)}\) has \(M\) tasks.
            \end{restatable}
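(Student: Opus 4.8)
The plan is to run the standard non-convex stochastic first-order descent argument, but applied to the \emph{time-varying} objective \(\Phi_{t}(\mathbf{x}) := \mathbf{u}_{t}^{\top}\bar{\pmb{\ell}}_{1:M}(\mathbf{x})\), and to push the variation of the weights \(\mathbf{u}_{t}\) between consecutive steps into the additive constant \(\epsilon_{0}\). Three ingredients are already in hand. First, an effective smoothness constant for \(\Phi_{t}\): since each \(\bar{\pmb{\ell}}_{i}\) is \(\widetilde{S}\)-smooth by \cref{lemma:smoothness}, the gradient \(\grad_{\mathbf{x}}\mathbf{u}_{t}^{\top}\bar{\pmb{\ell}}_{1:M}(\mathbf{x}) = \sum_{i=1}^{M}\mathbf{u}_{ti}\,\grad_{\mathbf{x}}\bar{\pmb{\ell}}_{i}(\mathbf{x})\) is \(\widetilde{S}\norm{\mathbf{u}_{t}}_{1}\)-Lipschitz in \(\mathbf{x}\), and \cref{lemma:bound_norm_delta_u} together with the uniform nominal \(\hat{\mathbf{u}}_{t} = M^{-1}\pmb{1}_{M}\) gives \(\norm{\mathbf{u}_{t}}_{1}\le\sqrt{M}\,\norm{\mathbf{u}_{t}}\le\sqrt{M}\,(\delta + M^{-1/2}) = \delta\sqrt{M}+1\); this is exactly where the factor \(\delta\sqrt{M}+1\) — and the step-size threshold \(\alpha < 2/(\widetilde{S}(\delta\sqrt{M}+1))\) — come from. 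Second, the bound \(\widetilde{\sigma}^{2}\) on the variance of the stochastic weighted gradient \(\grad_{\mathbf{x}}\mathbf{u}_{t}^{\top}\pmb{\ell}(\mathbf{x}_{t})\) from \cref{lemma:bounded_variance_weighted_loss}. Third, the pointwise bound \(\norm{\bar{\pmb{\ell}}_{1:M}(\mathbf{x})}\le B\sqrt{M}\), a consequence of \(\abs{\bar{\pmb{\ell}}_{i}}\le B\) under \cref{assumption:loss_boundedness_lipschitz}.

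First I would write the one-step descent inequality. Because \(\Phi_{t}\) has \(\widetilde{S}(\delta\sqrt{M}+1)\)-Lipschitz gradient and the meta-update is \(\mathbf{x}_{t+1} = \mathbf{x}_{t} - \alpha\,\grad_{\mathbf{x}}\mathbf{u}_{t}^{\top}\pmb{\ell}(\mathbf{x}_{t})\), the quadratic upper bound yields \(\Phi_{t}(\mathbf{x}_{t+1}) \le \Phi_{t}(\mathbf{x}_{t}) - \alpha\langle\grad\Phi_{t}(\mathbf{x}_{t}), \grad_{\mathbf{x}}\mathbf{u}_{t}^{\top}\pmb{\ell}(\mathbf{x}_{t})\rangle + \tfrac{1}{2}\alpha^{2}\widetilde{S}(\delta\sqrt{M}+1)\norm{\grad_{\mathbf{x}}\mathbf{u}_{t}^{\top}\pmb{\ell}(\mathbf{x}_{t})}^{2}\). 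Taking the conditional expectation over the \(t\)-th mini-batch, the stochastic weighted gradient is an unbiased estimate of \(\grad\Phi_{t}(\mathbf{x}_{t})\) with second moment at most \(\norm{\grad\Phi_{t}(\mathbf{x}_{t})}^{2}+\widetilde{\sigma}^{2}\) (using \cref{lemma:bounded_variance_weighted_loss}), so rearranging gives \(\alpha\bigl[1 - \tfrac{1}{2}\alpha\widetilde{S}(\delta\sqrt{M}+1)\bigr]\mathbb{E}\norm{\grad\Phi_{t}(\mathbf{x}_{t})}^{2} \le \mathbb{E}[\Phi_{t}(\mathbf{x}_{t})] - \mathbb{E}[\Phi_{t}(\mathbf{x}_{t+1})] + \tfrac{1}{2}\alpha^{2}\widetilde{S}(\delta\sqrt{M}+1)\widetilde{\sigma}^{2}\); the hypothesis on \(\alpha\) keeps the bracket strictly positive.

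The one non-standard step is the telescoping, since the objective itself changes between \(t\) and \(t+1\). I would write \(\Phi_{t}(\mathbf{x}_{t}) - \Phi_{t}(\mathbf{x}_{t+1}) = \bigl[\Phi_{t}(\mathbf{x}_{t}) - \Phi_{t+1}(\mathbf{x}_{t+1})\bigr] + (\mathbf{u}_{t+1}-\mathbf{u}_{t})^{\top}\bar{\pmb{\ell}}_{1:M}(\mathbf{x}_{t+1})\) and bound the mismatch term by \(\norm{\mathbf{u}_{t+1}-\mathbf{u}_{t}}\,\norm{\bar{\pmb{\ell}}_{1:M}(\mathbf{x}_{t+1})} \le 2\delta\cdot B\sqrt{M}\), using \(\norm{\mathbf{u}_{t+1}-\mathbf{u}_{t}}\le\norm{\mathbf{u}_{t+1}-\hat{\mathbf{u}}_{t+1}}+\norm{\hat{\mathbf{u}}_{t}-\mathbf{u}_{t}}\le 2\delta\) (the nominal is the same uniform vector at every step). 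Summing over \(t = 1,\dots,T_{\mathrm{iter}}\): the matched differences telescope to \(\Phi_{1}(\mathbf{x}_{1}) - \Phi_{T_{\mathrm{iter}}+1}(\mathbf{x}_{T_{\mathrm{iter}}+1}) \le \mathbf{u}_{1}^{\top}\bar{\pmb{\ell}}_{1:M}(\mathbf{x}_{1})\) (non-negativity of \(\ell\) and of \(\mathbf{u}\)), the mismatch terms contribute at most \(2\delta B\sqrt{M}\,T_{\mathrm{iter}}\), and the variance terms contribute \(\tfrac{1}{2}\alpha^{2}\widetilde{S}(\delta\sqrt{M}+1)\widetilde{\sigma}^{2}\,T_{\mathrm{iter}}\). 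Dividing by \(\alpha\bigl[1-\tfrac{1}{2}\alpha\widetilde{S}(\delta\sqrt{M}+1)\bigr]T_{\mathrm{iter}} = \tfrac{1}{2}\alpha\bigl[2-\alpha\widetilde{S}(\delta\sqrt{M}+1)\bigr]T_{\mathrm{iter}}\) and recognising that \(\frac{1}{T_{\mathrm{iter}}}\sum_{t}\mathbb{E}\norm{\grad\Phi_{t}(\mathbf{x}_{t})}^{2}\) is precisely the left-hand side of the theorem (outer expectation over \(\mathbf{z}\) uniform on \(\{\mathbf{x}_{t}\}_{t=1}^{T_{\mathrm{iter}}}\), inner over the \(t\) mini-batches), the \(T_{\mathrm{iter}}\)-free part collapses to \(\epsilon_{0}\) in \eqref{eq:epsilon0} and the \(\Phi_{1}(\mathbf{x}_{1})\) term becomes \(\kappa/T_{\mathrm{iter}}\).

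The main obstacle is the statistical coupling between the iLQR-selected weights \(\mathbf{u}_{t}\) and the mini-batch data that simultaneously enters \(\pmb{\ell}(\mathbf{x}_{t})\) and the iLQR pass: the conditional-unbiasedness used above, \(\mathbb{E}[\grad_{\mathbf{x}}\mathbf{u}_{t}^{\top}\pmb{\ell}(\mathbf{x}_{t})\mid\mathbf{x}_{t},\mathbf{u}_{t}] = \grad\Phi_{t}(\mathbf{x}_{t})\), has to be justified — e.g.\ by drawing fresh data for the meta-update or by absorbing the residual bias into \(\widetilde{\sigma}\) — and the MAML adaptation map \(\phi_{i}(\mathbf{x})\) has to be propagated through both the smoothness and the variance estimates, which is exactly the content of \cref{lemma:smoothness,lemma:bounded_variance_weighted_loss}. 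A minor but essential check is that the Lipschitz constant of \(\grad\Phi_{t}\) is controlled by \(\widetilde{S}\norm{\mathbf{u}_{t}}_{1}\) rather than \(\widetilde{S}\norm{\mathbf{u}_{t}}\), since this is what carries the factor \(\delta\sqrt{M}+1\) through \(\epsilon_{0}\), \(\kappa\) and the learning-rate condition.
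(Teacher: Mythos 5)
Your proposal is correct and follows essentially the same route as the paper's proof: the weighted one-step descent inequality with effective smoothness constant \(\widetilde{S}\,\pmb{1}_{M}^{\top}\mathbf{u}_{t}\le\widetilde{S}(\delta\sqrt{M}+1)\), the variance bound from \cref{lemma:bounded_variance_weighted_loss}, the decomposition of the non-telescoping difference into a telescoping part plus a mismatch term \((\mathbf{u}_{t+1}-\mathbf{u}_{t})^{\top}\bar{\pmb{\ell}}(\mathbf{x}_{t+1})\le 2\delta B\sqrt{M}\), and the SVRG-style randomly selected iterate. Your explicit flag of the statistical coupling between the iLQR-chosen \(\mathbf{u}_{t}\) and the mini-batch data entering \(\pmb{\ell}(\mathbf{x}_{t})\) is a fair point that the paper's proof passes over silently when it treats the weighted stochastic gradient as conditionally unbiased.
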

            \begin{proof}
                Please refer to \cref{sec:tow_convergence_proof} for the detailed proof.
            \end{proof}
            
            \cref{theorem:convergence_tow} shows that the expectation of squared gradient norm of the weighted validation loss is upper-bounded by a monotonically reducing function w.r.t. the number of iterations \(T_{\mathrm{iter}}\). This implies that \cref{algm:task_weighting_ilqr} converges in expectation to an \(\epsilon_{0}\)-stationary point.
            %Note that the learning rate \(\alpha\) and \(\delta\) can be selected and tuned by increasing \(\beta_{u}\) to reduce further the bound on the gradient norm.
            
            \begin{remark}
                The result in \cref{theorem:convergence_tow} agrees with some previous works on task-weighting for meta-learning, e.g. \cite[Ineq. (75)]{collins2020task} where the gradient norm is bounded above by some positive constant. The tightness of the bound in \cref{theorem:convergence_tow} mostly depends on how small the value of \(\epsilon_{0}\) is. In fact, we can observe that \(\lim_{\delta \to 0} \epsilon_{0} = 0\). Thus, to ensure that \(\epsilon_{0}\) is small, \(\delta\) needs to be small. We can make \(\delta\) small by imposing a strong prior of \(\mathbf{u}\) by setting a large value of \(\beta_{u}\) in \cref{eq:cost_function}, e.g. \(\beta_{u} = 10\) in our experiments presented in \cref{sec:experiment}. In addition, we integrate the backtracking line search in \cref{algm:task_weighting_ilqr} to force \(\mathbf{u}\) to stay close to the uniform weighting \(\hat{\mathbf{u}}\), making \(\delta\) very small. Another factor contributes to the small value of \(\epsilon_{0}\) is the inverse of the number of tasks in a mini-batch, \(\nicefrac{1}{M}\), as seen in~\eqref{eq:epsilon0}. In practice, \(M\) cannot be too large and often in the range of 5 to 10. And since we can impose constraints to make \(\delta\) tiny, we can guarantee to obtain a small \(\epsilon_{0}\) to make the bound in \cref{theorem:convergence_tow} tight.
            \end{remark}
    \section{Related work}
\label{sec:related_work}

    Our work directly relates to re-weighting tasks in meta-learning. One notable recent work is TR-MAML~\citep{collins2020task} which places higher weights on tasks with larger validation losses to optimise performance for worst-case scenarios. However, when the number of training tasks is very large, e.g. there will be \(\binom{1000}{5} \approx 8.25 \times 10^{12}\) 5-way classification tasks formed from 1000 characters in Omniglot dataset~\citep{lake2015human}, learning weight for each training task is intractable. TR-MAML circumvents such issue by clustering tasks into a small number of clusters based on some ad-hoc intuition and learn the weight for each cluster. This, however, reduces the practicability of TR-MAML. Another work, \(\alpha\)-MAML~\citep{cai2020weighted}, provides an upper-bound on the distance between the weighted risk evaluated on training tasks to the expected risk on testing tasks. The re-weight factors can then be obtained to minimise that upper-bound, reducing the variance between training and testing tasks. In reinforcement learning (RL), MWL-MAML~\citep{xu2021meta} is recently proposed to employ meta-learning to learn the local optimal re-weight factor of each trajectory using a few gradient descent steps. The downside of MWL-MAML is the need of validation trajectories (or validation tasks in meta-learning) that are representative enough to learn those weights. Furthermore, TR-MAML, \(\alpha\)-MAML and MWL-MAML rely on a single mini-batch of tasks to determine the weights without considering the effect of sequence of mini-batches when training a meta-model, potentially rendering sub-optimal solutions. In contrast, our proposed method does not need to cluster tasks nor require additional set of validation tasks. In addition, our proposed method automates the calculation of task-weighting through an optimisation over a sequence of mini-batches, allowing to obtain better local-optimal solutions outside of a single mini-batch of tasks. \revise{There are also other studies about task balancing, such as Learn to Balance (L2B)~\citep{lee2020learning}. However, L2B introduces additional parameters in the task adaptation step (inner-loop), while our method explicitly introduces a weighting vector at the meta-parameter update step (outer-loop).}

    Our work is also similar to task-weighting in multi-task learning~\citep{chen2018gradnorm, sener2018multi, guo2018dynamic, liu2021towards} where the goal is to obtain an optimal re-weighting vector \(\mathbf{u}\) for all tasks. Such modelling can, therefore, work well with a small number of tasks, but potentially fall short when the number of tasks is very large, e.g. in the magnitude of \(10^{12}\) training tasks for 5-way Omniglot classification, due to the poor scalability of the computational and storage complexities of that modelling. In comparison, our proposed approach does not explicitly learn the weighting vector for all training tasks, but determines the weighting vector for tasks in current and some following mini-batches via a trajectory optimisation technique. In a loose sense, the multi-task learning approaches can be considered as an analogy to a \say{batch} learning setting w.r.t. the weighting vector \(\mathbf{u}\), while ours is analogous to an \say{online} learning setting which can scale well to the number of training tasks. \revise{At the time of writing this paper, we were not aware of a concurrent work -- Auto-Lambda~\citep{liu2022autolambda} -- that is designed to use meta-learning to learn how to weight tasks in the multi-task setting. Auto-Lambda is similar to TR-MAML, which is designed for a fixed number of tasks in the multi-task learning, while being intractable when the number of tasks is large. Furthermore, Auto-Lambda is also similar to MWL-MAML since Auto-Lambda employs validation subsets of training tasks to meta-learn the weighting of tasks.}
    % and sample-weighting in single-task setting~\citep{zhao2015stochastic, wang2017robust, ren2018learning, katharopoulos2018not}. 

    This paper is motivated from the observation of large variation in terms of prediction performance made by meta-learning algorithms on various testing tasks~\citep[\figureautorefname~1]{dhillon2019baseline}, implying that the trained meta-model may be biased toward certain training tasks. Such observation may be rooted in task relatedness or task similarity which is a growing research topic in the field of transfer learning. Existing works include task-clustering using k-nearest neighbours~\citep{thrun1996discovering} or using convex optimisation~\citep{jacob2009clustered}, learning task relationship through task covariance matrices~\citep{zhang2012convex}, or theoretical guarantees to learn similarity between tasks~\citep{shui2019a}. Recently, a large-scale empirical study, known as Taskonomy~\citep{zamir2018taskonomy}, investigated the relationship between 26 computer vision tasks. Another promising direction to quantify task similarity is to employ task representation, notably Task2Vec~\citep{achille2019task2vec}, which is based on Fisher Information matrix to embed tasks into a latent space. One commonality among those studies is that learning from certain training tasks may be beneficial to generalise to unseen tasks. This suggests the design of a mechanism to re-weight the contribution of each training task to improve the performance of the meta-model of interest.

    Furthermore, our work is  related to finite-horizon discrete-time trajectory optimisation or open-loop optimal control which has been well studied in the field of control and robotics. The objective is to minimise a cost function that depends on the states and actions in many consecutive time steps given the state-transition dynamics. Exact solution can be obtained for the simplest problem where the cost is quadratic and the dynamics is linear using linear quadratic regulator~\citep{anderson2007optimal}. For a general non-linear problem, approximate solutions can be found via iterative approaches, such as differential dynamic programming~\citep{jacobson1970differential, murray1984differential, yakowitz1984computational} and iterative LQR (iLQR)~\citep{todorov2005generalized, tassa2012synthesis}.
    % Our work is relatively connected to example weighting which has been thoroughly investigated in the literature. The most popular approach is to select the most informative or hard examples which have large losses~\citep{loshchilov2016online} or gradient norms~\citep{zhao2015stochastic,katharopoulos2018not} to prioritise for learning. However, relying solely on informative or hard examples may be sub-optimal when there is the presence of noise or outliers in the data generation process. In self-pace learning~\citep{kumar2010self} and curriculum learning~\citep{bengio2009curriculum}, it is beneficial to start with easier examples. In \citep{wang2017robust}, the weights of examples are incorporated into a Bayesian model as latent variables to produce a robust weighting solution. In addition, one can employ a validation set as in meta-learning to learn those weights~\citep{ren2018learning}. Although this automates the design of example weighting, the trade-off is the need of a representative validation set of examples.
    \section{Experiments}
\label{sec:experiment}

    \subsection{N-way k-shot classification}
    \label{sec:nway_kshot_classification}
        In this section, we empirically compare the performance of the proposed trajectory optimisation task weighting (TOW) approach with three baselines: one with uniform weighting, denoted as \emph{uniform}, one with higher weights on difficult tasks (or tasks with higher losses), denoted as \emph{exploration}, and the other one with higher weights on easier tasks (or tasks with lower losses), denoted as \emph{exploitation}. The experiments are based on \(n\)-way \(k\)-shot classification setting used in few-shot learning with tasks formed from Omniglot~\citep{lake2015human} and mini-ImageNet~\citep{vinyals2016matching} -- the two most widely used datasets to evaluate the performance of meta-learning algorithms.

        \begin{figure*}[t!]
            \centering
            \begin{subfigure}{0.4 \linewidth}
                \centering
                \begin{tikzpicture}
                    \pgfplotstableread[col sep=comma, header=true]{results/maml_omniglot_CNN_val_accuracy.csv} \myTable
    
                    \begin{axis}[
                        height = 0.5\linewidth,
                        width = 0.67\linewidth,
                        xlabel={\textnumero~of iterations (\(\times\)1,000)},
                        xlabel style={font=\small},
                        xticklabel style = {font=\small},
                        ylabel={Accuracy (\%)},
                        ylabel style={font=\small, yshift=-0.5em},
                        yticklabel style = {font=\small},
                        ymin=80,
                        ymax=98,
                        legend entries={uniform, exploration, exploitation, TOW},
                        legend style={draw=none, font=\footnotesize},
                        legend image post style={scale=0.5},
                        legend cell align={left},
                        legend pos=south east,
                        % restrict x to domain=0:300,
                        scale only axis
                    ]
                        % \addplot[mark=none, MidnightBlue, very thick] table[x expr=0.05 * (\coordindex + 1), y={Testing_accuracy_uniform}]{\myTable};
                        % \addplot[mark=none, BurntOrange, very thick] table[x expr=0.05 * (\coordindex + 1), y={Testing_accuracy_exploration}]{\myTable};
                        % \addplot[mark=none, ForestGreen, very thick] table[x expr=0.05 * (\coordindex + 1), y={Testing_accuracy_exploitation}]{\myTable};
                        % \addplot[mark=none, BrickRed, solid, very thick] table[x expr=0.05 * (\coordindex + 1), y={Testing_accuracy_tow}]{\myTable};
                        \addplot[mark=none, MidnightBlue, very thick] table[x expr=0.05 * (\coordindex + 1), y={Testing_accuracy_uniform}]{\myTable};
                        \addplot[mark=none, BurntOrange, very thick] table[x expr=0.05 * (\coordindex + 1), y={Testing_accuracy_exploration}]{\myTable};
                        \addplot[mark=none, ForestGreen, very thick] table[x expr=0.05 * (\coordindex + 1), y={Testing_accuracy_exploitation}]{\myTable};
                        \addplot[mark=none, BrickRed, solid, very thick] table[x expr=0.05 * (\coordindex + 1), y={Testing_accuracy_tow}]{\myTable};
                    \end{axis}
                \end{tikzpicture}
                \caption{MAML on Omniglot}
                \label{fig:maml_omniglot_val_accuracy}
            \end{subfigure}
            % \hspace{3em}
            % \hfill
            \begin{subfigure}{0.4 \linewidth}
                \centering
                \begin{tikzpicture}
                    \pgfplotstableread[col sep=comma, header=true]{results/protonet_omniglot_CNN_val_accuracy.csv} \myTable
    
                    \begin{axis}[
                        height = 0.5\linewidth,
                        width = 0.67\linewidth,
                        xlabel={\textnumero~of iterations (\(\times\)1,000)},
                        xlabel style={font=\small},
                        xticklabel style = {font=\small},
                        ylabel={Accuracy (\%)},
                        ylabel style={font=\small, yshift=-0.5em},
                        yticklabel style = {font=\small},
                        ymin=90,
                        ymax=98.5,
                        scale only axis
                    ]
                        \addplot[mark=none, MidnightBlue, very thick] table[x expr=0.05 * (\coordindex + 1), y={Testing_accuracy_uniform}]{\myTable};
                        \addplot[mark=none, BurntOrange, very thick] table[x expr=0.05 * (\coordindex + 1), y={Testing_accuracy_exploration}]{\myTable};
                        \addplot[mark=none, ForestGreen, very thick] table[x expr=0.05 * (\coordindex + 1), y={Testing_accuracy_exploitation}]{\myTable};
                        \addplot[mark=none, BrickRed, solid, very thick] table[x expr=0.05 * (\coordindex + 1), y={Testing_accuracy_tow}]{\myTable};
                    \end{axis}
                \end{tikzpicture}
                \caption{Protonet on Omniglot}
                \label{fig:protonet_omniglot_val_accuracy}
            \end{subfigure}
    
            \vspace{1.5em}
            \centering
            \hspace{-4em}
            \begin{subfigure}[t]{0.4 \linewidth}
                \centering
                \begin{tikzpicture}
                    \pgfplotstableread[col sep=comma, header=true]{results/maml_miniImageNet_CNN_val_accuracy.csv} \myTable
    
                    \begin{axis}[
                        height = 0.5\linewidth,
                        width = 0.67\linewidth,
                        xlabel={\textnumero~of iterations (\(\times\)1,000)},
                        xlabel style={font=\small},
                        xticklabel style = {font=\small},
                        restrict x to domain=0:43,
                        ylabel={Accuracy (\%)},
                        ylabel style={font=\small, yshift=-0.5em},
                        yticklabel style = {font=\small},
                        ymin=35,
                        ymax=45,
                        % restrict x to domain=0:300,
                        scale only axis
                    ]
                        \addplot[mark=none, MidnightBlue, very thick] table[x expr=0.1 * 2 * (\coordindex + 1), y={Testing_accuracy_uniform}]{\myTable};
                        \addplot[mark=none, BurntOrange, very thick] table[x expr=0.1 * 2 * (\coordindex + 1), y={Testing_accuracy_exploration}]{\myTable};
                        \addplot[mark=none, ForestGreen, very thick] table[x expr=0.1 * 2 * (\coordindex + 1), y={Testing_accuracy_exploitation}]{\myTable};
                        \addplot[mark=none, BrickRed, solid, very thick] table[x expr=0.1 * 2 * (\coordindex + 1), y={Testing_accuracy_tow}]{\myTable};
                    \end{axis}
                \end{tikzpicture}
                \caption{MAML on mini-ImageNet}
                \label{fig:maml_miniimagenet_val_accuracy}
            \end{subfigure}
            \hspace{-5.25em}
            % \hfill
            \begin{subfigure}[t]{0.4 \linewidth}
                \centering
                \begin{tikzpicture}
                    \pgfplotstableread[col sep=comma, header=true]{results/protonet_miniImageNet_CNN_val_accuracy.csv} \myTable
    
                    \begin{axis}[
                        height = 0.5\linewidth,
                        width = 0.67\linewidth,
                        xlabel={\textnumero~of iterations (\(\times\)1,000)},
                        xlabel style={font=\small},
                        xticklabel style = {font=\small},
                        % xmin=2,
                        % restrict x to domain=0:15,
                        % xtick={15, 30, 45},
                        % ylabel={Accuracy (\%)},
                        % ylabel style={font=\small, yshift=-0.5em},
                        yticklabel style = {font=\small},
                        ymin=35,
                        ymax=45,
                        yticklabels={,,},
                        scale only axis
                    ]
                        \addplot[mark=none, MidnightBlue, very thick] table[x expr=0.1 * 2 * (\coordindex + 1), y={Testing_accuracy_uniform}]{\myTable};
                        \addplot[mark=none, BurntOrange, very thick] table[x expr=0.1 * 2 * (\coordindex + 1), y={Testing_accuracy_exploration}]{\myTable};
                        \addplot[mark=none, ForestGreen, very thick] table[x expr=0.1 * 2 * (\coordindex + 1), y={Testing_accuracy_exploitation}]{\myTable};
                        \addplot[mark=none, BrickRed, solid, very thick] table[x expr=0.1 * 2 * (\coordindex + 1), y={Testing_accuracy_tow}]{\myTable};
                    \end{axis}
                \end{tikzpicture}
                \caption{Protonet on mini-ImageNet}
                \label{fig:protonet_miniimagenet_val_accuracy}
            \end{subfigure}
            % \vspace{1.5em}
            \hspace{-6.25em}
            \begin{subfigure}[t]{0.4 \linewidth}
                \centering
                \begin{tikzpicture}
                    \pgfplotstableread[col sep=comma, header=true]{results/maml_miniImageNet_ResNet10_val_accuracy.csv} \myTable
    
                    \begin{axis}[
                        height = 0.5\linewidth,
                        width = 0.67\linewidth,
                        xlabel={\textnumero~of iterations (\(\times\)1,000)},
                        xlabel style={font=\small},
                        xticklabel style = {font=\small},
                        % xmin=2,
                        % restrict x to domain=0:15,
                        % xtick={15, 30, 45},
                        % ylabel={Accuracy (\%)},
                        % ylabel style={font=\small, yshift=-0.5em},
                        yticklabel style = {font=\small},
                        ymin=35,
                        ymax=45,
                        yticklabels={,,},
                        scale only axis
                    ]
                        \addplot[mark=none, MidnightBlue, very thick] table[x expr=0.1 * 2 * (\coordindex + 1), y={Testing_accuracy_uniform}]{\myTable};
                        \addplot[mark=none, BurntOrange, very thick] table[x expr=0.1 * 2 * (\coordindex + 1), y={Testing_accuracy_exploration}]{\myTable};
                        \addplot[mark=none, ForestGreen, very thick] table[x expr=0.1 * 2 * (\coordindex + 1), y={Testing_accuracy_exploitation}]{\myTable};
                        \addplot[mark=none, BrickRed, solid, very thick] table[x expr=0.1 * 2 * (\coordindex + 1), y={Testing_accuracy_tow}]{\myTable};
                    \end{axis}
                \end{tikzpicture}
                \caption{MAML with Resnet-10}
                \label{fig:maml_resnet10_miniImageNet_val_accuracy}
            \end{subfigure}
            \caption{Validation accuracy exponential moving average (with smoothing factor 0.1) of different task-weighting strategies evaluated on: \protect\subref{fig:maml_omniglot_val_accuracy} and \subref{fig:protonet_omniglot_val_accuracy}
            Omniglot, and \subref{fig:maml_miniimagenet_val_accuracy}, \subref{fig:protonet_miniimagenet_val_accuracy} and \subref{fig:maml_resnet10_miniImageNet_val_accuracy} mini-ImageNet.
            % The column plots show testing accuracy on: \subref{fig:omniglot_testing_accuracy} Omniglot and \subref{fig:miniimagenet_testing_accuracy} mini-ImageNet.
            }
            \label{fig:val_test_accuracy}
        \end{figure*}
    
        Naively implementing the two baselines, \emph{exploration} and \emph{exploitation}, will easily lead to trivial solutions where only the task with largest or smallest loss within a mini-batch is selected. Thus, only one task in each mini-batch is used for learning, and consequently, making the learning noisy and unstable. We, therefore, introduce a prior, denoted as \(p(\mathbf{u})\), as a regularisation to prevent many tasks within the same mini-batch from being discarded. The objective to determine the weights for these two baselines can be written as follows:
        \begin{equation}
            \mathbf{u}^{*} = \begin{dcases}
            \arg\min_{\mathbf{u}} -\mathbf{u}^{\top} \pmb{\ell}(\mathbf{x}) - \ln p(\mathbf{u}) & \text{for \emph{exploration}}\\
            \arg\min_{\mathbf{u}} \mathbf{u}^{\top} \pmb{\ell}(\mathbf{x}) - \ln p(\mathbf{u}) & \text{for \emph{exploitation}}.
            \end{dcases}
            \label{eq:baseline_exploration}
        \end{equation}
        
        In general, the prior \(p(\mathbf{u})\) can be any distribution that has support in \((0, +\infty)\) such as Beta, Gamma or Cauchy distribution. For simplicity, \(p(\mathbf{u})\) is selected as a Dirichlet distribution with a concentration \(\kappa > 1\) to constrain the weight vector within a probability simplex. One can then use a non-linear optimisation solver to solve \eqref{eq:baseline_exploration} to obtain an optimal \(\mathbf{u}^{*}\) for one of the two baselines. In the implementation, we use Sequential Least SQuares Programming (SLSQP) to obtain \(\mathbf{u}^{*}\). Note that the definition of the \emph{exploration} baseline above resembles TR-MAML~\citep{collins2020task}, but is applicable for common few-shot learning benchmarks where the number of tasks is large. Similarly, the \emph{exploitation} is an analogy to robust Bayesian data re-weighting~\citep{wang2017robust} or \emph{curriculum learning} in single-task learning.
    
        For Omniglot dataset, there are a total of 1,623 different handwritten characters from 50 different alphabets. Each of characters was drawn online via Amazon's Mechanical Turk by 20 different people~\citep{lake2015human}. Conventionally, 1,000 randomly-sampled characters are used for training and the remaining is used to testing. Such train-test split might, however, be inadequate since the characters in one alphabet can be present in both training and testing sets, and learning a character in that alphabet might help to classify easily from that same alphabet. To make the classification more challenging, we follow the original train-test split~\citep{lake2015human} by using 30 alphabets for training and 20 alphabets for testing. We also utilise the hierarchy structure of alphabet-character to form finer-grained classification tasks to make the classification more difficult than the random train-test split. Furthermore, we follow the convention from previous work to re-size all those gray images to 28-by-28 pixel\textsuperscript{2} to have a fair evaluation.

        \begin{table}[t]
            \centering
            \caption{The classification accuracy averaged on 1,000 random testing tasks generated from Omniglot and 600 tasks from mini-ImageNet with 95 percent confident interval; the bold numbers denote their statistically differences from the ones in the same column.}
            \label{tab:5way_1shot}
            \begin{tabular}{l l l l l}
                \toprule
                \multirow{2}{*}{\bfseries } & \multirow{2}{5em}{\bfseries Weighting method} & \multirow{2}{*}{\bfseries Omniglot} & \multicolumn{2}{c}{\bfseries Mini-ImageNet} \\
                \cmidrule{4-5}
                & & & \bfseries 4 layer CNN & \bfseries Resnet-10 \\
                \midrule
                \multirow{5}{*}{MAML} & Uniform & 94.86 \textpm 0.43 & 48.70 \textpm 1.84\tablefootnote{reported in \citep{finn2017model}} & 49.12 \textpm 0.76 \\
                & Exploration & 92.64 \textpm 0.52 & 48.80 \textpm 0.72 & 48.72 \textpm 0.74 \\
                & Exploitation & 95.34 \textpm 0.42 & 49.22 \textpm 0.74 & 48.44 \textpm 0.76 \\
                \rowcolor{Gray!25} \cellcolor{White} & TOW & 95.94 \textpm 0.40 & 51.55 \textpm 0.75 & \textbf{52.32 \textpm 0.80} \\
                \midrule
                \multirow{5}{*}{Protonet} & Uniform & 95.21 \textpm 0.37 & 49.42 \textpm 0.78\tablefootnote{reported in \citep{snell2017prototypical}} & \_ \\
                & Exploration & 94.57 \textpm 0.38 & 48.56 \textpm 0.77 & \_ \\
                & Exploitation & 95.78 \textpm 0.40 & 48.39 \textpm 0.79 & \_ \\
                \rowcolor{Gray!25} \cellcolor{White} & TOW & \textbf{96.84 \textpm 0.37} & \textbf{51.05 \textpm 0.80} & \_ \\
                \bottomrule
            \end{tabular}
        \end{table}
    
        For mini-ImageNet, the dataset consists of 100 classes, each class has 600 colour images sampled from 1,000 classes of the ImageNet dataset~\citep{deng2009imagenet}. We follow the standard train-test split that uses 64 classes for training, 16 classes for validation and 20 for testing~\citep{ravi2017optimization} in our evaluation. To be consistent with previous work, we pre-process all images by resizing them to 84-by-84 pixels\textsuperscript{2} before carrying out any training or testing.
        
        The base model used across the experiments is the 4 CNN module network that is widely used in few-shot image classification~\citep{vinyals2016matching, finn2017model}. Each module of the base network consists of 32 filters with 3-by-3 kernel, followed by a batch normalisation, activated by the Rectified Linear Unit (ReLU) and pooled by a 2-by-2 max-pooling layer. The output of the last CNN module is flattened before performing classification. Two common meta-learning algorithms considered in this section include MAML~\citep{finn2017model} and Prototypical Networks~\citep{snell2017prototypical}. In MAML, the flattened features are passed to a linear fully-connected layer to classify, while in Prototypical Networks, the classification is based on Euclidean distances to the prototypes of each class.
    
        For all experiments, the learning rate \(\gamma\) of task adaptation (also known as inner-loop) shown in Eq.~\eqref{eq:task_adaptation_gd} is 0.1 for Omniglot and 0.01 for mini-ImageNet with 5 gradient updates. The learning rate for the meta-parameters, \(\alpha\), is set at \(10^{-4}\) for all the setting. The mini-batch size is \(M = 10\) tasks for Omniglot and \(M = 5\) tasks for mini-ImageNet. For the Dirichlet concentration of the prior in the \emph{exploration} and \emph{exploitation} baselines, we try three values of \(\kappa \in \{0.2, 1.2, 5\}\), and found that a too small value of \(\kappa\) leads to noisy learning since only the easiest or hardest task is selected, while too large value of \(\kappa\) makes both the baselines identical to uniform weighting. Hence, we select \(\kappa = 1.2\) that balances between these two strategies. Note that \(\kappa = 1\) results in a random prior, leading to a trivial solution. For the trajectory optimiser iLQR, the state-transition dynamics \(f\) follows the formula of Adam optimiser since Adam provides a less noisy training as in SGD. The nominal trajectory is, as mentioned in~\cref{sec:ml2to}, selected with uniform actions: \(\hat{\mathbf{u}}_{tj} = \nicefrac{1}{M}, \forall j \in \{1, \ldots, M\}, t \in \{1, \ldots, T\}\). The number of iterations used in iLQR is 2 \revise{to speed up the training, although higher number of iterations can be used to achieve better performance by trading off the running time. We also provide an ablation study with two different numbers of iterations in iLQR in \cref{sec:ablation_study}, where the larger number of iterations in iLQR slightly improves the prediction accuracy on the validation set of mini-ImageNet}. The number of time steps (or number of mini-batches) is \(T = 10\) for Omniglot and 5 for mini-ImageNet. The parameters of the prior on the action \(\mathbf{u}_{t}\) are \(\mu_{u} = \nicefrac{1}{M}\) and \(\beta_{u} = 10\). As we do not observe any major difference between different configuration of \(M\) and \(T\) used in this experiment, we report the result for the case \(M = 10\) and \(T = 5\). Each experiment is carried out on a single NVIDIA Tesla V100 GPU with 32 GB memory following the configuration of NVIDIA DGX-1.

        \begin{table}[t]
            \centering
            \caption{Running time (in GPU-hour) of different task-weighting methods based on MAML.}
            \label{tab:running_time}
            \begin{tabular}{l c c c}
                \toprule
                & \multirow{2}{*}{\bfseries Omniglot} & \multicolumn{2}{c}{\bfseries mini-ImageNet}\\
                \cmidrule{3-4}
                & & \bfseries CNN & \bfseries Resnet-10\\
                \midrule
                Exploration & 1.55 & 5.24 & 7.18\\
                Exploitation & 1.55 & 5.24 & 7.18\\
                Uniform & 1.35 & 5.03 & 7.16\\
                \rowcolor{Gray!25} TOW & 7.50 & 38.12 & 67.78\\
                \bottomrule
            \end{tabular}
        \end{table}
    
        \cref{fig:maml_omniglot_val_accuracy,fig:protonet_omniglot_val_accuracy,fig:maml_miniimagenet_val_accuracy,fig:protonet_miniimagenet_val_accuracy} plot the testing accuracy evaluated on 100 validation tasks drawn from Omniglot and mini-ImageNet following the 5-way 1-shot setting. The validation accuracy curves along the training process show that TOW can achieve higher performance comparing to the three baselines on various datasets and meta-learning methods. We also carry out an experiment using Resnet-10~\citep{he2016deep} on mini-ImageNet to demonstrate the scalability of TOW. The results on Resnet-10 in \cref{fig:maml_resnet10_miniImageNet_val_accuracy} shows a a similar observation that TOW out-performs other task-weighting methods. We note that the validation accuracy curves of Resnet-10 fluctuates due to our injected dropout to regularise the network from overfitting since it is known that larger networks, such as Resnet-10 or Resnet-18, severely overfit in the few-shot setting~\citep{nguyen2020uncertainty}. For the evaluation on testing sets, we follow the standard setting in few-shot learning by measuring the prediction accuracy on 1,000 and 600 testing tasks formed from Omniglot and mini-ImageNet, respectively~\citep{vinyals2016matching, finn2017model}. The results in \cref{tab:5way_1shot} show that TOW can be at least 2 percent more accurate than the best baseline among Uniform, Exploration, and Exploitation. Note that there a difference between the results shown in \cref{fig:val_test_accuracy} and \cref{tab:5way_1shot} due to their differences in terms of (i) the tasks form: one from validation set, while the other from testing set, and (ii) the number of tasks evaluated. Despite the promising results, the downside of TOW is the overhead caused by approximating the cost and state-transition dynamics over \(T\) mini-batches of tasks to determine the locally-optimal \(\{\mathbf{u}_{t}^{*}\}_{t = 1}^{T}\). As shown in \cref{tab:running_time}, TOW is about 7 to 9 times slower than the three baselines. We also provide a visualisation of the weights \(\mathbf{u}_{t}\) in \cref{apdx:weight_visualisation}.

    \subsection{Any-shot classification}
    \label{sec:any_shot_classification}
        We also follow the \emph{realistic task distribution}~\citep{lee2020learning} to evaluate further the performance of TOW. The new setting is mostly similar to \(N\)-way \(k\)-shot, except \(k\) is not fixed and might be different for each class within a task. Specifically, with a probability of 0.5, the number of shots for each class is sampled from a uniform distribution: \(k \sim \mathrm{Uniform}(1, 50)\) to simulate class imbalance. With the other 0.5 probability, the same number of shots \(k \sim \mathrm{Uniform}(1, 50)\) is used for all classes within that task. The number of validation (or query) samples is kept at 15 samples per class.
        
        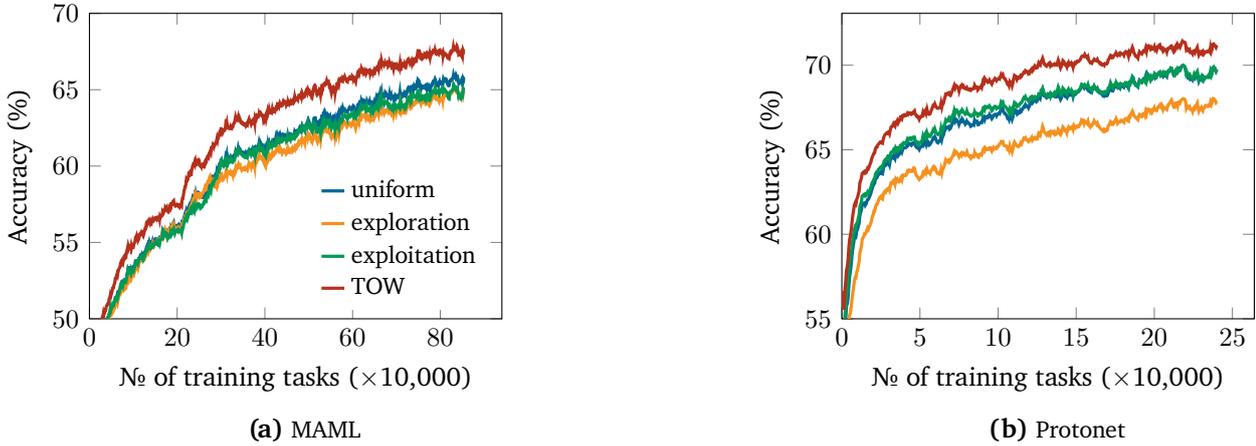
\begin{figure}[t]
            \begin{subfigure}[t]{0.45 \linewidth}
                \begin{tikzpicture}
                    \pgfplotstableread[col sep=comma, header=true]{results/maml_any_shot_val_accuracy.csv} \myTable
    
                    \begin{axis}[
                        height = 0.5\linewidth,
                        width = 0.67\linewidth,
                        xlabel={\textnumero~of training tasks (\(\times\)10,000)},
                        xlabel style={font=\small, yshift=0em},
                        xticklabel style = {font=\small},
                        xmin=0,
                        ylabel={Accuracy (\%)},
                        ylabel style={font=\small, yshift=0em},
                        yticklabel style = {font=\small},
                        ymin=50,
                        ymax=70,
                        % restrict x to domain=0:300,
                        legend entries={uniform, exploration, exploitation, TOW},
                        legend style={draw=none, font=\footnotesize},
                        legend image post style={scale=0.5},
                        legend cell align={left},
                        legend pos=south east,
                        scale only axis
                    ]
                        \addplot[mark=none, MidnightBlue, very thick] table[x expr=0.1 * (\coordindex + 1), y={Testing_accuracy_uniform}]{\myTable};
                        \addplot[mark=none, BurntOrange, very thick] table[x expr=0.1 * (\coordindex + 1), y={Testing_accuracy_exploration}]{\myTable};
                        \addplot[mark=none, ForestGreen, very thick] table[x expr=0.1 * (\coordindex + 1), y={Testing_accuracy_exploitation}]{\myTable};
                        \addplot[mark=none, BrickRed, solid, very thick] table[x expr=0.1 * (\coordindex + 1), y={Testing_accuracy_tow}]{\myTable};
                    \end{axis}
                \end{tikzpicture}
                \caption{MAML}
            \end{subfigure}
            \hfill
            \begin{subfigure}[t]{0.45 \linewidth}
                \begin{tikzpicture}
                    \pgfplotstableread[col sep=comma, header=true]{results/protonet_any_shot_val_accuracy.csv} \myTable
    
                    \begin{axis}[
                        height = 0.5\linewidth,
                        width = 0.67\linewidth,
                        xlabel={\textnumero~of training tasks (\(\times\)10,000)},
                        xlabel style={font=\small, yshift=0em},
                        xticklabel style = {font=\small},
                        xmin=0,
                        ylabel={Accuracy (\%)},
                        ylabel style={font=\small, yshift=0em},
                        yticklabel style = {font=\small},
                        ymin=55,
                        restrict x to domain=0:24,
                        scale only axis
                    ]
                        \addplot[mark=none, MidnightBlue, very thick] table[x expr=0.05 * (\coordindex + 1), y={Testing_accuracy_uniform}]{\myTable};
                        \addplot[mark=none, BurntOrange, very thick] table[x expr=0.05 * (\coordindex + 1), y={Testing_accuracy_exploration}]{\myTable};
                        \addplot[mark=none, ForestGreen, very thick] table[x expr=0.05 * (\coordindex + 1), y={Testing_accuracy_exploitation}]{\myTable};
                        \addplot[mark=none, BrickRed, solid, very thick] table[x expr=0.05 * (\coordindex + 1), y={Testing_accuracy_tow}]{\myTable};
                    \end{axis}
                \end{tikzpicture}
                \caption{Protonet}
            \end{subfigure}
            \caption{Exponential moving average with smoothing factor 0.1 of the prediction accuracy evaluated on validation tasks formed from the any-shot setting of mini-ImageNet dataset mentioned in \cref{sec:any_shot_classification} where the base model is a 4-module CNN.}
            \label{fig:any_shot_training}
        \end{figure}

        Similar to the experiments carried out in \cref{sec:nway_kshot_classification}, TOW demonstrates a higher performance compared to the three baselines: exploitation, exploration and uniform along the training process, as shown in \cref{fig:any_shot_training}. In general, TOW can achieve the state-of-the-art results when evaluating on 3,000 testing tasks formed from mini-ImageNet, as shown in \cref{tab:any_shot_results}, compared to common meta-learning methods.
        %, except for the Bayesian TAML~\citep{lee2020learning}, which is designed for this setting. 
        To further evaluate TOW, we follow the same setting and use the models trained on mini-ImageNet to test on 50 classes of bird images split from CUB dataset. The results in the last column of \cref{tab:any_shot_results} show that TOW can also work well on out-of-distribution tasks formed from CUB compared to most of the methods in the literature. The main reason that explains the worse performance of TOW, compared with Bayesian TAML, is that the meta-learning based methods used by TOW are MAML and Protonet, which have a smaller number of meta-parameters to model tasks than Bayesian TAML.
        % In theory, TOW can be placed on top of Bayesian TAML to obtain an even higher performance \gustavo{The reviewer can ask "why not then running this experiment with Bayesian TAML?" If you're prepared to run that experiment, we can leave this comment, so reviewers can easily pick that up. If you're not ready, then I recommend remove this comment.}.

        \begin{table}[t]
            \centering
            \caption{Prediction results on any-shot classification evaluated on 3,000 testing tasks with 95 percent confident interval; the bold numbers denote the results that are statistically significant. The results of previous methods are reported in \citep{lee2020learning}.}
            \label{tab:any_shot_results}
            \begin{tabular}{l l l}
                \toprule
                \bfseries Training set & \multicolumn{2}{c}{\bfseries mini-ImageNet} \\
                \cmidrule{2-3}
                \bfseries Testing set & \bfseries mini-ImageNet & \bfseries CUB \\
                \midrule
                MAML~\citep{finn2017model} & 66.64 \textpm 0.22 & 65.77 \textpm 0.24 \\
                Meta-SGD~\citep{li2017meta} & 69.95 \textpm 0.20 & 65.94 \textpm 0.22 \\
                MT-net~\citep{lee2018gradient} & 67.63 \textpm 0.23 & 66.09 \textpm 0.23 \\
                ABML~\citep{ravi2018amortized} & 56.91 \textpm 0.19 & 57.88 \textpm 0.20 \\
                Protonet~\citep{snell2017prototypical} & 69.11 \textpm 0.19 & 60.80 \textpm 0.19 \\
                Proto-MAML~\citep{triantafillou2020meta} & 68.96 \textpm 0.18 & 61.77 \textpm 0.19 \\
                Bayesian TAML~\citep{lee2020learning} & 71.46 \textpm 0.19 & \textbf{71.71 \textpm 0.21} \\
                \rowcolor{Gray!25} TOW-MAML & 70.02 \textpm 0.24 & 68.34 \textpm{0.25} \\
                \rowcolor{Gray!25} TOW-Protonet & \textbf{72.12 \textpm 0.21} & 64.79 \textpm 0.25 \\
                \bottomrule
            \end{tabular}
        \end{table}
    \section{Discussion and conclusion}
\label{sec:discussion}
    We propose a principled approach based on trajectory optimisation to mitigate the issue of non-uniform distribution of training tasks in meta-learning. The idea is to model the training process in meta-learning by trajectory optimisation with state as meta-parameter and action as the weights of training tasks. The local optimal weights obtained from iLQR -- a trajectory optimiser are then used to re-weight tasks to train the meta-parameter of interest. We demonstrate that the proposed approach converges with less number of training tasks and has a final prediction accuracy that out-performs some common hand-crafted task-weighting baselines.

    Our proposed method also has some limitations that could be addressed in future work. TOW relies on iLQR which is not ideal for large-scale systems with high dimensional state space such as deep neural networks. Despite the approximation of Hessian matrices to use only diagonals as mentioned in \cref{sec:ml2to}, the linearisation of the state-transition dynamics and quadraticisation of the cost function are still time-consuming, and consequently, reduce TOW's efficiency. Future work might find a faster approximation to optimise the running time for TOW as well as evaluate on large-scaled datasets, such as meta-dataset~\citep{triantafillou2020meta}.
    % the second derivative matrix of the value function, denoted as \(\mathbf{V}_{t}\) in \appendixname~\ref{sec:ilqr_derivation}, is still a dense matrix that has the same size as the Hessian matrices. Consequently, a large amount of memory is required to store such large matrix, making the proposed method applicable only for small models, such as the 4 CNN module network used in \sectionautorefname~\ref{sec:experiment}. This scalability issue has been an activate research in trajectory optimisation. Existing solutions rely on projection to reduce the dimensionality~\citep{huang2020balanced}. The tradeoff is to solve large-scale Lyapunov or algebraic Ricatti equation~\citep{benner2008numerical}, which is infeasible in the context of neural networks.

    Furthermore, our method is local in nature due to the Taylor's series approximation about a nominal trajectory used in iLQR. One way to improve further is to define a \say{global} or \say{stationary} policy \(\pi_{\theta}(\mathbf{x}_{t}, \mathbf{u}_{t})\), which is similar to Guided Policy Search~\citep{levine2013guided, levine2014learning}. This policy can then be trained on multiple local optimal trajectories obtained from iLQR. While this approach may offer a superior generalisation for the policy, scalability is an issue since the policy needs to process the high-dimensional state \(\mathbf{x}_{t}\). As a result, a very large model may be required to implement such policy.
    
    % % Acknowledgements should go at the end, before appendices and references
    
    % % \acks{Available upon acceptance of the paper.}
    
    % % Manual newpage inserted to improve layout of sample file - not
    % % needed in general before appendices/bibliography.

    \vskip 0.2in
%    \bibliography{references}
	\printbibliography

    \newpage
    \appendix

\onecolumn
\section[Convergence analysis for TOW]{Convergence analysis for trajectory optimisation based task weighting meta-learning}
\label{sec:tow_convergence}
    \subsection{Notations}
        The following notations are used throughout the paper:
        \begin{itemize}
            \item \(\norm{\mathbf{x}}\) is the L2 norm of a vector \(\mathbf{x} \in \mathbb{R}^{D}\), e.g. \(\sqrt{\mathbf{x}^{\top} \mathbf{x}}\)
            \item \(\norm{\mathbf{A}}\) is the matrix norm of a matrix \(\mathbf{A} \in \mathbb{R}^{M \times D}\) induced by a vector norm:
            \begin{equation*}
                \norm{\mathbf{A}} = \sup \frac{\norm{\mathbf{A} \mathbf{x}}}{\norm{\mathbf{x}}}, \forall \mathbf{x} \in \mathbf{R}^{D}: \norm{\mathbf{x}} \neq 0.
            \end{equation*}
        \end{itemize}

        Given the vector and matrix norm, two common inequalities used in this section are:
        \begin{itemize}
            \item Triangle inequality: \(\norm{ \mathbf{A} + \mathbf{B} } \le \norm{\mathbf{A}} + \norm{\mathbf{B}}, \forall \mathbf{A}, \mathbf{B} \in \mathbb{R}^{M \times D}\)
            \item Sub-multiplication: \(\norm{\mathbf{A} \mathbf{x}} \le \norm{\mathbf{A}} \norm{\mathbf{x}}\).
        \end{itemize}

    \subsection{Auxiliary lemmas}
    \label{sec:auxiliary_lemmas}
        \subsubsection{Boundedness of action (or weighting vector) \texorpdfstring{\(\mathbf{u}\)}{u}}
        \label{sec:boundedness_action}
            \LemmaBoundedNormU*
            \begin{proof}
                According to the procedure of iLQR shown in \eqref{eq:action_backtracking_linesearch}:
                \begin{equation}
                    \mathbf{u}_{t} - \hat{\mathbf{u}}_{t} = \mathbf{K}_{t} \left( \mathbf{x}_{t} - \hat{\mathbf{x}}_{t} \right) + \varepsilon \mathbf{k}_{t}.
                \end{equation}
                Since the matrix \(\mathbf{K}\), vectors \(\mathbf{k}\), \(\mathbf{x}\) and \(\hat{\mathbf{x}}\) are well-defined, the norm of \(\mathbf{u} - \hat{\mathbf{u}}\) is also well-defined.
    
                In addition, \(\delta\) is implicitly related to the Gaussian prior \(\mathcal{N}(\mathbf{u}_{t}; \mu_{u} \pmb{1}, \nicefrac{1}{\beta_{u}} \mathbf{I}_{M})\). A larger value of \(\beta_{u}\) in (\ref{eq:cost_function}) would result in a smaller value for \(\delta\).
            \end{proof}
    
            \begin{corollary}
                \label[corollary]{corollary:bound_norm_u}
                If \(\mathbf{u}_{t_{1}}\) and \(\mathbf{u}_{t_{2}}\) are stationary actions of two nominal actions \(\hat{\mathbf{u}}_{t_{1}} = \hat{\mathbf{u}}_{t_{2}} = \hat{\mathbf{u}}_{\mathrm{uniform}} = \nicefrac{1}{M}~\pmb{1}\) obtained from iLQR at time steps \(t_{1}\) and \(t_{2}\), respectively, then the followings hold:
                \begin{subequations}
                    \begin{empheq}{align}
                        & \left\Vert \mathbf{u}_{t_{1}} - \mathbf{u}_{t_{2}} \right\Vert \le 2\delta & \text{(bounded L2 norm of difference between 2 actions)}\\
                        & \lVert \mathbf{u}_{t_{1}} \rVert \le \delta + \frac{1}{\sqrt{M}} & \text{(bounded L2 norm)}\\
                        & \pmb{1}^{\top} \mathbf{u}_{t_{1}} \le \delta \sqrt{M} + 1. & \text{(bounded L1 norm)}
                    \end{empheq}
                \end{subequations}
            \end{corollary}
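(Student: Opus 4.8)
The plan is to obtain all three inequalities as immediate consequences of \cref{lemma:bound_norm_delta_u}, using nothing beyond the triangle inequality, the Cauchy--Schwarz inequality, and the explicit form of the common nominal action $\hat{\mathbf{u}}_{\mathrm{uniform}} = \nicefrac{1}{M}\,\pmb{1}$. The key preliminary computation is $\norm{\hat{\mathbf{u}}_{\mathrm{uniform}}} = \norm{\nicefrac{1}{M}\,\pmb{1}} = \nicefrac{1}{M}\cdot\sqrt{M} = \nicefrac{1}{\sqrt{M}}$, which will feed into the second and third bounds.

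First I would prove the bound on $\norm{\mathbf{u}_{t_{1}} - \mathbf{u}_{t_{2}}}$ by writing $\mathbf{u}_{t_{1}} - \mathbf{u}_{t_{2}} = (\mathbf{u}_{t_{1}} - \hat{\mathbf{u}}_{\mathrm{uniform}}) - (\mathbf{u}_{t_{2}} - \hat{\mathbf{u}}_{\mathrm{uniform}})$, applying the triangle inequality, and invoking \cref{lemma:bound_norm_delta_u} to bound each of the two summands by $\delta$, which yields $2\delta$. Next, for the $\ell_2$ bound I would write $\mathbf{u}_{t_{1}} = (\mathbf{u}_{t_{1}} - \hat{\mathbf{u}}_{\mathrm{uniform}}) + \hat{\mathbf{u}}_{\mathrm{uniform}}$, apply the triangle inequality, bound the first term by $\delta$ via \cref{lemma:bound_norm_delta_u}, and substitute $\norm{\hat{\mathbf{u}}_{\mathrm{uniform}}} = \nicefrac{1}{\sqrt{M}}$ to get $\delta + \nicefrac{1}{\sqrt{M}}$. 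Finally, for the last inequality I would apply Cauchy--Schwarz, $\pmb{1}^{\top} \mathbf{u}_{t_{1}} \le \norm{\pmb{1}}\,\norm{\mathbf{u}_{t_{1}}} = \sqrt{M}\,\norm{\mathbf{u}_{t_{1}}}$, and then plug in the $\ell_2$ bound just established, giving $\sqrt{M}\bigl(\delta + \nicefrac{1}{\sqrt{M}}\bigr) = \delta\sqrt{M} + 1$.

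Since each step is a one-line application of a standard inequality, there is no real obstacle here; the estimates must simply be chained in the right order (the $\ell_1$ bound reuses the $\ell_2$ bound, which in turn reuses \cref{lemma:bound_norm_delta_u}). The only point worth a word of care is the interpretation of the third quantity: because the backtracking line search in \cref{algm:task_weighting_ilqr} enforces $\mathbf{u}_{t_{1} i} \ge 0$, the quantity $\pmb{1}^{\top}\mathbf{u}_{t_{1}}$ coincides with $\lVert\mathbf{u}_{t_{1}}\rVert_{1}$, which is why the bound is labelled as a bound on the $\ell_1$ norm; the inequality itself, however, follows from Cauchy--Schwarz without using non-negativity.
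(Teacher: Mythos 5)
Your proposal is correct and follows essentially the same route as the paper: triangle inequality through the common nominal action for the first two bounds, then Cauchy--Schwarz for the third. In fact your explicit computation \(\lVert\hat{\mathbf{u}}_{\mathrm{uniform}}\rVert = \nicefrac{1}{\sqrt{M}}\) is slightly more careful than the paper's own proof, which loosely writes \(\delta + 1\) for the second bound even though the stated corollary requires \(\delta + \nicefrac{1}{\sqrt{M}}\).
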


            \begin{proof}
                The first inequality can be proved by simply applying triangle inequality on the \(\ell_{2}\) norm and employing the result in \cref{lemma:bound_norm_delta_u}:
                \begin{equation}
                \begin{aligned}[b]
                    \norm{ \mathbf{u}_{t_{1}} - \mathbf{u}_{t_{2}} } & = \norm{ (\mathbf{u}_{t_{1}} - \hat{\mathbf{u}}_{\mathrm{uniform}}) + (\hat{\mathbf{u}}_{\mathrm{uniform}} - \mathbf{u}_{t_{2}}) } \\
                    & \le \norm{ \mathbf{u}_{t_{1}} - \hat{\mathbf{u}}_{\mathrm{uniform}} } + \norm{ \mathbf{u}_{t_{2}} - \hat{\mathbf{u}}_{\mathrm{uniform}} } \\
                    & \le \norm{ \mathbf{u}_{t_{1}} - \hat{\mathbf{u}}_{t_{1}} } + \norm{ \mathbf{u}_{t_{2}} - \hat{\mathbf{u}}_{t_{2}} } \\
                    & \le 2\delta.
                \end{aligned}
                \end{equation}
    
                The second inequality can similarly be proved using triangle inequality:
                \begin{equation}
                    \begin{aligned}[b]
                        \norm{ \mathbf{u}_{t_{1}} } = \norm{ (\mathbf{u}_{t_{1}} - \hat{\mathbf{u}}_{\mathrm{uniform}}) + \hat{\mathbf{u}}_{\mathrm{uniform}} } & \le \norm{ \mathbf{u}_{t_{1}} - \hat{\mathbf{u}}_{\mathrm{uniform}} } + \norm{\hat{\mathbf{u}}_{\mathrm{uniform}}} \le \delta + 1.
                    \end{aligned}
                \end{equation}
                
                The last inequality can be proved using Cauchy-Schwarz inequality:
                \begin{equation}
                    \begin{aligned}[b]
                        \pmb{1}^{\top} \mathbf{u}_{t_{1}} \le \abs{ \pmb{1}^{\top} \mathbf{u}_{t_{1}} } & \le \sqrt{M} \norm{ \mathbf{u}_{t_{1}} } \quad \text{(Cauchy-Schwarz inequality)}\\
                        & \le \delta \sqrt{M} + 1.
                    \end{aligned}
                \end{equation}
            \end{proof}

        \subsubsection{Boundedness of variance of gradient of the loss function \texorpdfstring{\(\ell\)}{l}}
        \label{sec:boundedness_variance_grad}
            \LemmaBoundedVarianceGrad*
            \begin{proof}
                The term inside the expectation in the left-hand side can be written as:
                \begin{equation}
                    \begin{aligned}[b]
                        % \exists \sigma > 0: \forall \mathbf{x} \in \mathbb{R}^{D}, \mathbb{E}_{(\mathbf{s}_{ij}, y) \sim \mathcal{D}_{i}} \left[ \norm{ \grad_{\mathbf{x}} \ell(\mathbf{s}_{ij}, y_{ij}; \mathbf{x}) - \mathbb{E}_{(\mathbf{s}_{ij}, y) \sim \mathcal{D}_{i}} \left[ \grad_{\mathbf{x}} \ell(\mathbf{s}_{ij}, y_{ij}; \mathbf{x}) \right] }^{2} \right] \le \sigma^{2}.
                        & \norm{ \grad_{\mathbf{x}} \ell(\mathbf{s}_{ij}, y_{ij}; \mathbf{x}) - \mathbb{E}_{(\mathbf{s}_{ij}, y) \sim \mathcal{D}_{i}} \left[ \grad_{\mathbf{x}} \ell(\mathbf{s}_{ij}, y_{ij}; \mathbf{x}) \right] } \\
                        & \le \norm{ \grad_{\mathbf{x}} \ell(\mathbf{s}_{ij}, y_{ij}; \mathbf{x})} + \norm{ \mathbb{E}_{(\mathbf{s}_{ij}, y) \sim \mathcal{D}_{i}} \left[ \grad_{\mathbf{x}} \ell(\mathbf{s}_{ij}, y_{ij}; \mathbf{x}) \right] } \quad \text{(triangle inequality)} \\
                        & \le \norm{ \grad_{\mathbf{x}} \ell(\mathbf{s}_{ij}, y_{ij}; \mathbf{x}) } + \mathbb{E}_{(\mathbf{s}_{ij}, y) \sim \mathcal{D}_{i}} \left[ \norm{ \grad_{\mathbf{x}} \ell(\mathbf{s}_{ij}, y_{ij}; \mathbf{x}) } \right] \quad \text{(Jensen's inequality)} \\
                        & \le 2L \quad \text{(Boundedness of gradient in \eqref{eq:bounded_grad})}.
                    \end{aligned}
                \end{equation}
                Thus, selecting \(\sigma = 2L\) completes the proof.
            \end{proof}

        \subsubsection{Boundedness of variance of the weighted loss}
        \label{sec:boundedness_variance_loss}
            \LemmaBoundedVarianceWeightedLoss*
            \begin{proof}
                We use the following well-known inequality for variance as a part of the proof.
    
                If \(X_{i}, \forall i \in \{1, \ldots, n\}\) are random variables with finite variance: \(\mathrm{Var}\left( X_{i} \right) < +\infty\), then:
                \begin{equation*}
                    \mathrm{Var} \left( \sum_{i=1}^{n} X_{i} \right) \le n \sum_{i=1}^{n} \mathrm{Var} \left( X_{i} \right).
                \end{equation*} 
    
                Note that \(\pmb{\ell}_{i} (\mathbf{x}_{t})\) in \eqref{eq:validation_loss} is the empirical expected values of loss \(\ell\) evaluated on task \(i\)-th.
                Hence, applying the above inequality for variance gives:
                \begin{equation}
                    \begin{aligned}[b]
                        \mathrm{Var} \left( \grad_{\mathbf{x}} \pmb{\ell}_{i} (\mathbf{x}_{t}) \right) & = \mathrm{Var} \left( \frac{1}{m_{q}}\sum_{j=1}^{m_{q}} \grad_{\mathbf{x}} \ell \left( \mathbf{s}_{ij}^{(q)}, y_{ij}^{(q)}; \mathbf{x} - \frac{\gamma}{m_{s}} \sum_{k=1}^{m_{s}} \grad_{\mathbf{x}} \left[ \ell \left( \mathbf{s}_{ik}^{(s)}, y_{ik}^{(s)}; \mathbf{x}_{t} \right) \right] \right) \right) \\
                        & \le \frac{1}{m_{q}} \sum_{j=1}^{m_{q}} \mathrm{Var} \left( \grad_{\mathbf{x}} \ell \right) \\
                        & \le \sigma^{2}.
                    \end{aligned}
                \end{equation}
    
                Hence, the variance of \(\mathbf{u}_{ti} \grad_{\mathbf{x}} \bar{\pmb{\ell}}(\mathbf{x}_{t})\) is bounded by \(\mathbf{u}_{ti}^{2} \sigma^{2}\). This leads to:
                \begin{equation}
                    \begin{aligned}[b]
                        \mathrm{Var} \left( \grad_{\mathbf{x}} \mathbf{u}_{t}^{\top} \bar{\pmb{\ell}} (\mathbf{x}_{t}) \right) & = \mathrm{Var} \left( \sum_{i=1}^{M} \mathbf{u}_{ti} \grad_{\mathbf{x}} \pmb{\ell}_{i} (\mathbf{x}_{t}) \right) \\
                        & = \sum_{i = 1}^{M} \mathbf{u}_{ti}^{2} \mathrm{Var} \left( \grad_{\mathbf{x}} \pmb{\ell}_{i}(\mathbf{x}_{t}) \right) \\
                        & \le \sigma^{2} \sum_{i=1}^{M} \mathbf{u}_{ti}^{2} = \sigma^{2} \lVert \mathbf{u}_{t} \rVert^{2}\\
                        & \le \sigma^{2} \left( \delta + M^{-0.5} \right)^{2} \quad \text{(\cref{corollary:bound_norm_u})}.
                    \end{aligned}
                \end{equation}
            \end{proof}

        \subsubsection{Smoothness of validation loss}
        \label{sec:smoothness_loss}
            In this \subsubsectionautorefname, we prove \cref{lemma:smoothness} about the smoothness of validation loss. To make the \subsubsectionautorefname~self-contained, we re-state the definition of the task-specific parameter \(\phi_{i}(\mathbf{x})\) and the true validation loss \(\bar{\pmb{\ell}}_{i} (\mathbf{x})\) as follows:

            \begin{align}
                \phi_{i}(\mathbf{x}) & = \mathbf{x} - \frac{\gamma}{m_{i}^{(s)}} \sum_{k=1}^{m_{i}^{(s)}} \nabla_{\mathbf{x}} \left[ \ell \left( \mathbf{s}_{ik}^{(s)}, y_{ik}^{(s)}; \mathbf{x} \right) \right] \tag{\ref{eq:task_adaptation_gd}}\\
                \bar{\pmb{\ell}}_{i} (\mathbf{x}) & = \mathbb{E}_{\left( \mathbf{s}_{ij}^{(q)}, y_{ij}^{(q)} \right) \sim \mathcal{D}_{i}^{(q)}} \left[ \ell \left( \mathbf{s}_{ij}^{(q)}, y_{ij}^{(q)}; \phi(\mathbf{x}) \right) \right]. \tag{\ref{eq:true_validation_loss}}
            \end{align}

            \cref{lemma:smoothness} and its proof are shown as follows:

            \LemmaSmoothness*

            \begin{proof}
                Before starting the proof, we abuse the notation of gradient of the loss function at a point \(\mathbf{x} = \mathbf{v}\) as follows:
                \begin{equation}
                    \grad_{\mathbf{x}} \ell (\mathbf{s}, y; \mathbf{v}) = \grad_{\mathbf{x}} \ell (\mathbf{s}, y; \mathbf{x}) \eval*{}_{\mathbf{x} = \mathbf{v}.}
                \end{equation}

                Given the definition of the true validation loss in Eq.~\eqref{eq:true_validation_loss}, its gradient w.r.t. \(\mathbf{x}\) can be calculated using chain rule as follows:
                \begin{equation}
                    \begin{aligned}[b]
                        \grad_{\mathbf{x}} \bar{\pmb{\ell}}_{i} (\mathbf{x}) & = \grad_{\mathbf{x}} \mathbb{E}_{\left( \mathbf{s}_{ij}^{(q)}, y_{ij}^{(q)} \right) \sim \mathcal{D}_{i}^{(q)}} \left[ \ell \left( \mathbf{s}_{ij}^{(q)}, y_{ij}^{(q)}; \phi_{i}(\mathbf{x}) \right) \right] \\
                        & = \mathbb{E}_{\left( \mathbf{s}_{ij}^{(q)}, y_{ij}^{(q)} \right) \sim \mathcal{D}_{i}^{(q)}} \left[ \grad_{\mathbf{x}} \phi_{i}(\mathbf{x}) \times \grad_{\mathbf{x}} \ell \left( \mathbf{s}_{ij}^{(q)}, y_{ij}^{(q)}; \phi_{i}(\mathbf{x}) \right) \right].
                    \end{aligned}
                \end{equation}
                And since \(\phi_{i}(\mathbf{x})\) defined in Eq.~\eqref{eq:task_adaptation_gd} does not depends on validation (or query) samples, we can, therefore, rewrite the above gradient as:
                \begin{equation}
                    \begin{split}
                        \grad_{\mathbf{x}} \bar{\pmb{\ell}}_{i} (\mathbf{x}) & = \grad_{\mathbf{x}} \phi_{i}(\mathbf{x}) \times \mathbb{E}_{\left( \mathbf{s}_{ij}^{(q)}, y_{ij}^{(q)} \right) \sim \mathcal{D}_{i}^{(q)}} \left[ \grad_{\mathbf{x}} \ell \left( \mathbf{s}_{ij}^{(q)}, y_{ij}^{(q)}; \phi_{i}(\mathbf{x}) \right) \right]\\
                        & = \left\{ \mathbf{I} - \gamma  \mathbb{E}_{\left(\mathbf{s}_{ij}^{(s)}, y_{ij}^{(s)} \right) \sim \mathcal{S}_{i}^{(s)}} \left[ \grad_{\mathbf{x}}^{2} \ell \left( \mathbf{s}_{ik}^{(s)}, y_{ik}^{(s)} ;\mathbf{x} \right) \right] \right\} \\
                        & \quad \times \mathbb{E}_{\left( \mathbf{s}_{ij}^{(q)}, y_{ij}^{(q)} \right) \sim \mathcal{D}_{i}^{(q)}} \left[ \grad_{\mathbf{x}} \ell \left( \mathbf{s}_{ij}^{(q)}, y_{ij}^{(q)}; \phi_{i}(\mathbf{x}) \right) \right].
                    \end{split}
                \end{equation}
                Note that we abuse the notation and use \(\mathbb{E}_{\left(\mathbf{s}_{ij}^{(s)}, y_{ij}^{(s)} \right) \sim \mathcal{S}_{i}^{(s)}}\) to indicate the average evaluated on all data points in set \(\mathcal{S}_{i}\).
    
                In the following, we omit sample \((\mathbf{s}, y)\) from the expectation to simplify the notations. In particular, the above gradient can be re-written as:
                \begin{equation}
                    \begin{aligned}[b]
                        \grad_{\mathbf{x}} \bar{\pmb{\ell}}_{i} (\mathbf{x}) & = \left\{ \mathbf{I} - \gamma  \mathbb{E}_{\mathcal{S}_{i}^{(s)}} \left[ \grad_{\mathbf{x}}^{2} \ell \left( \mathbf{s}_{ik}^{(s)}, y_{ik}^{(s)}; \mathbf{x} \right) \right] \right\} \mathbb{E}_{ \mathcal{D}_{i}^{(q)}} \left[ \grad_{\mathbf{x}} \ell \left( \mathbf{s}_{ij}^{(q)}, y_{ij}^{(q)}; \phi_{i}(\mathbf{x}) \right)
                        \right].
                    \end{aligned}
                \end{equation}
    
                Thus, we can calculate the difference of the gradient evaluated on the same task \(\mathcal{T}_{i}\) but with two different meta-parameters:
                \begin{equation}
                    \begin{aligned}[b]
                        & \left\Vert \grad_{\mathbf{x}} \bar{\pmb{\ell}}_{i} \left( \bar{\mathbf{x}} \right) - \grad_{\mathbf{x}} \bar{\pmb{\ell}}_{i} \left( \widetilde{\mathbf{x}} \right) \right\Vert\\
                        & = \left\Vert \left\{ \mathbf{I} - \gamma  \mathbb{E}_{\mathcal{S}_{i}^{(s)}} \left[ \grad_{\mathbf{x}}^{2} \ell \left( \mathbf{s}_{ik}^{(s)}, y_{ik}^{(s)}; \bar{\mathbf{x}} \right)\right] \right\} \mathbb{E}_{ \mathcal{D}_{i}^{(q)}} \left[ \grad_{\mathbf{x}} \ell \left( \mathbf{s}_{ik}^{(q)}, y_{ik}^{(q)}; \phi_{i}(\bar{\mathbf{x}}) \right) \right] \right.\\
                        & \quad \left. - \left\{ \mathbf{I} - \gamma  \mathbb{E}_{\mathcal{S}_{i}^{(s)}} \left[ \grad_{\mathbf{x}}^{2} \ell \left( \mathbf{s}_{ik}^{(s)}, y_{ik}^{(s)}; \widetilde{\mathbf{x}} \right) \right] \right\} \mathbb{E}_{ \mathcal{D}_{i}^{(q)}} \left[ \grad_{\mathbf{x}} \ell \left( \mathbf{s}_{ik}^{(q)}, y_{ik}^{(q)}; \phi_{i}(\widetilde{\mathbf{x}}) \right) \right] \right\Vert\\
                        & = \left\Vert \left\{ \mathbf{I} - \gamma  \mathbb{E}_{\mathcal{S}_{i}^{(s)}} \left[ \grad_{\mathbf{x}}^{2} \ell \left( \mathbf{s}_{ik}^{(s)}, y_{ik}^{(s)}; \bar{\mathbf{x}} \right)\right] \right\} \mathbb{E}_{ \mathcal{D}_{i}^{(q)}} \left[ \grad_{\mathbf{x}} \ell \left( \mathbf{s}_{ik}^{(q)}, y_{ik}^{(q)}; \phi_{i}(\bar{\mathbf{x}}) \right) \right] \right.\\
                        & \quad \left. - \left\{ \mathbf{I} - \gamma  \mathbb{E}_{\mathcal{S}_{i}^{(s)}} \left[ \grad_{\mathbf{x}}^{2} \ell \left( \mathbf{s}_{ik}^{(s)}, y_{ik}^{(s)}; \bar{\mathbf{x}} \right)\right] + \gamma  \mathbb{E}_{\mathcal{S}_{i}^{(s)}} \left[ \grad_{\mathbf{x}}^{2} \ell \left( \mathbf{s}_{ik}^{(s)}, y_{ik}^{(s)}; \bar{\mathbf{x}} \right)\right] \right. \right. \\
                        & \qquad \left. - \gamma  \mathbb{E}_{\mathcal{S}_{i}^{(s)}} \left[ \grad_{\mathbf{x}}^{2} \ell \left( \mathbf{s}_{ik}^{(s)}, y_{ik}^{(s)}; \widetilde{\mathbf{x}} \right) \right] \right\} \times \left. \mathbb{E}_{ \mathcal{D}_{i}^{(q)}} \left[ \grad_{\mathbf{x}} \ell \left( \mathbf{s}_{ik}^{(q)}, y_{ik}^{(q)}; \phi_{i}(\widetilde{\mathbf{x}}) \right) \right] \right\Vert\\
                        & = \left\Vert \left\{ \mathbf{I} - \gamma  \mathbb{E}_{\mathcal{S}_{i}^{(s)}} \left[ \grad_{\mathbf{x}}^{2} \ell \left( \mathbf{s}_{ik}^{(s)}, y_{ik}^{(s)}; \bar{\mathbf{x}} \right) \right] \right\} \mathbb{E}_{ \mathcal{D}_{i}^{(q)}} \left[ \grad_{\mathbf{x}} \ell \left( \mathbf{s}_{ik}^{(q)}, y_{ik}^{(q)}; \phi_{i}(\bar{\mathbf{x}}) \right) - \grad_{\mathbf{x}} \ell \left( \mathbf{s}_{ik}^{(q)}, y_{ik}^{(q)}; \phi_{i}(\widetilde{\mathbf{x}}) \right) \right] \right.\\
                        & \qquad\quad \left. - \gamma \mathbb{E}_{\mathcal{S}_{i}^{(s)}} \left[ \grad_{\mathbf{x}}^{2} \ell \left( \mathbf{s}_{ik}^{(s)}, y_{ik}^{(s)}; \bar{\mathbf{x}} \right) - \grad_{\mathbf{x}}^{2} \ell \left( \mathbf{s}_{ik}^{(s)}, y_{ik}^{(s)}; \widetilde{\mathbf{x}} \right) \right] \mathbb{E}_{ \mathcal{D}_{i}^{(q)}} \left[ \grad_{\mathbf{x}} \ell \left( \mathbf{s}_{ik}^{(q)}, y_{ik}^{(q)}; \phi_{i}(\widetilde{\mathbf{x}}) \right)
                        \right] \right\Vert.
                    \end{aligned}
                \end{equation}
    
                Applying the triangle inequality gives:
                \begin{equation}
                    \begin{aligned}[b]
                        & \left\Vert \grad_{\mathbf{x}} \bar{\pmb{\ell}}_{i} \left( \bar{\mathbf{x}} \right) - \grad_{\mathbf{x}} \bar{\pmb{\ell}}_{i} \left( \widetilde{\mathbf{x}} \right) \right\Vert \\
                        & \le \left\Vert \left\{ \mathbf{I} - \gamma  \mathbb{E}_{\mathcal{S}_{i}^{(s)}} \left[ \grad_{\mathbf{x}}^{2} \ell \left( \mathbf{s}_{ik}^{(s)}, y_{ik}^{(s)}; \bar{\mathbf{x}} \right) \right] \right\} \mathbb{E}_{ \mathcal{D}_{i}^{(q)}} \left[ \grad_{\mathbf{x}} \ell \left( \mathbf{s}_{ik}^{(q)}, y_{ik}^{(q)}; \phi_{i}(\bar{\mathbf{x}}) \right) - \grad_{\mathbf{x}} \ell \left( \mathbf{s}_{ik}^{(q)}, y_{ik}^{(q)}; \phi_{i}(\widetilde{\mathbf{x}}) \right) \right] \right\Vert \\
                        & \quad + \left\Vert \gamma \mathbb{E}_{\mathcal{S}_{i}^{(s)}} \left[ \grad_{\mathbf{x}}^{2} \ell \left( \mathbf{s}_{ik}^{(s)}, y_{ik}^{(s)}; \bar{\mathbf{x}} \right) - \grad_{\mathbf{x}}^{2} \ell \left( \mathbf{s}_{ik}^{(s)}, y_{ik}^{(s)}; \widetilde{\mathbf{x}} \right) \right] \mathbb{E}_{ \mathcal{D}_{i}^{(q)}} \left[ \grad_{\mathbf{x}} \ell \left( \mathbf{s}_{ik}^{(q)}, y_{ik}^{(q)}; \phi_{i}(\widetilde{\mathbf{x}}) \right)
                        \right] \right\Vert.
                    \end{aligned}
                    \label{eq:upper_bound_loss_smoothness}
                \end{equation}
    
                Next, we upper-bound the two terms in the right-hand side of Ineq.~\eqref{eq:upper_bound_loss_smoothness}. The first term can be upper-bounded as:
                \begin{equation}
                    \begin{aligned}[b]
                        \text{First term} & = \left\Vert \left\{ \mathbf{I} - \gamma  \mathbb{E}_{\mathcal{S}_{i}^{(s)}} \left[ \grad_{\mathbf{x}}^{2} \ell \left( \mathbf{s}_{ik}^{(s)}, y_{ik}^{(s)}; \bar{\mathbf{x}} \right) \right] \right\} \right.\\
                        & \quad \left. \times \mathbb{E}_{ \mathcal{D}_{i}^{(q)}} \left[ \grad_{\mathbf{x}} \ell \left( \mathbf{s}_{ik}^{(q)}, y_{ik}^{(q)}; \phi_{i}(\bar{\mathbf{x}}) \right) - \grad_{\mathbf{x}} \ell \left( \mathbf{s}_{ik}^{(q)}, y_{ik}^{(q)}; \phi_{i}(\widetilde{\mathbf{x}}) \right) \right] \right\Vert\\
                        & \le \left\Vert \mathbf{I} - \gamma  \mathbb{E}_{\mathcal{S}_{i}^{(s)}} \left[ \grad_{\mathbf{x}}^{2} \ell \left( \mathbf{s}_{ik}^{(s)}, y_{ik}^{(s)}; \bar{\mathbf{x}} \right) \right] \right\Vert \\
                        & \quad \times \left\Vert \mathbb{E}_{ \mathcal{D}_{i}^{(q)}} \left[ \grad_{\mathbf{x}} \ell \left( \mathbf{s}_{ik}^{(q)}, y_{ik}^{(q)}; \phi_{i}(\bar{\mathbf{x}}) \right) - \grad_{\mathbf{x}} \ell \left( \mathbf{s}_{ik}^{(q)}, y_{ik}^{(q)}; \phi_{i}(\widetilde{\mathbf{x}}) \right) \right] \right\Vert
                    \end{aligned}
                \end{equation}
    
                Applying Jensen's inequality on the L2 norm of the expectation in the right-hand side of the above inequality to bring the expectation outside of the L2 norm, then employing the smoothness of \(\ell\) in \cref{assumption:loss_smoothness} to obtain the following:
                \begin{equation}
                    \begin{aligned}[b]
                        \text{First term} & \le \left\Vert \mathbf{I} - \gamma  \mathbb{E}_{\mathcal{S}_{i}^{(s)}} \left[ \grad_{\mathbf{x}}^{2} \ell \left( \mathbf{s}_{ik}^{(s)}, y_{ik}^{(s)}; \bar{\mathbf{x}} \right) \right] \right\Vert \\
                        & \quad \times \mathbb{E}_{ \mathcal{D}_{i}^{(q)}} \left\Vert \grad_{\mathbf{x}} \ell \left( \mathbf{s}_{ik}^{(q)}, y_{ik}^{(q)}; \phi_{i}(\bar{\mathbf{x}}) \right) - \grad_{\mathbf{x}} \ell \left( \mathbf{s}_{ik}^{(q)}, y_{ik}^{(q)}; \phi_{i}(\widetilde{\mathbf{x}}) \right) \right\Vert\\
                        & \le \left\Vert \mathbf{I} - \gamma  \mathbb{E}_{\mathcal{S}_{i}^{(s)}} \left[ \grad_{\mathbf{x}}^{2} \ell \left( \mathbf{s}_{ik}^{(s)}, y_{ik}^{(s)}; \bar{\mathbf{x}} \right) \right] \right\Vert \times S \left\Vert \phi_{i}(\bar{\mathbf{x}}) - \phi_{i}(\widetilde{\mathbf{x}}) \right\Vert .
                    \end{aligned}
                    \label{eq:first_term_1}
                \end{equation}

                Given the definition of \(\phi(\mathbf{x})\) in Eq.~\eqref{eq:task_adaptation_gd}, we can obtain the following:
                \begin{equation}
                    \begin{split}
                        \left\Vert \phi_{i}(\bar{\mathbf{x}}) - \phi_{i}(\widetilde{\mathbf{x}}) \right\Vert & = \left\Vert \left( \bar{\mathbf{x}} - \widetilde{\mathbf{x}} \right) - \gamma \mathbb{E}_{\mathcal{S}_{i}^{(s)}} \left[ \grad_{\mathbf{x}} \ell \left( \mathbf{s}_{ik}^{(s)}, y_{ij}^{(s)}; \bar{\mathbf{x}} \right) - \grad_{\mathbf{x}} \ell \left( \mathbf{s}_{ik}^{(s)}, y_{ij}^{(s)}; \widetilde{\mathbf{x}} \right) \right] \right\Vert\\
                        & \le \left\Vert \bar{\mathbf{x}} - \widetilde{\mathbf{x}} \right\Vert + \gamma \left\Vert \mathbb{E}_{\mathcal{S}_{i}^{(s)}} \left[ \grad_{\mathbf{x}} \ell \left( \mathbf{s}_{ik}^{(s)}, y_{ij}^{(s)}; \bar{\mathbf{x}} \right) - \grad_{\mathbf{x}} \ell \left( \mathbf{s}_{ik}^{(s)}, y_{ij}^{(s)}; \widetilde{\mathbf{x}} \right) \right] \right\Vert \\
                        & \qquad \text{(triangle inequality)}\\
                        & \le \left\Vert \bar{\mathbf{x}} - \widetilde{\mathbf{x}} \right\Vert + \gamma \mathbb{E}_{\mathcal{S}_{i}^{(s)}} \left\Vert \grad_{\mathbf{x}} \ell \left( \mathbf{s}_{ik}^{(s)}, y_{ij}^{(s)}; \bar{\mathbf{x}} \right) - \grad_{\mathbf{x}} \ell \left( \mathbf{s}_{ik}^{(s)}, y_{ij}^{(s)}; \widetilde{\mathbf{x}} \right) \right\Vert \\
                        & \qquad \text{(Jensen's inequality)}\\
                        & \le \left\Vert \bar{\mathbf{x}} - \widetilde{\mathbf{x}} \right\Vert + \gamma S \left\Vert \bar{\mathbf{x}} - \widetilde{\mathbf{x}} \right\Vert ~\text{(\cref{assumption:loss_smoothness})}
                    \end{split}
                \end{equation}

                Thus, one can upper-bound further Ineq.~\eqref{eq:first_term_1} as follows:
                \begin{equation}
                    \begin{aligned}[b]
                        \text{First term} & \le S (1 + \gamma S) \left\Vert \mathbf{I} - \gamma  \mathbb{E}_{\mathcal{S}_{i}^{(s)}} \left[ \grad_{\mathbf{x}}^{2} \ell \left( \mathbf{s}_{ik}^{(s)}, y_{ik}^{(s)}; \bar{\mathbf{x}} \right) \right] \right\Vert \times \left\Vert \bar{\mathbf{x}} - \widetilde{\mathbf{x}} \right\Vert .
                    \end{aligned}
                \end{equation}
    
                If \(\{\lambda_{d}\}_{d=1}^{D}\) are the eigenvalues of the Hessian matrix \(\mathbb{E}_{\mathcal{S}_{i}^{(s)}} \left[ \grad_{\mathbf{x}}^{2} \ell \left( \mathbf{s}_{ik}^{(s)}, y_{ik}^{(s)}; \bar{\mathbf{x}} \right) \right]\), then due to \cref{lemma:eigen_value_shift}, the eigenvalues of \(\mathbf{I} - \gamma  \mathbb{E}_{\mathcal{S}_{i}^{(s)}} \left[ \grad_{\mathbf{x}}^{2} \ell \left( \mathbf{s}_{ik}^{(s)}, y_{ik}^{(s)}; \bar{\mathbf{x}} \right) \right]\) are \(\{1 - \gamma \lambda_{d}\}_{d = 1}^{D}\). In addition, since \(\mathbf{I} - \gamma  \mathbb{E}_{\mathcal{S}_{i}^{(s)}} \left[ \grad_{\mathbf{x}}^{2} \ell \left( \mathbf{s}_{ik}^{(s)}, y_{ik}^{(s)}; \bar{\mathbf{x}} \right) \right]\) is symmetric and positive semi-definite, its norm equals to the largest eigenvalue (refer to \cref{lemma:norm_max_eigenvalue}):
                \begin{equation}
                    \begin{aligned}
                        \left\Vert \mathbf{I} - \gamma  \mathbb{E}_{\mathcal{S}_{i}^{(s)}} \left[ \grad_{\mathbf{x}}^{2} \ell \left( \mathbf{s}_{ik}^{(s)}, y_{ik}^{(s)}; \bar{\mathbf{x}} \right) \right] \right\Vert & = \max_{d} \left\vert 1 - \gamma \lambda_{d} \right\vert \le 1 + \gamma \max_{d} | \lambda_{d} |.
                    \end{aligned}
                \end{equation}
                
                According to \cref{assumption:loss_smoothness}, one can imply that the eigenvalues of the Hessian matrix \(\mathbb{E}_{\mathcal{S}_{i}^{(s)}} \left[ \grad_{\mathbf{x}}^{2} \ell \left( \mathbf{s}_{ik}^{(s)}, y_{ik}^{(s)}; \bar{\mathbf{x}} \right) \right] \) are smaller than \(S\)~\citep[section 3.2, page 266]{bubeck2015convex}. This results in:
                \begin{equation}
                    \left\Vert \mathbf{I} - \gamma  \mathbb{E}_{\mathcal{S}_{i}^{(s)}} \left[ \grad_{\mathbf{x}}^{2} \ell \left( \mathbf{s}_{ik}^{(s)}, y_{ik}^{(s)}; \bar{\mathbf{x}} \right) \right] \right\Vert \le 1 + \gamma S.
                \end{equation}
    
                Therefore, the first term on the right-hand side of \eqref{eq:upper_bound_loss_smoothness} is upper-bounded by:
                \begin{equation}
                    \text{First term} \le S \left( 1 + \gamma S \right)^{2} \left\Vert \bar{\mathbf{x}} - \widetilde{\mathbf{x}} \right\Vert.
                    \label{eq:first_term_upper_bound}
                \end{equation}
    
                The second term in the right-hand side of \eqref{eq:upper_bound_loss_smoothness} can be upper-bounded as:
                \begin{equation}
                    \begin{aligned}[b]
                        \text{Second term} & = \left\Vert \gamma \mathbb{E}_{\mathcal{S}_{i}^{(s)}} \left[ \grad_{\mathbf{x}}^{2} \ell \left( \mathbf{s}_{ik}^{(s)}, y_{ik}^{(s)}; \bar{\mathbf{x}} \right) - \grad_{\mathbf{x}}^{2} \ell \left( \mathbf{s}_{ik}^{(s)}, y_{ik}^{(s)}; \widetilde{\mathbf{x}} \right) \right] \right.\\
                        & \quad \left. \times \mathbb{E}_{ \mathcal{D}_{i}^{(q)}} \left[ \grad_{\mathbf{x}} \ell \left( \mathbf{s}_{ik}^{(q)}, y_{ik}^{(q)}; \phi_{i}(\widetilde{\mathbf{x}}) \right)
                        \right] \right\Vert \\
                        & \le \norm{ \gamma \mathbb{E}_{\mathcal{S}_{i}^{(s)}} \left[ \grad_{\mathbf{x}}^{2} \ell \left( \mathbf{s}_{ik}^{(s)}, y_{ik}^{(s)}; \bar{\mathbf{x}} \right) - \grad_{\mathbf{x}}^{2} \ell \left( \mathbf{s}_{ik}^{(s)}, y_{ik}^{(s)}; \widetilde{\mathbf{x}} \right) \right] } \\
                        & \quad \times \norm{ \mathbb{E}_{ \mathcal{D}_{i}^{(q)}} \left[ \grad_{\mathbf{x}} \ell \left( \mathbf{s}_{ik}^{(q)}, y_{ik}^{(q)}; \phi_{i}(\widetilde{\mathbf{x}}) \right)
                        \right] } \\
                        & \le \gamma \mathbb{E}_{\mathcal{S}_{i}^{(s)}} \norm{ \grad_{\mathbf{x}}^{2} \ell \left( \mathbf{s}_{ik}^{(s)}, y_{ik}^{(s)}; \bar{\mathbf{x}} \right) - \grad_{\mathbf{x}}^{2} \ell \left( \mathbf{s}_{ik}^{(s)}, y_{ik}^{(s)}; \widetilde{\mathbf{x}} \right) } \\
                        & \quad \times \mathbb{E}_{ \mathcal{D}_{i}^{(q)}} \left\Vert \grad_{\mathbf{x}} \ell \left( \mathbf{s}_{ik}^{(q)}, y_{ik}^{(q)}; \phi_{i}(\widetilde{\mathbf{x}}) \right)
                        \right\Vert \quad \text{(Jensen's inequality)}\\
                        & \le \gamma \rho L \left\Vert \bar{\mathbf{x}} - \widetilde{\mathbf{x}} \right\Vert ~\text{(Eq.~\eqref{eq:bounded_grad} and \cref{assumption:hessian_loss_lipschitz})}.
                    \end{aligned}
                    \label{eq:second_term_upper_bound}
                \end{equation}

                Combining the results in \eqref{eq:upper_bound_loss_smoothness}, \eqref{eq:first_term_upper_bound} and \eqref{eq:second_term_upper_bound} gives:
                \begin{equation}
                    \left\Vert \grad_{\mathbf{x}} \bar{\pmb{\ell}}_{i} \left( \bar{\mathbf{x}} \right) - \grad_{\mathbf{x}} \bar{\pmb{\ell}}_{i} \left( \widetilde{\mathbf{x}} \right) \right\Vert \le \left[S (1 + \gamma S)^{2} + \gamma \rho L \right] \left\Vert \bar{\mathbf{x}} - \widetilde{\mathbf{x}} \right\Vert.
                \end{equation}
                This completes the proof.
            \end{proof}

    \subsection{Convergence of TOW}
    \label{sec:tow_convergence_proof}
        \TheoremConvergenceTOW*
        \begin{proof}
            According to \cref{eq:true_validation_loss}, \(\bar{\pmb{\ell}}_{i}(\mathbf{x}) \in \mathbb{R}\) is the expected validation loss of task \(i\)-th using meta-parameter \(\mathbf{x}\). In addition, the notation \(\mathcal{D}_{1:M}^{(q)}\) indicates the data probability that generates query data pairs for \(M\) tasks in a mini-batch.

            For convenience, we also denote \(\bar{\pmb{\ell}}(x) \in \mathbb{R}^{M}\) is the vector consisting of expected validation losses on \(M\) tasks in a mini-batch of tasks:
            \begin{equation}
                \bar{\pmb{\ell}}(x) = \begin{bmatrix}
                    \bar{\pmb{\ell}}_{1} (x) & \bar{\pmb{\ell}}_{2} (x) & \ldots & \bar{\pmb{\ell}}_{M} (x)
                \end{bmatrix}^{\top}.
            \end{equation}
            
            From \cref{lemma:smoothness}, the gradient of the validation loss \(\bar{\pmb{\ell}}_{i} \left(\mathbf{x} \right)\) is \(\widetilde{S}\)-Lipschitz continuous. Hence, applying Taylor's theorem gives:
            \begin{equation}
                \bar{\pmb{\ell}}_{i} \left( \mathbf{x}_{t + 1} \right) \le \bar{\pmb{\ell}}_{i} \left( \mathbf{x}_{t} \right) + \grad_{\mathbf{x}}^{\top} \bar{\pmb{\ell}}_{i} \left( \mathbf{x}_{t} \right) \left( \mathbf{x}_{t + 1} - \mathbf{x}_{t} \right) + \frac{\widetilde{S}}{2} \lVert \mathbf{x}_{t + 1} - \mathbf{x}_{t} \rVert^{2}.
            \end{equation}

            Note that \(\mathbf{u}_{ti}\) is constrained to be non-negative as mentioned in \cref{sec:ml2to} or in step~\ref{step:stopping_ilqr} of Algorithm~\ref{algm:task_weighting_ilqr}. Hence, one can multiply both sides by \(\mathbf{u}_{ti} \ge 0, t \in \{1, \ldots, T\}, i \in \{1, \ldots, M\}\) and sum side-by-side to obtain:
            \begin{equation}
                \mathbf{u}_{t}^{\top} \bar{\pmb{\ell}} \left( \mathbf{x}_{t + 1} \right) \le \mathbf{u}_{t}^{\top} \bar{\pmb{\ell}} \left( \mathbf{x}_{t} \right) + \grad_{\mathbf{x}}^{\top} \left[ \mathbf{u}_{t} \bar{\pmb{\ell}} \left( \mathbf{x}_{t} \right) \right] \left( \mathbf{x}_{t + 1} - \mathbf{x}_{t} \right) + \frac{\widetilde{S}}{2} \lVert \mathbf{x}_{t + 1} - \mathbf{x}_{t} \rVert^{2} \pmb{1}^{\top}_{M} \mathbf{u}_{t}.
            \end{equation}

            By conditioning on \(\mathbf{x}_{t}\) and taking the expectation over all data sampled from \(\{ \mathcal{D}_{i}^{(q)} \}_{i=1}^{M}\), which is used to calculate \(\mathbf{x}_{t + 1}\), we can obtain the following:
            \begin{equation}
                \begin{aligned}[b]
                    & \mathbb{E}_{\mathcal{D}_{1:M}^{(q)}} \left[ \mathbf{u}_{t}^{\top} \bar{\pmb{\ell}} \left( \mathbf{x}_{t + 1} \right) | \mathbf{x}_{t} \right] \\
                    & \le \mathbb{E}_{\mathcal{D}_{1:M}^{(q)}} \left[ \mathbf{u}_{t}^{\top} \bar{\pmb{\ell}}\left( \mathbf{x}_{t} \right) + \grad_{\mathbf{x}}^{\top} \left[ \mathbf{u}_{t} \bar{\pmb{\ell}} \left( \mathbf{x}_{t} \right) \right] \left( \mathbf{x}_{t + 1} - \mathbf{x}_{t} \right) \vphantom{\frac{1}{1}} + \left. \frac{\widetilde{S}}{2} \lVert \mathbf{x}_{t + 1} - \mathbf{x}_{t} \rVert^{2} \pmb{1}^{\top}_{M} \mathbf{u}_{t} ~ \right| ~ \mathbf{x}_{t} \right].
                \end{aligned}
            \end{equation}

            Note that for simplicity, we assume that the state-transition dynamics is SGD:
            \begin{equation}
                \mathbf{x}_{t + 1} - \mathbf{x}_{t} = - \alpha \grad_{\mathbf{x}} \left[ \mathbf{u}_{t}^{\top} \pmb{\ell} \left( \mathbf{x}_{t} \right) \right].
            \end{equation}

            Thus, one can simplify further to obtain:
            \begin{equation}
                \begin{aligned}[b]
                    \mathbb{E}_{\mathcal{D}_{1:M}^{(q)}} \left[ \mathbf{u}_{t}^{\top} \bar{\pmb{\ell}} \left( \mathbf{x}_{t + 1} \right) | \mathbf{x}_{t} \right] & \le \mathbb{E}_{\mathcal{D}_{1:M}^{(q)}} \left[ \mathbf{u}_{t}^{\top} \bar{\pmb{\ell}}\left( \mathbf{x}_{t} \right) - \alpha \grad_{\mathbf{x}}^{\top} \left[ \mathbf{u}_{t} \bar{\pmb{\ell}} \left( \mathbf{x}_{t} \right) \right] \grad_{\mathbf{x}} \left[ \mathbf{u}_{t}^{\top} \pmb{\ell} \left( \mathbf{x}_{t} \right) \right] \vphantom{\frac{1}{1}} \right.\\
                    & \qquad \left.\left. + \frac{\alpha^{2} \widetilde{S}}{2} \lVert \grad_{\mathbf{x}} \left[ \mathbf{u}_{t}^{\top} \pmb{\ell} \left( \mathbf{x}_{t} \right) \right] \rVert^{2} \pmb{1}^{\top}_{M} \mathbf{u}_{t} ~ \right| ~ \mathbf{x}_{t} \right].
                \end{aligned}
            \end{equation}
            Note that one can use \cref{eq:true_validation_loss} to imply that: 
            \begin{equation*}
                \grad_{\mathbf{x}} \left[ \mathbf{u}_{t} \bar{\pmb{\ell}} \left( \mathbf{x}_{t} \right) \right] = \mathbb{E}_{\mathcal{S}_{1:M}^{(q)} \sim \mathcal{D}_{1:M}^{(q) \, m_{1:M}^{(q)}}} \left[ \grad_{\mathbf{x}} \left[ \mathbf{u}_{t}^{\top} \pmb{\ell} \left( \mathbf{x}_{t} \right) \right] \right],
            \end{equation*}
            which also implies:
            \begin{equation}
                \grad_{\mathbf{x}} \left[ \mathbf{u}_{t} \bar{\pmb{\ell}} \left( \mathbf{x}_{t} \right) \right] = \mathbb{E}_{\mathcal{D}_{1:M}^{(q)}} \left[ \grad_{\mathbf{x}} \left[ \mathbf{u}_{t}^{\top} \pmb{\ell} \left( \mathbf{x}_{t} \right) \right] \right].
            \end{equation}
            Thus, the above inequality can be written as:
            \begin{equation}
                \begin{aligned}[b]
                    \mathbb{E}_{\mathcal{D}_{1:M}^{(q)}} \left[ \mathbf{u}_{t}^{\top} \bar{\pmb{\ell}} \left( \mathbf{x}_{t + 1} \right) | \mathbf{x}_{t} \right] & \le \mathbf{u}_{t}^{\top} \bar{\pmb{\ell}}\left( \mathbf{x}_{t} \right) - \alpha \left\Vert \grad_{\mathbf{x}} \left[ \mathbf{u}_{t} \bar{\pmb{\ell}} \left( \mathbf{x}_{t} \right) \right] \right\Vert^{2} \\
                    & \quad + \frac{\alpha^{2} \widetilde{S}}{2} \left[ \mathrm{Var} \left[\grad_{\mathbf{x}} \left[ \mathbf{u}_{t}^{\top} \pmb{\ell} \left( \mathbf{x}_{t} \right) \right] \right] + \left\Vert \grad_{\mathbf{x}} \left[ \mathbf{u}_{t} \bar{\pmb{\ell}} \left( \mathbf{x}_{t} \right) \right] \right\Vert^{2} \right] \pmb{1}^{\top}_{M} \mathbf{u}_{t}\\
                    & \qquad \qquad \text{(since \(\mathbb{E}[X^{2}] = \mathrm{Var}[X] + (\mathbb{E}[X])^{2}\) )}\\
                    & \le \mathbf{u}_{t}^{\top} \bar{\pmb{\ell}}\left( \mathbf{x}_{t} \right) - \alpha \left( 1 - \frac{\alpha \widetilde{S}}{2} \pmb{1}^{\top}_{M} \mathbf{u}_{t} \right) \left\Vert \grad_{\mathbf{x}} \left[ \mathbf{u}_{t} \bar{\pmb{\ell}} \left( \mathbf{x}_{t} \right) \right] \right\Vert^{2} \\
                    & \quad + \frac{\alpha^{2} \widetilde{S}}{2} \mathrm{Var} \left[\grad_{\mathbf{x}} \left[ \mathbf{u}_{t}^{\top} \pmb{\ell} \left( \mathbf{x}_{t} \right) \right] \right]\pmb{1}^{\top}_{M} \mathbf{u}_{t}.
                \end{aligned}
            \end{equation}
            
            Since:
            \begin{equation*}
                \begin{dcases}
                    \pmb{1}^{\top}_{M} \mathbf{u}_{t} & \le \delta \sqrt{M} + 1 \quad \text{(\cref{corollary:bound_norm_u} in \appendixname~\ref{sec:boundedness_action})}\\
                    \mathrm{Var} \left[\grad_{\mathbf{x}} \left[ \mathbf{u}_{t}^{\top} \pmb{\ell} \left( \mathbf{x}_{t} \right) \right] \right] & \le \widetilde{\sigma}^{2} \qquad \qquad\text{(\cref{lemma:bounded_variance_weighted_loss})}
                \end{dcases}
            \end{equation*}
            then:
            \begin{equation}
                \begin{aligned}[b]
                    \mathbb{E}_{\mathcal{D}_{1:M}^{(q)}} \left[ \mathbf{u}_{t}^{\top} \bar{\pmb{\ell}} \left( \mathbf{x}_{t + 1} \right) | \mathbf{x}_{t} \right] & \le \mathbf{u}_{t}^{\top} \bar{\pmb{\ell}}\left( \mathbf{x}_{t} \right) - \alpha \left[ 1 - \frac{\alpha \widetilde{S}}{2} \left( \delta \sqrt{M} + 1 \right) \right] \left\Vert \grad_{\mathbf{x}} \left[ \mathbf{u}_{t} \bar{\pmb{\ell}} \left( \mathbf{x}_{t} \right) \right] \right\Vert^{2} \\
                    & \quad + \frac{\alpha^{2} \widetilde{\sigma}^{2} \widetilde{S}}{2} \left( \delta \sqrt{M} + 1 \right).
                \end{aligned}
            \end{equation}

            Re-arranging the gradient norm of the weighted validation loss to the left-hand side gives:
            \begin{equation}
                \begin{aligned}[b]
                    \alpha \left[ 1 - \frac{\alpha \widetilde{S}}{2} \left( \delta \sqrt{M} + 1 \right) \right] \left\Vert \grad_{\mathbf{x}} \left[ \mathbf{u}_{t} \bar{\pmb{\ell}} \left( \mathbf{x}_{t} \right) \right] \right\Vert^{2} & \le \mathbf{u}_{t}^{\top} \bar{\pmb{\ell}}\left( \mathbf{x}_{t} \right) - \mathbb{E}_{\mathcal{D}_{1:M}^{(q)}} \left[ \mathbf{u}_{t}^{\top} \bar{\pmb{\ell}} \left( \mathbf{x}_{t + 1} \right) | \mathbf{x}_{t} \right] \\
                    & \quad + \frac{\alpha^{2} \widetilde{\sigma}^{2} \widetilde{S}}{2} \left( \delta \sqrt{M} + 1\right).
                \end{aligned}
                \label{eq:proof_70}
            \end{equation}

            The right-hand-side, excluding the constant term at the end, can be written as:
            \begin{equation}
                \begin{aligned}[b]
                    \mathbf{u}_{t}^{\top} \bar{\pmb{\ell}}\left( \mathbf{x}_{t} \right) - \mathbb{E}_{\mathcal{D}_{1:M}^{(q)}} \left[ \mathbf{u}_{t}^{\top} \bar{\pmb{\ell}} \left( \mathbf{x}_{t + 1} \right) | \mathbf{x}_{t} \right] & = \left[ \mathbf{u}_{t}^{\top} \bar{\pmb{\ell}}\left( \mathbf{x}_{t} \right) - \mathbb{E}_{\mathcal{D}_{1:M}^{(q)}} \left[ \mathbf{u}_{t + 1}^{\top} \bar{\pmb{\ell}}\left( \mathbf{x}_{t + 1} \right) | \mathbf{x}_{t} \right] \right]\\
                    & \quad + \left[ \mathbb{E}_{\mathcal{D}_{1:M}^{(q)}} \left[ \mathbf{u}_{t + 1}^{\top} \bar{\pmb{\ell}}\left( \mathbf{x}_{t + 1} \right) | \mathbf{x}_{t} \right] - \mathbb{E}_{\mathcal{D}_{1:M}^{(q)}} \left[ \mathbf{u}_{t}^{\top} \bar{\pmb{\ell}} \left( \mathbf{x}_{t + 1} \right) | \mathbf{x}_{t} \right] \right]
                \end{aligned}
            \end{equation}

            The second part in the right-hand side of the above expression can be upper-bounded as:
            \begin{equation}
                \begin{aligned}[b]
                    & \mathbb{E}_{\mathcal{D}_{1:M}^{(q)}} \left[ \mathbf{u}_{t + 1}^{\top} \bar{\pmb{\ell}}\left( \mathbf{x}_{t + 1} \right) | \mathbf{x}_{t} \right] - \mathbb{E}_{\mathcal{D}_{1:M}^{(q)}} \left[ \mathbf{u}_{t}^{\top} \bar{\pmb{\ell}} \left( \mathbf{x}_{t + 1} \right) | \mathbf{x}_{t} \right]\\
                    & = \mathbb{E}_{\mathcal{D}_{1:M}^{(q)}} \left[ \left( \mathbf{u}_{t + 1} - \mathbf{u}_{t} \right)^{\top} \bar{\pmb{\ell}} \left( \mathbf{x}_{t + 1} \right) | \mathbf{x}_{t} \right]\\
                    & \le \mathbb{E}_{\mathcal{D}_{1:M}^{(q)}} \left[ \left\Vert \mathbf{u}_{t + 1} - \mathbf{u}_{t} \right\Vert \left. \sqrt{\sum_{i = 1}^{M} \bar{\pmb{\ell}}_{i}^{2} \left( \mathbf{x}_{t + 1} \right)} ~ \right| \mathbf{x}_{t} \right] \quad \text{(Cauchy-Schwarz inequality)}\\
                    & \le B \sqrt{M} ~ \mathbb{E}_{\mathcal{D}_{1:M}^{(q)}} \left[ \left\Vert \mathbf{u}_{t + 1} - \mathbf{u}_{t} \right\Vert | \mathbf{x}_{t} \right] \quad \text{(\(\ell\) is \(B\)-bounded, and hence, \(\bar{\pmb{\ell}}_{i}\) is \(B\)-bounded)}\\
                    & \le 2\delta B \sqrt{M} \quad \text{(\cref{corollary:bound_norm_u})}.
                \end{aligned}
            \end{equation}
    
            Hence, one can upper-bound further the right-hand side of \eqref{eq:proof_70}, resulting in:
            \begin{equation}
                \begin{aligned}[b]
                    \alpha \left[ 1 - \frac{\alpha \widetilde{S}}{2} \left( \delta \sqrt{M} + 1 \right) \right] \left\Vert \grad_{\mathbf{x}} \left[ \mathbf{u}_{t} \bar{\pmb{\ell}} \left( \mathbf{x}_{t} \right) \right] \right\Vert^{2} & \le \mathbf{u}_{t}^{\top} \bar{\pmb{\ell}}\left( \mathbf{x}_{t} \right) - \mathbb{E}_{\mathcal{D}_{1:M}^{(q)}} \left[ \mathbf{u}_{t + 1}^{\top} \bar{\pmb{\ell}}\left( \mathbf{x}_{t + 1} \right) | \mathbf{x}_{t} \right] \\
                    & \quad + 2\delta B \sqrt{M} + \frac{\alpha^{2} \widetilde{\sigma}^{2} \widetilde{S}}{2} \left( \delta \sqrt{M} + 1 \right).
                \end{aligned}
                \label{eq:equation_86}
            \end{equation}

            Note that the recursive gradient update is:
            \begin{equation*}
                \begin{aligned}[b]
                    \mathbf{x}_{t + 1} & = \mathbf{x}_{t} - \alpha \grad_{\mathbf{x}} \left[ \mathbf{u}_{t }^{\top} \pmb{\ell} \left( \mathbf{x}_{t} \right) \right]\\
                    \mathbf{x}_{t} & = \mathbf{x}_{t - 1} - \alpha \grad_{\mathbf{x}} \left[ \mathbf{u}_{t - 1}^{\top} \pmb{\ell} \left( \mathbf{x}_{t - 1} \right) \right]\\
                    & \vdots\\
                    \mathbf{x}_{2} & = \mathbf{x}_{1} - \alpha \grad_{\mathbf{x}} \left[ \mathbf{u}_{1}^{\top} \pmb{\ell} \left( \mathbf{x}_{1} \right) \right].
                \end{aligned}
            \end{equation*}

            We take the expectation over all the mini-batches used at time step \(t, t - 1, \ldots, 1\) to remove the conditioning in \eqref{eq:equation_86}. We denote this expectation as \(\mathbb{E}_{\mathcal{D}_{1:M}^{(q)\, t}}\), where the superscript \(t\) indicates the power, meaning that the expectation is carried out over \(t\) mini-batches that are used to calculate the state \(\mathbf{x}_{t + 1}\) from \(\mathbf{x}_{1}\). This results in:
            \begin{equation}
                \begin{aligned}[b]
                    & \alpha \left[ 1 - \frac{\alpha \widetilde{S}}{2} \left( \delta \sqrt{M} + 1 \right) \right] \mathbb{E}_{\mathcal{D}_{1:M}^{(q) \, t}} \left[ \left\Vert \grad_{\mathbf{x}} \left[ \mathbf{u}_{t} \bar{\pmb{\ell}} \left( \mathbf{x}_{t} \right) \right] \right\Vert^{2} \right] \\
                    & \le \mathbb{E}_{\mathcal{D}_{1:M}^{(q) \, t}} \left[ \mathbf{u}_{t}^{\top} \bar{\pmb{\ell}}\left( \mathbf{x}_{t} \right) \right] - \mathbb{E}_{\mathcal{D}_{1:M}^{(q) \, t}} \left[ \left. \mathbf{u}_{t + 1}^{\top} \bar{\pmb{\ell}}\left( \mathbf{x}_{t + 1} \right) \right| \mathbf{x}_{1} \right]  + 2\delta B \sqrt{M} + \frac{\alpha^{2} \widetilde{\sigma}^{2} \widetilde{S}}{2} \left( \delta \sqrt{M} + 1 \right).
                \end{aligned}
                \label{eq:proof_74}
            \end{equation}

            Summing \eqref{eq:proof_74} from \(t = 1\) to \(T_{\mathrm{iter}}\) gives:
            \begin{equation}
                \begin{aligned}[b]
                    & \alpha \left[ 1 - \frac{\alpha \widetilde{S}}{2} \left( \delta \sqrt{M} + 1 \right) \right] \sum_{t = 1}^{T_{\mathrm{iter}}} \mathbb{E}_{\mathcal{D}_{1:M}^{(q) \, t}} \left[ \norm{ \grad_{\mathbf{x}} \left[ \mathbf{u}_{t} \bar{\pmb{\ell}} \left( \mathbf{x}_{t} \right) \right] }^{2} \right] \\
                    & \le  \mathbf{u}_{1}^{\top} \bar{\pmb{\ell}}\left( \mathbf{x}_{1} \right) - \mathbb{E}_{\mathcal{D}_{1:M}^{(q) \, T_{\mathrm{iter}}}} \left[ \mathbf{u}_{T_{\mathrm{iter}} + 1}^{\top} \bar{\pmb{\ell}}\left( \mathbf{x}_{T_{\mathrm{iter}} + 1} \right) \right] + T_{\mathrm{iter}} \left[ 2\delta B \sqrt{M} \vphantom{\frac{1}{1}} + \frac{\alpha^{2} \widetilde{\sigma}^{2} \widetilde{S}}{2} \left( \delta \sqrt{M} + 1 \right) \right].
                \end{aligned}
            \end{equation}

            Note that \(\mathbf{u}_{ti} \ge 0 ~ \forall t \in \{1, \ldots, T_{\mathrm{iter}}, i \in \{1, \ldots, M\}\), and \(\bar{\pmb{\ell}}_{i} \ge 0\) due to the non-negativity of the loss function \(\ell\) assumed in \eqref{eq:validation_loss}. Hence, we can upper-bound further the right-hand side to:
            \begin{equation}
                \begin{aligned}[b]
                    & \alpha \left[ 1 - \frac{\alpha \widetilde{S}}{2} \left( \delta \sqrt{M} + 1 \right) \right] \sum_{t=1}^{T_{\mathrm{iter}}} \mathbb{E}_{\mathcal{D}_{1:M}^{(q) \, t}} \left[ \norm{ \grad_{\mathbf{x}} \left[ \mathbf{u}_{t} \bar{\pmb{\ell}} \left( \mathbf{x}_{t} \right) \right] }^{2} \right] \\
                    & \le  \mathbf{u}_{1}^{\top} \bar{\pmb{\ell}}\left( \mathbf{x}_{1} \right) + T_{\mathrm{iter}} \left[ 2\delta B \sqrt{M} \vphantom{\frac{1}{1}} + \frac{\alpha^{2} \widetilde{\sigma}^{2} \widetilde{S}}{2} \left( \delta \sqrt{M} + 1 \right) \right].
                \end{aligned}
            \end{equation}

            If the learning rate \(\alpha\) is selected such that: \(1 - \nicefrac{\alpha \widetilde{S}}{2} \left( \delta \sqrt{M} + 1 \right) > 0 \), then dividing both sides by \(\alpha T_{\mathrm{iter}} \left[ 1 - \nicefrac{\alpha \widetilde{S}}{2} \times \left( \delta \sqrt{M} + 1 \right) \right] \) gives:
            \begin{equation}
                \begin{aligned}[b]
                    \frac{1}{T_{\mathrm{iter}}} \sum_{t=1}^{T_{\mathrm{iter}}} \mathbb{E}_{\mathcal{D}_{1:M}^{(q) \, t}} \left[ \norm{ \grad_{\mathbf{x}} \left[ \mathbf{u}_{t} \bar{\pmb{\ell}} \left( \mathbf{x}_{t} \right) \right] }^{2} \right] & \le \frac{2 \mathbf{u}_{1}^{\top} \bar{\pmb{\ell}}\left( \mathbf{x}_{1} \right) + T_{\mathrm{iter}} \left[ 4 \delta B \sqrt{M} + \alpha^{2} \widetilde{\sigma}^{2} \widetilde{S} \left( \delta \sqrt{M} + 1 \right) \right]}{\alpha T_{\mathrm{iter}} \left[ 2 - \alpha \widetilde{S} \left( \delta \sqrt{M} + 1 \right) \right]}.
                \end{aligned}
                \label{eq:proof_77}
            \end{equation}

            Next, we use a similar trick in the SVRG paper~\citep{johnson2013accelerating} to make the left-hand side term useful. The idea is to output some randomly chosen \(\mathbf{x}_{t}\) rather than outputing \(\mathbf{x}_{T_{\mathrm{iter}}}\). For simplicity, let \(\mathbf{z} = \mathbf{x}_{t}\) with a uniform probability for \(t \in \{1, \ldots, T_{\mathrm{iter}}\}\). The expectation of the gradient norm in this case can be expressed as:
            \begin{equation}
                \mathbb{E}_{\mathbf{z} \sim \{ \mathbf{x}_{t} \}_{t = 1}^{T_{\mathrm{iter}}} } \left[ \mathbb{E}_{\mathcal{D}_{1:M}^{(q) \, t}} \left[ \left\Vert \grad_{\mathbf{x}} \left[ \mathbf{u}_{t} \bar{\pmb{\ell}} \left( \mathbf{x}_{t} \right) \right] \right\Vert^{2} \right] \right] = \frac{1}{T_{\mathrm{iter}}} \sum_{t=1}^{T_{\mathrm{iter}}} \mathbb{E}_{\mathcal{D}_{1:M}^{(q) \, t}} \left[ \left\Vert \grad_{\mathbf{x}} \left[ \mathbf{u}_{t} \bar{\pmb{\ell}} \left( \mathbf{x}_{t} \right) \right] \right\Vert^{2} \right].
            \end{equation}
            This combining with \eqref{eq:proof_77} leads to:
            \begin{equation}
                \begin{aligned}[b]
                    \mathbb{E}_{\mathbf{z}  \sim \{ \mathbf{x}_{t} \}_{t = 1}^{T_{\mathrm{iter}}} } \left[ \mathbb{E}_{\mathcal{D}_{1:M}^{(q) \, t}} \left[ \left\Vert \grad_{\mathbf{z}} \left[ \mathbf{u}_{t} \bar{\pmb{\ell}} \left( \mathbf{z} \right) \right] \right\Vert^{2} \right] \right] & \le \frac{2 \mathbf{u}_{1}^{\top} \bar{\pmb{\ell}}\left( \mathbf{x}_{1} \right) + T_{\mathrm{iter}} \left[ 4 \delta B \sqrt{M} + \alpha^{2} \widetilde{\sigma}^{2} \widetilde{S} \left( \delta \sqrt{M} + 1 \right) \right]}{\alpha T_{\mathrm{iter}} \left[ 2 - \alpha \widetilde{S} \left( \delta \sqrt{M} + 1 \right) \right]}.
                \end{aligned}
            \end{equation}
        \end{proof}

    \subsection{Miscellaneous lemmas}
    \label{sec:miscellaneous_lemmas}
        % \begin{lemma}
        %     If \(X_{i}, \forall i \in \{1, \ldots, n\}\) are random variables with finite variance: \(\mathrm{Var}\left( X_{i} \right) < +\infty\), then:
        %     \begin{equation*}
        %         \mathrm{Var} \left( \sum_{i=1}^{n} X_{i} \right) \le n \sum_{i=1}^{n} \mathrm{Var} \left( X_{i} \right).
        %     \end{equation*}
        %     \label[lemma]{lemma:variance_sum_inequality}
        % \end{lemma}
        % \begin{proof}
        %     Applying Cauchy-Schwarz inequality to the random quantities \(X_{i} - \mathbb{E} \left[ X_{i} \right] \) yields that:
        %     \begin{equation}
        %         \left( \sum_{i = 1}^{n} X_{i} - \mathbb{E} \left[ X_{i} \right] \right)^{2} \le n \sum_{i = 1}^{n} \left( X_{i} - \mathbb{E} \left[ X_{i} \right] \right)^{2}.
        %     \end{equation}
        %     Taking the expectation over all \(X_{i}\) both sides gives:
        %     \begin{equation}
        %         \mathbb{E} \left[ \left( \sum_{i = 1}^{n} X_{i} - \mathbb{E} \left[ X_{i} \right] \right)^{2} \right] \le n \mathbb{E} \left[ \sum_{i = 1}^{n} \left( X_{i} - \mathbb{E} \left[ X_{i} \right] \right)^{2} \right].
        %     \end{equation}
        %     Using the definition of variance to the above inequality completes the proof.
        % \end{proof}

        \begin{lemma}
            \label[lemma]{lemma:eigen_value_shift}
            If \(\lambda\) is an eigenvalue of matrix \(\mathbf{A}\), then \(\lambda - 1\) is an eigenvalue of matrix \(\mathbf{A} - \mathbf{I}\), where \(\mathbf{I}\) is the identity matrix.
        \end{lemma}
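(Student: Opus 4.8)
The plan is to argue directly from the definition of an eigenvalue via an eigenvector, rather than through characteristic polynomials, since the eigenvector carries over verbatim and makes the argument a one-liner. First I would let $\mathbf{v} \neq \mathbf{0}$ be an eigenvector of $\mathbf{A}$ associated with $\lambda$, so that $\mathbf{A}\mathbf{v} = \lambda \mathbf{v}$ by definition of $\lambda$ being an eigenvalue.

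Next I would simply compute the action of $\mathbf{A} - \mathbf{I}$ on the same vector:
\begin{equation*}
    (\mathbf{A} - \mathbf{I})\mathbf{v} = \mathbf{A}\mathbf{v} - \mathbf{I}\mathbf{v} = \lambda \mathbf{v} - \mathbf{v} = (\lambda - 1)\mathbf{v}.
\end{equation*}
Since $\mathbf{v} \neq \mathbf{0}$, this exhibits $\mathbf{v}$ as an eigenvector of $\mathbf{A} - \mathbf{I}$ with eigenvalue $\lambda - 1$, which is exactly the claim. (As a by-product, the eigenspace of $\mathbf{A}$ for $\lambda$ is contained in — in fact equal to, by symmetry of the argument applied to $(\mathbf{A}-\mathbf{I}) + \mathbf{I}$ — the eigenspace of $\mathbf{A}-\mathbf{I}$ for $\lambda-1$, so multiplicities are preserved, though the statement as phrased does not require this.)

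There is essentially no obstacle here: the only thing to be mildly careful about is to invoke the existence of a nonzero eigenvector rather than working with $\det(\mathbf{A} - \lambda\mathbf{I}) = 0$, because the substitution $\mathbf{B} = \mathbf{A} - \mathbf{I}$ into a determinant identity would require the extra remark that $\det(\mathbf{B} - (\lambda-1)\mathbf{I}) = \det(\mathbf{A} - \lambda\mathbf{I})$, whereas the eigenvector route avoids this bookkeeping entirely. I would keep the proof to the two displayed lines above.
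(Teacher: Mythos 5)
Your proof is correct, but it takes a different route from the paper's. The paper argues via the characteristic polynomial: it starts from $\det(\mathbf{A} - \lambda\mathbf{I}) = 0$ and simply rewrites the matrix inside the determinant as $(\mathbf{A} - \mathbf{I}) - (\lambda - 1)\mathbf{I}$, concluding that $\lambda - 1$ is an eigenvalue of $\mathbf{A} - \mathbf{I}$. This is precisely the determinant substitution you mention and set aside; the ``extra remark'' you worry about is in fact a literal identity of matrices ($\mathbf{A} - \lambda\mathbf{I} = (\mathbf{A}-\mathbf{I}) - (\lambda-1)\mathbf{I}$), so the paper's version involves no real bookkeeping either. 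Your eigenvector argument is equally short and buys slightly more: it exhibits an explicit shared eigenvector, and your parenthetical observation that the eigenspaces coincide (hence geometric multiplicities are preserved) is a genuine strengthening that the determinant route does not immediately give. For the paper's purposes --- shifting the spectrum of a Hessian to bound $\lVert \mathbf{I} - \gamma\,\mathbb{E}[\grad_{\mathbf{x}}^{2}\ell] \rVert$ by its largest eigenvalue --- either argument suffices.
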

        \begin{proof}
            According to the definition of eigenvalue, \(\lambda\) is an eigenvalue of \(\mathbf{A}\) if:
            \begin{equation*}
                \det \left( \mathbf{A} - \lambda \mathbf{I} \right) = 0.
            \end{equation*}
            Rewriting the above equation gives:
            \begin{equation}
                \det \left( \left(\mathbf{A} - \mathbf{I} \right) - \left(\lambda - 1 \right) \mathbf{I} \right) = 0.
            \end{equation}
            Hence, \(\lambda - 1\) is an eigenvalue of \(\mathbf{A} - \mathbf{I}\).
        \end{proof}

        \begin{lemma}[Adpated from \url{https://math.stackexchange.com/a/4303207/274798}] \ \\
            \label[lemma]{lemma:lipschitz_to_bounded_grad}
            If a function \(f: \mathbb{R}^{n} \to \mathbb{R}^{m}\), where \(n, m \in \mathbb{N}\), is differentiable and \(L\)-Lipschitz, then its gradient norm is bounded by \(L\).
        \end{lemma}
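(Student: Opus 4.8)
The plan is to argue directly from the definition of the (Fréchet) derivative together with the Lipschitz bound, reading ``gradient norm'' as the operator norm $\norm{Df(\mathbf{x})}$ of the Jacobian $Df(\mathbf{x}) \in \mathbb{R}^{m\times n}$ induced by the Euclidean norms (this is the matrix norm introduced in the Notations subsection, and it reduces to $\norm{\grad f(\mathbf{x})}$ when $m=1$). First I would fix an arbitrary point $\mathbf{x}\in\mathbb{R}^{n}$ and an arbitrary unit vector $\mathbf{v}\in\mathbb{R}^{n}$, and write the directional derivative as the limit of difference quotients,
\begin{equation*}
    Df(\mathbf{x})\,\mathbf{v} = \lim_{t\to 0}\frac{f(\mathbf{x}+t\mathbf{v}) - f(\mathbf{x})}{t},
\end{equation*}
which exists precisely because $f$ is differentiable at $\mathbf{x}$.

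Next I would invoke the $L$-Lipschitz hypothesis on the numerator: for every $t\neq 0$,
\begin{equation*}
    \norm{\frac{f(\mathbf{x}+t\mathbf{v}) - f(\mathbf{x})}{t}} = \frac{\norm{f(\mathbf{x}+t\mathbf{v}) - f(\mathbf{x})}}{|t|} \le \frac{L\,\norm{t\mathbf{v}}}{|t|} = L\norm{\mathbf{v}} = L.
\end{equation*}
Since the Euclidean norm is continuous, letting $t\to 0$ preserves this inequality, so $\norm{Df(\mathbf{x})\,\mathbf{v}} \le L$ for every unit vector $\mathbf{v}$. Taking the supremum over all unit $\mathbf{v}$ then gives $\norm{Df(\mathbf{x})} = \sup_{\norm{\mathbf{v}}=1}\norm{Df(\mathbf{x})\,\mathbf{v}} \le L$, and as $\mathbf{x}$ was arbitrary the gradient norm of $f$ is bounded by $L$ everywhere; specializing to $m=1$ yields $\norm{\grad f(\mathbf{x})}\le L$, which is the claim used at Eq.~\eqref{eq:bounded_grad}.

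I do not expect any genuine obstacle here: the argument is essentially one line of estimation. The only points that deserve a little care are (i) stating explicitly which norm on the Jacobian is meant, so that the bound is interpreted consistently with the rest of the appendix, and (ii) noting that it is the assumed differentiability that licenses passing from a uniform bound on difference quotients to a bound on the derivative itself — without it one would at best get a bound almost everywhere (via Rademacher's theorem), a level of generality that is not needed here.
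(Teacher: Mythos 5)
Your proposal is correct and takes essentially the same route as the paper: the paper also bounds the operator norm by writing $\norm{\grad f(x)\,(\lambda v)}/\norm{\lambda v}$ as $\lambda\to 0$, splitting off the Fr\'{e}chet remainder via the triangle inequality and bounding the difference quotient by $L$ using the Lipschitz hypothesis, which is the same estimate you perform through the directional-derivative limit. Your phrasing is marginally cleaner (no explicit remainder term), but the key ingredients and the order of quantifiers are identical.
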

        \begin{proof}
            According to the definition of vector or matrix norm:
            \begin{equation}
                \begin{split}
                    \norm{\grad f(x)} & = \sup_{v} \frac{\norm{\grad f(x) \, v}}{\norm{v}}, \forall x, v \in \mathbb{R}^{n}: \norm{v} \neq 0\\
                    & = \sup_{v} \lim_{\lambda \to 0} \frac{\abs{\lambda} \norm{\grad f(x) \, v}}{\abs{\lambda} \norm{v}}, ~\lambda \in \mathbb{R}, \lambda \neq 0\\
                    & = \sup_{v} \lim_{\lambda \to 0} \frac{ \norm{\grad f(x) \, (\lambda v)}}{\norm{\lambda v}}.
                \end{split}
            \end{equation}
            Applying the triangle inequality gives:
            \begin{equation}
                \begin{split}
                    \norm{\grad f(x)} & \le \sup_{v} \lim_{\lambda \to 0} \frac{\norm{f(x + \lambda v) - f(x) - \grad f(x) \, (\lambda v)}}{\norm{\lambda v}} + \frac{\norm{f(x + \lambda v) - f(x)}}{\norm{\lambda v}}\\
                    & \le \sup_{v} \lim_{\lambda \to 0} \frac{\norm{f(x + \lambda v) - f(x) - \grad f(x) \, (\lambda v)}}{\norm{\lambda v}} + \frac{L \norm{x + \lambda v - x}}{\norm{\lambda v}} ~\text{(\(f\) is \(L\)-Lipschitz)}\\
                    & \le L.
                \end{split}
            \end{equation}
            The first term equals to 0 since it corresponds to the definition of Fr\'{e}chet derivative.
        \end{proof}

        \begin{lemma}[Adapted from \url{https://math.stackexchange.com/a/2193914/274798}]\ \\
            If \(A \in \mathbb{R}^{n \times n}\) is a positive definite symmetric matrix, then its largest eigenvalue is
            \begin{equation}
                \lambda_{\mathrm{max}} = \max \qty{\frac{\norm{A \, x}}{\norm{x}}: x \neq 0}.
            \end{equation}
            \label[lemma]{lemma:norm_max_eigenvalue}
        \end{lemma}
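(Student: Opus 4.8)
The plan is to invoke the spectral theorem for the symmetric matrix $A$ and thereby reduce the computation of $\sup_{x \neq 0} \norm{A x}/\norm{x}$ to a statement about eigenvalues. First I would write $A = Q \Lambda Q^{\top}$ with $Q$ orthogonal and $\Lambda = \mathrm{diag}(\lambda_{1}, \ldots, \lambda_{n})$; positive definiteness guarantees every $\lambda_{d} > 0$, and I relabel so that $\lambda_{\mathrm{max}} = \max_{d} \lambda_{d}$.

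Next, for an arbitrary $x \neq 0$ I set $y = Q^{\top} x$. Orthogonality of $Q$ gives $\norm{y} = \norm{x} \neq 0$, and then
\begin{equation*}
    \norm{A x}^{2} = \norm{Q \Lambda y}^{2} = \norm{\Lambda y}^{2} = \sum_{d} \lambda_{d}^{2} y_{d}^{2} \le \lambda_{\mathrm{max}}^{2} \sum_{d} y_{d}^{2} = \lambda_{\mathrm{max}}^{2} \norm{x}^{2}.
\end{equation*}
Dividing by $\norm{x}^{2}$ and taking square roots yields $\norm{A x}/\norm{x} \le \lambda_{\mathrm{max}}$ for every nonzero $x$, so the supremum over $x \neq 0$ is at most $\lambda_{\mathrm{max}}$. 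To show this bound is attained, I take a unit eigenvector $v$ with $A v = \lambda_{\mathrm{max}} v$; since $\lambda_{\mathrm{max}} > 0$ we get $\norm{A v}/\norm{v} = \lambda_{\mathrm{max}} \norm{v}/\norm{v} = \lambda_{\mathrm{max}}$, so the value $\lambda_{\mathrm{max}}$ is achieved and the supremum is in fact a maximum, matching the statement.

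The argument is essentially routine; the only two points needing care are (i) using orthogonality of $Q$ to preserve norms under the change of variables $y = Q^{\top} x$, and (ii) invoking positive definiteness so that $\lambda_{\mathrm{max}}$ itself — not $\abs{\lambda_{\mathrm{max}}}$ — equals the induced norm, which is exactly the form in which the lemma is applied in the proof of \cref{lemma:smoothness}. An equivalent one-line route, which I would mention as an alternative, is to observe that $A^{\top} A = A^{2}$ is symmetric positive definite with eigenvalues $\lambda_{d}^{2}$, so the induced $2$-norm of $A$ equals $\sqrt{\lambda_{\mathrm{max}}(A^{2})} = \lambda_{\mathrm{max}}$.
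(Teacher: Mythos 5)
Your proof is correct and follows essentially the same route as the paper's: both diagonalise $A$ via the spectral theorem (your change of variables $y = Q^{\top}x$ is just the paper's expansion $x = \sum_{i} c_{i} v_{i}$ in the orthonormal eigenbasis), bound $\norm{Ax}^{2}/\norm{x}^{2}$ by $\lambda_{\mathrm{max}}^{2}$, and note that equality is attained at the corresponding eigenvector. Your explicit remark that positive definiteness lets you write $\lambda_{\mathrm{max}}$ rather than $\abs{\lambda_{\mathrm{max}}}$ is a point the paper leaves implicit, but the argument is the same.
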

        \begin{proof}
            According to the spectral theorem, if \(A\) is a real and symmetric matrix, then there exists an orthonormal basis consisting of eigenvectors of \(A\), denoted as \(v_{1}, v_{2}, \ldots, v_{n}\). Without loss of generality, let's assume that the corresponding eigenvalues are: \(\lambda_{1}, \lambda_{2}, \ldots, \lambda_{n}\).

            For any non-zero vector \(x\), we can represent it in this orthonormal basis as:
            \begin{equation}
                x = \sum_{i = 1}^{n} c_{i} v_{i}, ~c_{i} \in \mathbb{R}.
            \end{equation}

            Then, the function of interest can be calculated as follows:
            \begin{equation}
                \begin{split}
                    \frac{\norm{A \, x}}{\norm{x}} & = \frac{\norm{\sum_{i = 1}^{n} c_{i} \lambda_{i} v_{i}}}{\sqrt{\sum_{i = 1}^{n} c_{i}^{2}}}\\
                    & =\sqrt{\frac{\sum_{i = 1}^{n} c_{i}^{2} \lambda_{i}^{2}}{\sum_{i = 1}^{n} c_{i}^{2}}} \quad \text{(\(\{v_{i}\}_{i = 1}^{n}\) is an orthonormal basis)}\\
                    & \le \max_{i \in \{1, \ldots, n\}} \lambda_{i}.
                \end{split}
            \end{equation}
            The equality occurs when the vector \(x\) equals to the corresponding eigenvector of the eigenvalue \(\max_{i \in \{1, \ldots, n\}} \lambda_{i} \).
        \end{proof}
\newpage
\section{Additional results when calculating with full matrix \texorpdfstring{\(\mathbf{V}_{t}\)}{Vt}}
\label{apdx:full_matrix_results}
    In this \appendixname, we provide additional results of prediction accuracy on tasks formed from the evaluation sets of Omniglot and mini-ImageNet. These results are based on the naive implementation of iLQR where the auxiliary Hessian matrix \(\mathbf{V}_{t}\) of the \emph{cost-to-go} in \eqref{eq:cost_to_go} is exact without any approximation. We note that due to the quadratic complexity of running time, we cannot train the one on mini-ImageNet until convergence. The running time using the full matrix \(\mathbf{V}_{t}\) is 160 GPU-hour for Omniglot and 184 GPU-hour for mini-ImageNet.
    
    \begin{figure}[ht]
        \centering
        \begin{subfigure}[b]{0.45 \linewidth}
            \centering
            \includegraphics[width = \linewidth]{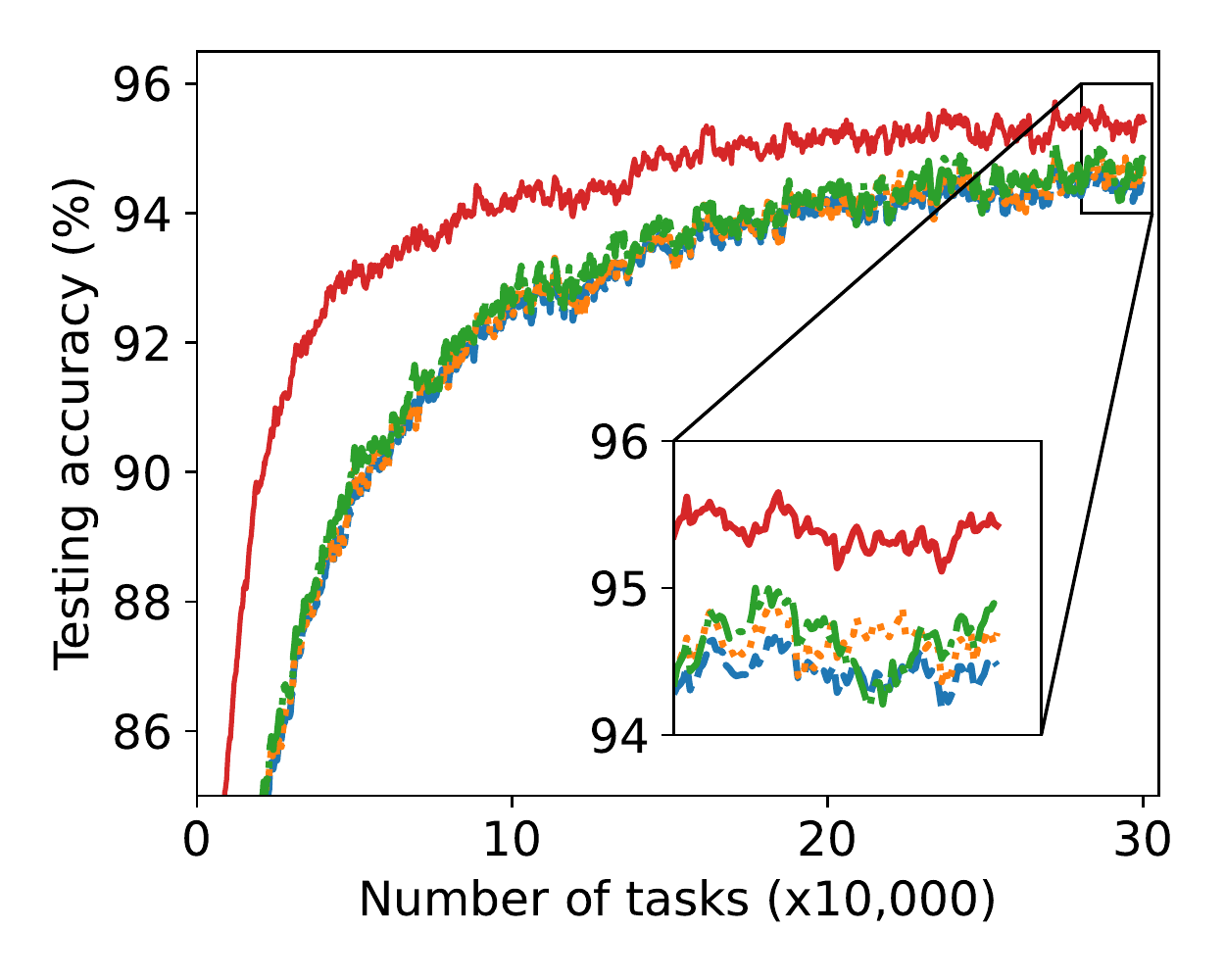}
            \caption{MAML on Omniglot}
        \end{subfigure}
        \hspace{3em}
        % \begin{subfigure}[b]{0.45 \linewidth}
        %     \centering
        %     \includegraphics[width = \linewidth]{img/protonet_omniglot_test_accuracy_square.pdf}
        %     \caption{Protonet on Omniglot}
        % \end{subfigure}
        \begin{subfigure}[b]{0.45 \linewidth}
            \centering
            \includegraphics[width = \linewidth]{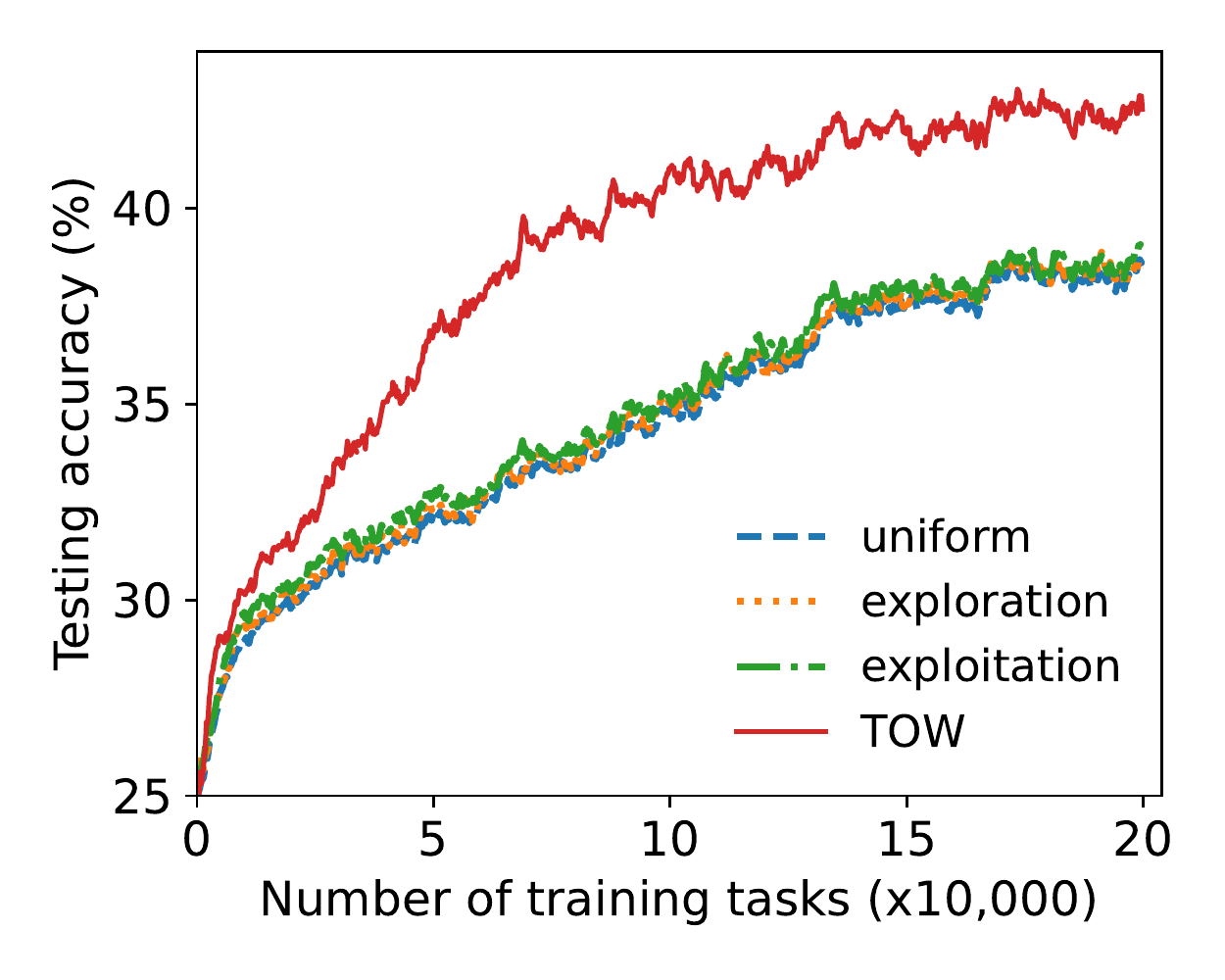}
            \caption{MAML on mini-ImageNet}
        \end{subfigure}
        % \hspace{3em}
        % \begin{subfigure}[b]{0.45 \linewidth}
        %     \centering
        %     \includegraphics[width = \linewidth, draft]{img/maml_miniImageNet_validation_accuracy_full.pdf}
        %     \caption{Protonet on mini-ImageNet}
        % \end{subfigure}
        \caption{Additional results of prediction accuracy on 100 random validation tasks using MAML when the matrix \(\mathbf{V}_{t}\) is exact without any approximation.}
        \label{fig:additional_results_full_matrix_V}
    \end{figure}
\newpage
\section{Visualisation of weight values}
\label{apdx:weight_visualisation}
    To further understand how the weight \(\mathbf{u}_{t}\) varies along the training process, we conduct a study to monitor the weights \(\mathbf{u}_{t}\) of a set of pre-defined Omniglot tasks. In the first setting, we fix \(T = 5\) mini-batches of tasks, each consisting of \(M = 5\) tasks formed from the \emph{same} alphabet. In the second case, the configuration is similar, but each mini-batch contains \(M = 5\) tasks formed from 5 \emph{different} alphabets. For each case, we also run with tasks drawn from training and testing sets. Our desire is to observe how the weight changes and whether there is a difference between training and testing tasks. \revise{We plot the weight for a single task belonging to the \say{controlled} 5-task mini-batch of interest in \cref{fig:weight_visualisation}. The results} show the variation of the weights of the tasks of interest with a common trend among all the tasks considered, in which the weights are large at the beginning and gradually reduce when training progresses. This is, indeed, expected since most tasks are unfamiliar to the model at the early state, and gradually becomes more familiar. In addition, we observe that the weights for testing tasks are slightly larger than the ones for training tasks.
    
    \begin{figure}[ht]
        \centering
        \begin{tikzpicture}
            \pgfplotstableread[col sep=comma, header=false]{results/omniglot_weight_train_same_alphabet.csv} \tableTrainSame
            \pgfplotstableread[col sep=comma, header=false]{results/omniglot_weight_train_different_alphabet.csv} \tableTrainDifferent
            \pgfplotstableread[col sep=comma, header=false]{results/omniglot_weight_test_same_alphabet.csv} \tableTestSame
            \pgfplotstableread[col sep=comma, header=false]{results/omniglot_weight_test_different_alphabet.csv} \tableTestDifferent

            \begin{axis}[
                height = 0.28125\linewidth,
                width = 0.375\linewidth,
                xlabel={\textnumero~of meta-updates (\(\times\)10,000)},
                xlabel style={font=\small},
                xticklabel style = {font=\small},
                ylabel={Weight value},
                ylabel style={font=\small, yshift=-0.5em},
                yticklabel style = {font=\small},
                % ymin=80,
                % ymax=98,
                % % restrict x to domain=0:300,
                scale only axis,
                legend entries={train-same, train-different, test-same, test-different},
                legend style={draw=none, font=\footnotesize, legend columns=2, /tikz/every even column/.append style={column sep=0.25em}},
                legend image post style={scale=1},
                legend cell align={left},
                legend pos=south east
            ]
                \addplot[mark=none, MidnightBlue, very thick] table[x expr=0.5 * (\coordindex + 1), y expr=\thisrowno{0}]{\tableTrainSame};
                \addplot[mark=none, BurntOrange, very thick, dashed] table[x expr=0.5 * (\coordindex + 1), y expr=\thisrowno{0}]{\tableTrainDifferent};
                \addplot[mark=none, ForestGreen, very thick, dotted] table[x expr=0.5 * (\coordindex + 1), y expr=\thisrowno{1}]{\tableTestSame};
                \addplot[mark=none, BrickRed, very thick, dashdotted] table[x expr=0.5 * (\coordindex + 1), y expr=\thisrowno{0}]{\tableTestDifferent};
            \end{axis}
        \end{tikzpicture}
        \caption{Visualisation of the weight values where tasks are drawn from the same Omniglot alphabet (either training or testing set); the notation \emph{same} means that all tasks in a mini-batch are formed from one alphabet, while \emph{different} indicates the mini-batch consists of tasks formed from different alphabets.}
        \label{fig:weight_visualisation}
    \end{figure}
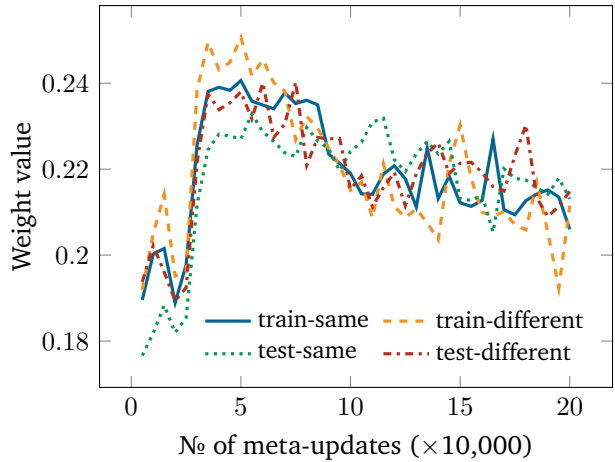

    We also conduct the same ablation study for mini-ImageNet tasks. Since in mini-ImageNet, we do not have any information regarding to the hierarchical structure of classes, we carry out the experiment on training and testing tasks only. One could, of course, utilise the word-net structure to categorise classes into \say{alphabet} as Omniglot, but for the sake of simplicity, we proceed without including such information. \figureautorefname~\ref{fig:weight_visualisation_miniImageNet} shows the weight values of some tasks at each training checkpoint, which also has a similar but noisier trend as the ones in Omniglot.
    
    \begin{figure}[ht]
        \centering
        \begin{tikzpicture}
            \pgfplotstableread[col sep=comma, header=false]{results/miniImageNet_weight_train.csv} \tableTrain
            \pgfplotstableread[col sep=comma, header=false]{results/miniImageNet_weight_test.csv} \tableTest

            \begin{axis}[
                height = 0.28125\linewidth,
                width = 0.375\linewidth,
                xlabel={\textnumero~of meta-updates (\(\times\)10,000)},
                xlabel style={font=\small},
                xticklabel style = {font=\small},
                ylabel={Weight value},
                ylabel style={font=\small, yshift=-0.5em},
                yticklabel style = {font=\small},
                % ymin=80,
                % ymax=98,
                % restrict x to domain=0:30,
                scale only axis,
                legend entries={train, test},
                legend style={draw=none, font=\footnotesize, legend columns=1, /tikz/every even column/.append style={column sep=0.25em}},
                legend image post style={scale=1},
                legend cell align={left},
                legend pos=south west
            ]
                \addplot[mark=none, BrickRed, very thick] table[x expr=(\coordindex + 1), y expr=\thisrowno{2}]{\tableTrain};
                \addplot[mark=none, ForestGreen, very thick, dashed] table[x expr=(\coordindex + 1), y expr=\thisrowno{1}]{\tableTest};
            \end{axis}
        \end{tikzpicture}
        \caption{Visualisation of the weight values associated with mini-ImageNet tasks.}
        \label{fig:weight_visualisation_miniImageNet}
    \end{figure}
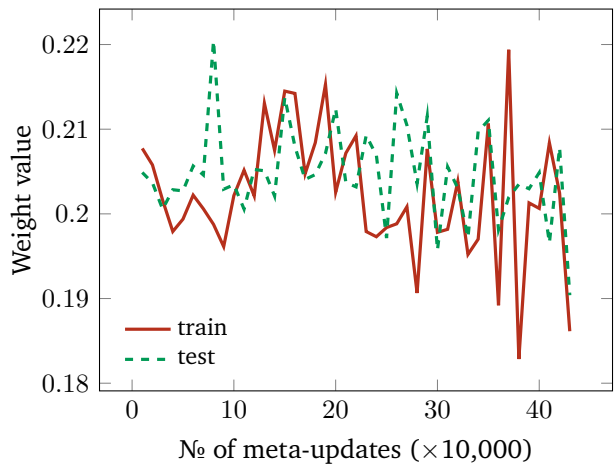
\newpage
\section{Ablation study}
\label{sec:ablation_study}
    To understand more about the effect of the number of iterations used in iLQR, we carry out an ablation study by analysing the validation accuracy on two different settings: one with 2 iterations and the other with 10 iterations. The ablation study is carried out on mini-ImageNet dataset using the same 4 convolutional module neural network with the same configuration parameters presented in \cref{sec:nway_kshot_classification}. The qualitative result between the two settings in \cref{fig:ablation_study_varying_number_iterations_iLQR} agrees with our intuition where the larger the number of iterations in iLQR, the more accurate weighting and hence, the larger validation accuracy. However, the trade-off is the significant increasing in terms of training time: approximately 6 times slower to train than the one with 2 iterations.

    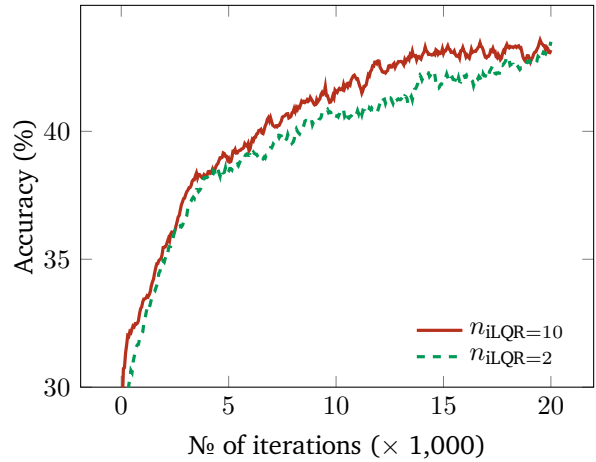
\begin{figure}[th]
            \centering
            % \begin{subfigure}{0.5 \linewidth}
                \centering
                \begin{tikzpicture}
                    \pgfplotstableread[col sep=comma, header=true]{results/maml_miniImageNet_CNN_val_accuracy.csv} \myTable
                    \pgfplotstableread[col sep=comma, header=true]{results/maml_miniImageNet_CNN_val_accuracy_10.csv} \secondTable
    
                    \begin{axis}[
                        height = 0.28125\linewidth,
                        width = 0.375\linewidth,
                        xlabel={\textnumero~of iterations (\(\times\) 1,000)},
                        xlabel style={font=\small},
                        xticklabel style = {font=\small},
                        ylabel={Accuracy (\%)},
                        ylabel style={font=\small, yshift=-0.5em},
                        yticklabel style = {font=\small},
                        ymin=30,
                        legend entries={\(n_{\text{iLQR} = 10}\), \(n_{\text{iLQR} = 2}\)},
                        legend style={draw=none, font=\small},
                        legend image post style={scale=1},
                        legend cell align={left},
                        legend pos=south east,
                        restrict x to domain=0:20,
                        scale only axis
                    ]
                        \addplot[mark=none, BrickRed, solid, very thick] table[x expr=0.05 * (\coordindex + 1), y={Testing_accuracy_tow}]{\secondTable};
                        \addplot[mark=none, ForestGreen, very thick, dashed] table[x expr=0.05 * (\coordindex + 1), y={Testing_accuracy_tow}]{\myTable};
                    \end{axis}
                \end{tikzpicture}
                \caption{Ablation study with two different numbers of iterations in iLQR is plotted with exponential moving average (smoothing factor is 0.1).}
                \label{fig:ablation_study_varying_number_iterations_iLQR}
            % \end{subfigure}
    \end{figure}
\newpage
\section{Derivation of iLQR}
\label{sec:ilqr_derivation}
    This section presents the derivation for one iteration of iLQR adopted from the original iLQR papers~\citep{todorov2005generalized,tassa2012synthesis}. One difference is at the backtracking line-search where we employ a similar acceptance criterion as in DDP.

    For brevity, the finite-horizon discrete-time optimal control problem in \eqref{eq:TO_problem} is restated:
    \begin{equation}
        \begin{aligned}[b]
            \mathbf{x}_{t + 1} & = \arg\min_{\mathbf{x}_{t}} \mathbf{u}_{t}^{\top} \pmb{\ell}(\mathbf{x}_{t}) & \text{s.t. } \mathbf{u}_{t} & = \text{weighting criterion}.
        \end{aligned}
        \tag{\ref{eq:meta_learning_objective}}
    \end{equation}
    Let \(Q_{t}\) be the \emph{cost-to-go} defined as:
    \begin{equation}
        Q_{t}(\mathbf{x}_{t:T}, \mathbf{u}_{t:T}) = \sum_{j=t}^{T} c(\mathbf{x}_{j}, \mathbf{u}_{j}),
        \label{eq:cost_to_go}
    \end{equation}
    and \(V_{t}\) be the \emph{value} function at time step \(t\) which is the cost-to-go given the local optimal action sequence:
    \begin{equation}
        V_{t}(\mathbf{x}_{t}) = \min_{\mathbf{u}_{t:T}} Q_{t}( \mathbf{x}_{t:T}, \mathbf{u}_{t:T}).
        \label{eq:value_function}
    \end{equation}

    According to the Dynamic Programming Principle, the objective function which is the minimisation over an entire sequence of actions can be reduced to a sequence of minimisation over a single action, proceeding backwards in time:
    \begin{equation}
        V_{t}(\mathbf{x}_{t}) = \min_{\mathbf{u}_{t}} \left[ c(\mathbf{x}_{t}, \mathbf{u}_{t}) + V\left( \mathbf{x}_{t + 1} \right) \right] = \min_{\mathbf{u}_{t}} \left[ c(\mathbf{x}_{t}, \mathbf{u}_{t}) + V\left( f(\mathbf{x}_{t}, \mathbf{u}_{t}) \right) \right].
        \label{eq:dynamic_programming}
    \end{equation}
    
    By applying the Taylor's series to the first order for the state-transition dynamics \(f\) and to the second order for the cost function \(c\) about a nominal trajectory \(\{\hat{\mathbf{x}}_{t}, \hat{\mathbf{u}}_{t}\}\), the term inside the minimisation in \eqref{eq:dynamic_programming} can be approximated to:
    \begin{equation}
        c(\mathbf{x}_{t}, \mathbf{u}_{t}) + V\left( f(\mathbf{x}_{t}, \mathbf{u}_{t}) \right) \approx \frac{1}{2} \begin{bmatrix}
            1\\
            \delta\mathbf{x}_{t}\\
            \delta\mathbf{u}_{t}
        \end{bmatrix}^{\top} \begin{bmatrix}
            0 & \mathbf{q}_{\mathbf{x}_{t}}^{\top} & \mathbf{q}_{\mathbf{u}_{t}}^{\top}\\
            \mathbf{q}_{\mathbf{x}_{t}} & \mathbf{Q}_{\mathbf{x}_{t}, \mathbf{x}_{t}} & \mathbf{Q}_{\mathbf{x}_{t}, \mathbf{u}_{t}}\\
            \mathbf{q}_{\mathbf{u}_{t}} & \mathbf{Q}_{\mathbf{u}_{t}, \mathbf{x}_{t}} & \mathbf{Q}_{\mathbf{u}_{t}, \mathbf{u}_{t}}
        \end{bmatrix} \begin{bmatrix}
            1\\
            \delta\mathbf{x}_{t}\\
            \delta\mathbf{u}_{t}
        \end{bmatrix},
        \label{eq:value_function_approximation}
    \end{equation}
    where \(\delta \mathbf{x}_{t} = \mathbf{x}_{t} - \hat{\mathbf{x}}_{t}\), \(\delta \mathbf{u}_{t} = \mathbf{u}_{t} - \hat{\mathbf{u}}_{t}\), and
    \begin{equation}
        \begin{aligned}[b]
            \mathbf{q}_{\mathbf{x}_{t}} & = \mathbf{c}_{\mathbf{x}_{t}} + \mathbf{F}_{\mathbf{x}_{t}}^{\top} \mathbf{v}_{t + 1} & \mathbf{q}_{\mathbf{u}_{t}} & = \mathbf{c}_{\mathbf{u}_{t}} + \mathbf{F}_{\mathbf{u}_{t}}^{\top} \mathbf{v}_{t + 1} \\
            \mathbf{Q}_{\mathbf{x}_{t}, \mathbf{x}_{t}} & = \mathbf{C}_{\mathbf{x}_{t}, \mathbf{x}_{t}} + \mathbf{F}_{\mathbf{x}_{t}}^{\top} \mathbf{V}_{t + 1} \mathbf{F}_{\mathbf{x}_{t}} & \mathbf{Q}_{\mathbf{u}_{t}, \mathbf{u}_{t}} & = \mathbf{C}_{\mathbf{u}_{t}, \mathbf{u}_{t}} + \mathbf{F}_{\mathbf{u}_{t}}^{\top} \mathbf{V}_{t + 1} \mathbf{F}_{\mathbf{u}_{t}} \\
            \mathbf{Q}_{\mathbf{x}_{t}, \mathbf{u}_{t}} & = \mathbf{Q}_{\mathbf{u}_{t}, \mathbf{x}_{t}}^{\top} = \mathbf{C}_{\mathbf{x}_{t}, \mathbf{u}_{t}} + \mathbf{F}_{\mathbf{x}_{t}}^{\top} \mathbf{V}_{t + 1} \mathbf{F}_{\mathbf{u}_{t}},
        \end{aligned}
    \end{equation}
    with \(\mathbf{F}_{\mathbf{x}_{t}}\) and \(\mathbf{F}_{\mathbf{u}_{t}}\) are the first derivatives of the state-transtition dynamics \(f\) w.r.t. the variable in the subscripts, \(\mathbf{c}_{(.)}\) and \(\mathbf{C}_{(.)}\) are the first and second derivatives of the cost function \(c\) w.r.t. the variables in the subscripts, and \(\mathbf{v}_{t + 1}\) and \(\mathbf{V}_{t + 1}\) are the first and second derivatives of the value function \(V\) w.r.t. the state \(\mathbf{x}_{t + 1}\).

    Minimising the quadratic problem in \eqref{eq:value_function_approximation} w.r.t. \(\delta\mathbf{u}_{t}\) results in:
    \begin{equation}
        \delta \mathbf{u}_{t}^{*} = \mathbf{K}_{t} \delta\mathbf{x}_{t} + \mathbf{k}_{t},
        \label{eq:action_optimal_t}
    \end{equation}
    where:
    \begin{equation}
        \begin{aligned}[b]
            \mathbf{K}_{t} = -\mathbf{Q}_{\mathbf{u}_{t}, \mathbf{u}_{t}}^{-1} \mathbf{Q}_{\mathbf{u}_{t}, \mathbf{x}_{t}} \qquad \mathbf{k}_{t} = -\mathbf{Q}_{\mathbf{u}_{t}, \mathbf{u}_{t}}^{-1} \mathbf{q}_{\mathbf{u}_{t}},
        \end{aligned}
    \end{equation}
    Substituting the action \(\delta\mathbf{u}_{t}\) obtained in \eqref{eq:action_optimal_t} into \eqref{eq:value_function_approximation} gives a quadratic model:

    \begin{equation}
        V_{t}(\mathbf{x}_{t}) \approx \frac{1}{2} \delta \mathbf{x}_{t}^{\top} \mathbf{V}_{t} \delta \mathbf{x}_{t} + \delta \mathbf{x}_{t}^{\top} \mathbf{v}_{t} + \mathrm{const.},
        \label{eq:value_function_quadratic}
    \end{equation}
    where \(\mathbf{V}_{t}\) and \(\mathbf{v}_{t}\) are the second and first derivatives of the value function \(V_{t}\), which can be expressed as:
    \begin{align}
        \mathbf{V}_{t} & = \mathbf{Q}_{\mathbf{x}_{t}, \mathbf{x}_{t}} + \mathbf{Q}_{\mathbf{x}_{t}, \mathbf{u}_{t}} \mathbf{K}_{t} &
        \mathbf{v}_{t} & = \mathbf{q}_{\mathbf{x}_{t}} + \mathbf{Q}_{\mathbf{x}_{t}, \mathbf{u}_{t}} \mathbf{k}_{t}.
    \end{align}

    Hence, one can recursively calculate the local quadratic model \(\{\mathbf{V}_{t}, \mathbf{v}_{t}\}\) of the value function \(V_{t}\), and the linear controller \(\{\mathbf{K}_{t}, \mathbf{k}_{t}\}\) backward through time. Once it is completed, a new trajectory can be calculated using the forward pass with the general non-linear state-transition dynamics \(f\) as follows:
    \begin{equation}
        \begin{aligned}[b]
            \hat{\mathbf{x}}_{1} & = \mathbf{x}_{1}\\
            \mathbf{u}_{t} & = \mathbf{K}_{t} \left( \mathbf{x}_{t} - \hat{\mathbf{x}}_{t} \right) + \mathbf{k}_{t} + \hat{\mathbf{u}}_{t}\\
            \mathbf{x}_{t + 1} & = f\left( \mathbf{x}_{t}, \mathbf{u}_{t} \right).
        \end{aligned}
        \label{eq:forward_pass}
    \end{equation}

    Although the trajectory obtained in \eqref{eq:forward_pass} converges to a local minimum for the approximate model of the value function \(V_{t}\), it does not guarantee the convergence for general non-linear models such as the one in \eqref{eq:TO_problem}. The reason is that the new trajectory might deviate too far from the nominal trajectory, resulting in a poor Taylor approximation of the true model. To overcome, a backtracking linear-search with parameter \(\varepsilon \in (0, 1]\) is introduced:
    \begin{equation}
        \mathbf{u}_{t} = \mathbf{K}_{t} \left( \mathbf{x}_{t} - \hat{\mathbf{x}}_{t} \right) + \varepsilon \mathbf{k}_{t} + \hat{\mathbf{u}}_{t}.
        \label{eq:action_backtracking_linesearch}
    \end{equation}
    The criterion to accept the trajectory produced in the iteration with backtracking line-search is similar to the one in DDP:
    \begin{equation}
        J(\mathbf{u}_{1:T}) - J(\hat{\mathbf{u}}_{1:T}) < \frac{1}{2} \varepsilon \theta_{1},
    \end{equation}
    where
    \begin{align}
        J(\mathbf{u}_{1:T}) & = \sum_{t=1}^{T} c(\mathbf{x}_{t}, \mathbf{u}_{t}) \label{eq:total_cost}\\
        \theta_{t} & = \theta_{t + 1} - \mathbf{q}_{\mathbf{u}_{t}}^{\top} \mathbf{Q}_{\mathbf{u}_{t}, \mathbf{u}_{t}} \mathbf{q}_{\mathbf{u}_{t}}. \label{eq:theta1}
    \end{align}
    Hence, in the worst case when the new trajectory strays too far from the model's region of validity, then \(\varepsilon \to 0\) and the trajectory is the same as the nominal trajectory. The procedure of one iLQR iteration is outlined in Algorithm~\ref{algm:approximate_lqr} in \appendixname{~\ref{sec:TO_algorithms}}. In addition, the convergence proof for iLQR adapted from DDP~\citep{yakowitz1984computational} is provided in \appendixname~\ref{sec:ilqr_convergence_proof} to complete the analysis.
\newpage
\section{Convergence of iLQR}
\label{sec:ilqr_convergence_proof}
    To prove the convergence of iLQR algorithm, we rely on Theorem 3 in \citep[p. 14]{polak1971computational}, which is re-stated in \cref{thrm:how_to_prove_convergence} in \cref{sec:auxiliary_ilqr}

    \subsection{Auxiliary to prove convergence}
    \label{sec:auxiliary_ilqr}
        \begin{definition}[Algorithm model]
            Given \(a: \mathcal{T} \to \mathcal{T}\) is an algorithm, and \(c: \mathcal{T} \to \mathcal{R}\) is a function used as stopping criterion, the algorithm model can be described as:
            \begin{enumerate}
                \item Compute a \(z_{0} \in \mathcal{T}\).
                \item Set \(i = 0\).
                \item Compute \(a(z_{i})\) \label{algorithm_model:step_i+1}.
                \item Set \(z_{i + 1} = a(z_{i})\).
                \item If \(c(z_{i + 1}) \ge c(z_{i})\), stop;\footnote{A direct test for determining if \(z_{i}\) is desirable may be substituted for the test \(c(z_{i + 1}) \ge c(z_{i})\).} else set \(i = i + 1\) and go to step~\ref{algorithm_model:step_i+1}.
            \end{enumerate}
            \label{def:algorithm_model}
        \end{definition}
        
        \cref{thrm:how_to_prove_convergence} will show what such an algorithm will compute.
        
        \begin{theorem}[\texorpdfstring{\citep[Theorem 3, p. 14]{polak1971computational}}{}]
            \label{thrm:how_to_prove_convergence}
            Suppose that:
            \begin{enumerate}[label=(\roman*)]
                \item \(c(.)\) is either continuous at all non-desirable points \(z \in \mathcal{T}\), or else \(c(z)\) is bounded from below for \(z \in \mathcal{T}\);
                \item for every \(z \in \mathcal{T}\) which is not desirable, there exist an \(\varepsilon(z) > 0\) and a \(\delta(z) < 0\) such that:
                \begin{equation}
                    c(a(z^{\prime})) - c(z^{\prime}) \le \delta(z) < 0, \forall z^{\prime} \in B(z, \varepsilon(z)) = \{z \in \mathcal{T}: \lVert z^{\prime} - z \rVert_{\mathcal{B}} \le \varepsilon(z)\}.
                \end{equation}
            \end{enumerate}
            Then, either the sequence \(\{z_{i}\}\) constructed by algorithm defined in Definition~\ref{def:algorithm_model} is finite and its next to last element is desirable, or else it is infinite and every accumulation point of \(\{z_{i}\}\) is desirable.
        \end{theorem}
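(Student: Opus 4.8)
The plan is to use the strict cost decrease near non-desirable points supplied by hypothesis (ii), and to drive each alternative of the conclusion to a contradiction. First I would dispose of the case in which the sequence \(\{z_i\}\) constructed in Definition~\ref{def:algorithm_model} is finite. Finiteness means the stopping test fired at the last index, i.e. \(c(z_{i+1}) \ge c(z_i)\), where \(z_i\) is the next-to-last element. If \(z_i\) were \emph{not} desirable, then applying hypothesis (ii) with \(z = z_i\) and the admissible choice \(z' = z_i \in B(z_i, \varepsilon(z_i))\) would give \(c(a(z_i)) - c(z_i) \le \delta(z_i) < 0\), that is \(c(z_{i+1}) < c(z_i)\), contradicting the stopping test. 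Hence the next-to-last element must be desirable.

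Next I would treat the infinite case. If the algorithm never terminates, then \(c(z_{i+1}) < c(z_i)\) for every \(i\), so \(\{c(z_i)\}\) is a strictly decreasing real sequence. Let \(\bar z\) be an arbitrary accumulation point of \(\{z_i\}\) and suppose, toward a contradiction, that \(\bar z\) is not desirable. Hypothesis (ii) then furnishes \(\varepsilon(\bar z) > 0\) and \(\delta(\bar z) < 0\) with \(c(a(z')) - c(z') \le \delta(\bar z)\) for all \(z' \in B(\bar z, \varepsilon(\bar z))\). Passing to a subsequence \(z_{i_k} \to \bar z\) (which exists because the norm \(\lVert \cdot \rVert_{\mathcal B}\) makes \(\mathcal T\) metrizable, so accumulation points are limits of subsequences), we have \(z_{i_k} \in B(\bar z, \varepsilon(\bar z))\) for all large \(k\), and therefore \(c(z_{i_k+1}) - c(z_{i_k}) \le \delta(\bar z) < 0\) for all large \(k\).

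The final step invokes hypothesis (i) to show that the successive costs \(c(z_{i_k})\) and \(c(z_{i_k+1})\) must in fact become arbitrarily close, contradicting the uniform gap \(\delta(\bar z) < 0\). I would argue that \(\{c(z_i)\}\) converges to a finite limit \(L\): under the ``bounded below'' branch of (i) this is immediate, since a monotone bounded sequence converges; under the ``continuity at non-desirable points'' branch, continuity of \(c\) at \(\bar z\) forces \(c(z_{i_k}) \to c(\bar z)\), and a monotone sequence possessing a convergent subsequence converges to that same (finite) limit. In either branch \(c(z_{i_k+1}) \to L\) as well (the shifted indices again form a sequence tending to infinity), so \(c(z_{i_k+1}) - c(z_{i_k}) \to 0\), contradicting \(c(z_{i_k+1}) - c(z_{i_k}) \le \delta(\bar z) < 0\). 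Hence \(\bar z\) is desirable, which completes the infinite case and the proof.

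The only place where real care is needed is this last step: one must handle the two alternatives of hypothesis (i) separately — in particular, the scenario where \(c\) could a priori be unbounded below along the sequence is excluded only because continuity at \(\bar z\) pins \(c(z_i)\) near the finite value \(c(\bar z)\) — and one must correctly justify the passage from an accumulation point to a convergent subsequence, which is precisely where the metric structure of \(\mathcal T\) enters. By comparison, the cost-decrease bookkeeping in the finite case and the subsequence estimate in the infinite case are routine.
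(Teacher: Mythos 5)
Your proof is correct. Note, however, that the paper does not prove this statement at all: it is imported verbatim from Polak (1971, Theorem~3, p.~14) and used as a black box to establish the convergence of iLQR, so there is no in-paper argument to compare against. Your argument is the standard one for this result: the finite case follows by applying hypothesis (ii) at \(z' = z_i\) to contradict the stopping test, and the infinite case combines the uniform decrease \(c(z_{i_k+1}) - c(z_{i_k}) \le \delta(\bar z) < 0\) on a subsequence converging to a putative non-desirable accumulation point with the fact that the monotone sequence \(\{c(z_i)\}\) converges to a finite limit (guaranteed by either branch of hypothesis (i)), forcing consecutive costs along the subsequence to become arbitrarily close. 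The two delicate points you flag --- handling the two alternatives of (i) separately, and passing from an accumulation point to a convergent subsequence via the metric induced by \(\lVert \cdot \rVert_{\mathcal{B}}\) --- are handled correctly.
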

    \subsection{Proof of iLQR convergence}
        Since iLQR is an algorithm model as described in \cref{def:algorithm_model}, to prove its convergence property, one needs to assert that the 2 conditions in \cref{thrm:how_to_prove_convergence} are met.
        
        First condition is satisfied since the cost \(J(\mathbf{u}_{1:T})\) is continuous, and twice differentiable. We will prove that iLQR satisfies the second condition of \cref{thrm:how_to_prove_convergence}.

        From \eqref{eq:action_backtracking_linesearch} and \eqref{eq:theta1}, the sequence of actions obtained from iLQR \(\mathbf{u}_{1:T}\) and \(\theta_{1}\) depend on the nominal trajectory. Hence, we explicitly denote them as \(\mathbf{u}(\varepsilon, \hat{\mathbf{u}})\) and \(\theta_{1} (\hat{\mathbf{u}})\). In addition, since the loss function \(\ell(.)\) is assumed to be twice differentiable, the cost function and state-transition dynamics are continuous. Hence, both \(\mathbf{u}(\varepsilon, \hat{\mathbf{u}})\) and \(\theta_{1} (\hat{\mathbf{u}})\) are also continuous.

        The convergence proof for iLQR is presented in \cref{sec:ilqr_convergence_proof_subsection}. The proof requires some auxiliary lemmas in \cref{sec:auxiliary_lemmas_ilqr}.

        \subsubsection{Auxiliary lemmas}
        \label{sec:auxiliary_lemmas_ilqr}
            \begin{lemma}
                If \(\mathbf{u}(\varepsilon, \bar{\mathbf{u}})\) is a non-stationary sequence of actions, then \(\theta_{1} < 0\).
                \label{lmm:theta_negative}
            \end{lemma}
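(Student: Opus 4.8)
The plan is to unroll the backward recursion \eqref{eq:theta1} into a closed form and then read off its sign from the positive-definiteness of the control Hessians $\mathbf{Q}_{\mathbf{u}_t,\mathbf{u}_t}$. First I would fix the boundary value $\theta_{T+1} = 0$ (there is no cost-to-go beyond the horizon $T$) and iterate \eqref{eq:theta1} from $t = T$ down to $t = 1$, obtaining
\begin{equation*}
    \theta_{1} = -\sum_{t=1}^{T} \mathbf{q}_{\mathbf{u}_{t}}^{\top} \mathbf{Q}_{\mathbf{u}_{t}, \mathbf{u}_{t}} \mathbf{q}_{\mathbf{u}_{t}} .
\end{equation*}
Each summand is a quadratic form in $\mathbf{Q}_{\mathbf{u}_{t}, \mathbf{u}_{t}}$, which is positive definite: this is precisely the standing regularity condition that makes the per-step quadratic subproblem of iLQR well posed and its minimiser $\delta\mathbf{u}_{t}^{*} = \mathbf{K}_{t}\delta\mathbf{x}_{t} + \mathbf{k}_{t}$ in \eqref{eq:action_optimal_t} unique (otherwise a Levenberg--Marquardt-type regularisation is added to enforce it). Hence every term is non-negative and $\theta_{1} \le 0$ unconditionally.

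The second step is to upgrade this to a strict inequality under the non-stationarity hypothesis. Here I would use that $\mathbf{q}_{\mathbf{u}_{t}} = \mathbf{c}_{\mathbf{u}_{t}} + \mathbf{F}_{\mathbf{u}_{t}}^{\top}\mathbf{v}_{t+1}$ is, by the chain rule and the dynamic-programming identity \eqref{eq:dynamic_programming}, exactly the gradient of $c(\mathbf{x}_{t},\mathbf{u}_{t}) + V(f(\mathbf{x}_{t},\mathbf{u}_{t}))$ with respect to $\mathbf{u}_{t}$ evaluated along the nominal trajectory; consequently the nominal action sequence is a stationary point of the total cost $J$ in \eqref{eq:total_cost} if and only if $\mathbf{q}_{\mathbf{u}_{t}} = \mathbf{0}$ for all $t$, equivalently $\mathbf{k}_{t} = -\mathbf{Q}_{\mathbf{u}_{t},\mathbf{u}_{t}}^{-1}\mathbf{q}_{\mathbf{u}_{t}} = \mathbf{0}$ for all $t$, in which case the forward pass \eqref{eq:action_backtracking_linesearch} reproduces the nominal actions for every $\varepsilon$ (a stationary sequence). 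Therefore, if $\mathbf{u}(\varepsilon,\bar{\mathbf{u}})$ is non-stationary there is some index $t_{0}$ with $\mathbf{q}_{\mathbf{u}_{t_{0}}} \ne \mathbf{0}$; positive-definiteness of $\mathbf{Q}_{\mathbf{u}_{t_{0}},\mathbf{u}_{t_{0}}}$ then gives $\mathbf{q}_{\mathbf{u}_{t_{0}}}^{\top}\mathbf{Q}_{\mathbf{u}_{t_{0}},\mathbf{u}_{t_{0}}}\mathbf{q}_{\mathbf{u}_{t_{0}}} > 0$, and combining with the non-negativity of the remaining terms yields $\theta_{1} < 0$.

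I expect the main obstacle to be conceptual rather than computational: precisely identifying what ``non-stationary sequence of actions'' means and matching it to the condition $\exists\, t : \mathbf{q}_{\mathbf{u}_{t}} \ne \mathbf{0}$. This equivalence is the content of the Bellman decomposition \eqref{eq:dynamic_programming}, which shows that stationarity of $J$ over the whole action sequence is the same as the simultaneous vanishing of all local $Q$-function gradients, and it should be stated explicitly. A secondary point to handle with care is the positive-definiteness (hence invertibility) of $\mathbf{Q}_{\mathbf{u}_{t},\mathbf{u}_{t}}$, which is what makes $\mathbf{K}_{t}$ and $\mathbf{k}_{t}$ well defined in \eqref{eq:action_optimal_t} and is therefore a hypothesis legitimately inherited from the iLQR construction itself.
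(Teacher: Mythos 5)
Your proposal is correct and follows essentially the same route as the paper's own proof: unroll the recursion \eqref{eq:theta1} from \(\theta_{T+1}=0\), invoke the positive-definiteness of \(\mathbf{Q}_{\mathbf{u}_{t},\mathbf{u}_{t}}\) inherited from the iLQR construction, and use the equivalence between stationarity of the nominal action sequence and the vanishing of all \(\mathbf{q}_{\mathbf{u}_{t}}\) to obtain the strict inequality. Your write-up is in fact somewhat more explicit than the paper's (which asserts the closed form and the stationarity equivalence without spelling them out), but no new idea is involved.
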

            \begin{proof}
                It is apparent from the derivation of iLQR that \(\mathbf{q}_{\mathbf{u}_{t}} = 0, \forall t \in \{1, \ldots, T\}\) if and only if \(\hat{\mathbf{u}}\) is a stationary sequence of actions. Hence, by negation, if \(\hat{\mathbf{u}}\) is a non-stationary sequence of actions, \(\mathbf{q}_{\mathbf{u}_{t}} \neq 0\) for some time steps \(t \in \{1, \ldots, T\}\).
                
                Also note that, if \(\mathbf{u}(\varepsilon, \bar{\mathbf{u}})\) is non-stationary, then \(\hat{\mathbf{u}}\) is also non-stationary. In addition, if the Hessian matrix w.r.t. action \(\mathbf{u}\) is assumed to be positive-definite (PSD), then from the construction, \(\mathbf{Q}_{\mathbf{u}_{t}, \mathbf{u}_{t}}\) and its inverse are also PSD.
                
                Th PSD of \(\mathbf{Q}_{\mathbf{u}_{t}, \mathbf{u}_{t}}^{-1}\) combining with \eqref{eq:theta1} leads to the fact that \(\theta_{1}(\hat{\mathbf{u}}) < 0\) for any non-stationary sequence of actions \(\hat{\mathbf{u}}\).
            \end{proof}
    
            \begin{lemma}
                If \(\mathbf{u}(\varepsilon, \bar{\mathbf{u}})\) is a non-stationary sequence of actions, then:
                \begin{equation}
                    J(\mathbf{u}(\varepsilon, \bar{\mathbf{u}})) - J(\bar{\mathbf{u}}) = \varepsilon \theta_{1}(\bar{\mathbf{u}}) + \mathcal{O}(\varepsilon^{2}).
                    \label{eq:A.2}
                \end{equation}
                \label{lmm:A.2}
            \end{lemma}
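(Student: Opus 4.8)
The plan is to read \eqref{eq:A.2} as a first-order Taylor expansion in the backtracking parameter $\varepsilon$ of the scalar function $g(\varepsilon):=J(\mathbf{u}(\varepsilon,\bar{\mathbf{u}}))$, where $\mathbf{u}(\varepsilon,\bar{\mathbf{u}})$ is the sequence generated by the forward pass \eqref{eq:action_backtracking_linesearch} from the fixed gains $\{\mathbf{K}_t,\mathbf{k}_t\}$ computed in the backward pass about the nominal $\bar{\mathbf{u}}$. It suffices to establish: (a) $g$ is $C^2$ on a neighbourhood of $0$; (b) $g(0)=J(\bar{\mathbf{u}})$; and (c) $g'(0)=\theta_1(\bar{\mathbf{u}})$. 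Granting these, Taylor's theorem gives $g(\varepsilon)=g(0)+g'(0)\varepsilon+\mathcal{O}(\varepsilon^2)$, which is exactly \eqref{eq:A.2}.

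Claim (a) holds because $f$ and $c$ are twice differentiable (the loss $\ell$ is assumed twice differentiable), the map $\varepsilon\mapsto\mathbf{u}_t$ in \eqref{eq:action_backtracking_linesearch} is affine in $\varepsilon$, and finite compositions and sums of $C^2$ maps are $C^2$; hence each $\mathbf{x}_t(\varepsilon)$, $\mathbf{u}_t(\varepsilon)$, and therefore $g$, is $C^2$ near $0$. Claim (b) is a forward induction on $t$: the forward pass starts at $\hat{\mathbf{x}}_1=\mathbf{x}_1$, and if $\mathbf{x}_t(0)=\hat{\mathbf{x}}_t$ then $\mathbf{u}_t(0)=\mathbf{K}_t(\mathbf{x}_t(0)-\hat{\mathbf{x}}_t)+0\cdot\mathbf{k}_t+\hat{\mathbf{u}}_t=\hat{\mathbf{u}}_t$ and $\mathbf{x}_{t+1}(0)=f(\hat{\mathbf{x}}_t,\hat{\mathbf{u}}_t)=\hat{\mathbf{x}}_{t+1}$, so the $\varepsilon=0$ rollout is the nominal trajectory and $g(0)=\sum_{t=1}^{T}c(\hat{\mathbf{x}}_t,\hat{\mathbf{u}}_t)=J(\bar{\mathbf{u}})$. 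A by-product I will use repeatedly: since $\mathbf{x}_t(\cdot)$ is $C^1$ and $\mathbf{x}_t(0)-\hat{\mathbf{x}}_t=0$, the deviation $\delta\mathbf{x}_t(\varepsilon):=\mathbf{x}_t(\varepsilon)-\hat{\mathbf{x}}_t$ is $\mathcal{O}(\varepsilon)$ for every $t$ (and $\delta\mathbf{x}_1\equiv0$ exactly), while $\delta\mathbf{u}_t=\mathbf{K}_t\delta\mathbf{x}_t+\varepsilon\mathbf{k}_t$.

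Claim (c), identifying $g'(0)$ with $\theta_1$, is the substantive part, and I would prove it by a backward induction that mirrors the backward pass of \cref{sec:ilqr_derivation}. The induction hypothesis at stage $t$ is that the cost-to-go along the forward policy, evaluated from a perturbed state $\hat{\mathbf{x}}_t+\delta\mathbf{x}_t$, satisfies $\sum_{j\ge t}c(\mathbf{x}_j,\mathbf{u}_j)-\sum_{j\ge t}c(\hat{\mathbf{x}}_j,\hat{\mathbf{u}}_j)=\mathbf{v}_t^{\top}\delta\mathbf{x}_t+\tfrac12\delta\mathbf{x}_t^{\top}\mathbf{V}_t\delta\mathbf{x}_t+\varepsilon\theta_t+\mathcal{O}\big((\norm{\delta\mathbf{x}_t}+|\varepsilon|)^3\big)$; this is just the quadratic value model \eqref{eq:value_function_quadratic} with its constant term made explicit as $\varepsilon\theta_t+\mathcal{O}(\varepsilon^2)$, base case $t=T+1$ with $\theta_{T+1}=0$. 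For the inductive step one expands $c(\hat{\mathbf{x}}_t+\delta\mathbf{x}_t,\hat{\mathbf{u}}_t+\delta\mathbf{u}_t)$ to second order, substitutes $\delta\mathbf{u}_t=\mathbf{K}_t\delta\mathbf{x}_t+\varepsilon\mathbf{k}_t$ and the first-order dynamics $\delta\mathbf{x}_{t+1}=\mathbf{F}_{\mathbf{x}_t}\delta\mathbf{x}_t+\mathbf{F}_{\mathbf{u}_t}\delta\mathbf{u}_t+\mathcal{O}(\norm{(\delta\mathbf{x}_t,\delta\mathbf{u}_t)}^2)$ into the hypothesis, and collects terms using the backward-pass identities for $\mathbf{q}_{\mathbf{x}_t},\mathbf{q}_{\mathbf{u}_t},\mathbf{Q}_{\mathbf{x}_t,\mathbf{x}_t},\mathbf{Q}_{\mathbf{u}_t,\mathbf{u}_t},\mathbf{Q}_{\mathbf{x}_t,\mathbf{u}_t}$ together with $\mathbf{K}_t=-\mathbf{Q}_{\mathbf{u}_t,\mathbf{u}_t}^{-1}\mathbf{Q}_{\mathbf{u}_t,\mathbf{x}_t}$, $\mathbf{k}_t=-\mathbf{Q}_{\mathbf{u}_t,\mathbf{u}_t}^{-1}\mathbf{q}_{\mathbf{u}_t}$, $\mathbf{v}_t=\mathbf{q}_{\mathbf{x}_t}+\mathbf{Q}_{\mathbf{x}_t,\mathbf{u}_t}\mathbf{k}_t$, and $\mathbf{V}_t=\mathbf{Q}_{\mathbf{x}_t,\mathbf{x}_t}+\mathbf{Q}_{\mathbf{x}_t,\mathbf{u}_t}\mathbf{K}_t$. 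The terms purely quadratic/linear in $\delta\mathbf{x}_t$ reassemble into $\mathbf{v}_t^{\top}\delta\mathbf{x}_t+\tfrac12\delta\mathbf{x}_t^{\top}\mathbf{V}_t\delta\mathbf{x}_t$; every term mixing $\varepsilon$ with $\delta\mathbf{x}_t$ (e.g. $\varepsilon\,\mathbf{k}_t^{\top}\mathbf{Q}_{\mathbf{u}_t,\mathbf{x}_t}\delta\mathbf{x}_t$) is $\mathcal{O}(\varepsilon^2)$ because $\delta\mathbf{x}_t=\mathcal{O}(\varepsilon)$; and the $\varepsilon$-linear contribution is $\varepsilon\big(\mathbf{q}_{\mathbf{u}_t}^{\top}\mathbf{k}_t+\theta_{t+1}\big)=\varepsilon\theta_t$ by the recursion \eqref{eq:theta1} (using $\mathbf{k}_t=-\mathbf{Q}_{\mathbf{u}_t,\mathbf{u}_t}^{-1}\mathbf{q}_{\mathbf{u}_t}$). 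Evaluating at $t=1$, where $\delta\mathbf{x}_1\equiv0$ annihilates the $\delta\mathbf{x}$-terms exactly, yields $J(\mathbf{u}(\varepsilon,\bar{\mathbf{u}}))-J(\bar{\mathbf{u}})=\varepsilon\theta_1+\mathcal{O}(\varepsilon^2)$.

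The main obstacle is the order-bookkeeping in claim (c): one must keep exploiting $\delta\mathbf{x}_t=\mathcal{O}(\varepsilon)$ — rather than merely $\mathcal{O}(1)$ — so that every term quadratic in $\delta\mathbf{x}_t$ or mixing $\delta\mathbf{x}_t$ with $\varepsilon$, as well as the linearisation error in the dynamics, is genuinely absorbed into $\mathcal{O}(\varepsilon^2)$, leaving only the telescoping sum $\sum_t\varepsilon\,\mathbf{q}_{\mathbf{u}_t}^{\top}\mathbf{k}_t=\varepsilon\theta_1$. These are precisely the cancellations already carried out in \cref{sec:ilqr_derivation} when deriving $\{\mathbf{V}_t,\mathbf{v}_t\}$ — the classical expected-cost-reduction identity of differential dynamic programming \citep{jacobson1970differential,yakowitz1984computational} — so the computation is standard but tedious. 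Finally, note that \eqref{eq:A.2} together with \cref{lmm:theta_negative} yields, for $\varepsilon$ small enough, $J(\mathbf{u}(\varepsilon,\bar{\mathbf{u}}))-J(\bar{\mathbf{u}})\le\tfrac12\varepsilon\theta_1(\bar{\mathbf{u}})<0$, which is the sufficient-decrease condition (ii) of \cref{thrm:how_to_prove_convergence} driving the overall iLQR convergence argument.
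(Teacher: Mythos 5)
Your proposal is correct and lands on the same first-order identity \(g'(0)=\theta_{1}(\bar{\mathbf{u}})\), but it gets there by a genuinely different and more rigorous route than the paper. The paper's proof is a three-line chain-rule computation: it writes \(J(\mathbf{u}(\varepsilon,\hat{\mathbf{u}}))-J(\hat{\mathbf{u}})=\sum_{t}\mathbf{q}_{\mathbf{u}_{t}}^{\top}\Delta\mathbf{u}_{t}+\mathcal{O}(\norm{\Delta\mathbf{u}_{1:T}}^{2})\) and then differentiates in \(\varepsilon\) while asserting \(\mathrm{d}\mathbf{u}_{t}/\mathrm{d}\varepsilon=\mathbf{k}_{t}\). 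That assertion silently discards the feedback contribution \(\mathbf{K}_{t}\,\mathrm{d}\mathbf{x}_{t}/\mathrm{d}\varepsilon\), which is non-zero for \(t\ge 2\) (already \(\mathrm{d}\mathbf{x}_{2}/\mathrm{d}\varepsilon\) at \(\varepsilon=0\) equals \(\mathbf{F}_{\mathbf{u}_{1}}\mathbf{k}_{1}\)), so the paper's argument as written leaves a gap that your backward induction on the quadratic value model --- the classical DDP expected-cost-reduction identity --- actually closes: by keeping the \(\delta\mathbf{x}_{t}\)-linear term explicitly as \(\mathbf{v}_{t}^{\top}\delta\mathbf{x}_{t}\) (which is annihilated at \(t=1\) where \(\delta\mathbf{x}_{1}=0\)) and exploiting \(\delta\mathbf{x}_{t}=\mathcal{O}(\varepsilon)\) to absorb all cross and quadratic terms into \(\mathcal{O}(\varepsilon^{2})\), you show that the feedback terms do no first-order harm and the \(\varepsilon\)-linear part telescopes to \(\varepsilon\sum_{t}\mathbf{q}_{\mathbf{u}_{t}}^{\top}\mathbf{k}_{t}\). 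The paper's shortcut buys brevity; yours buys an argument that is airtight. One small point of friction: your telescoping yields \(\theta_{1}=\sum_{t}\mathbf{q}_{\mathbf{u}_{t}}^{\top}\mathbf{k}_{t}=-\sum_{t}\mathbf{q}_{\mathbf{u}_{t}}^{\top}\mathbf{Q}_{\mathbf{u}_{t},\mathbf{u}_{t}}^{-1}\mathbf{q}_{\mathbf{u}_{t}}\), whereas \eqref{eq:theta1} literally defines \(\theta_{t}=\theta_{t+1}-\mathbf{q}_{\mathbf{u}_{t}}^{\top}\mathbf{Q}_{\mathbf{u}_{t},\mathbf{u}_{t}}\mathbf{q}_{\mathbf{u}_{t}}\) without the inverse; since the paper's own proof commits the matching slip by writing \(\mathbf{k}_{t}=-\mathbf{Q}_{\mathbf{u}_{t},\mathbf{u}_{t}}\mathbf{q}_{\mathbf{u}_{t}}\) in contradiction with the definition of \(\mathbf{k}_{t}\) in \cref{sec:ilqr_derivation}, this is evidently a typo in \eqref{eq:theta1}, your inverse-bearing version is the internally consistent one, and because \(\mathbf{Q}_{\mathbf{u}_{t},\mathbf{u}_{t}}\succ 0\) both quantities are negative, so the downstream use in \cref{lmm:A.4} and the convergence theorem is unaffected.
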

            \begin{proof}
                If we define \(\Delta Q_{t}\) as the resulting incremental change in the cost-to-go \(Q\) from the perturbation \(\Delta \mathbf{u}_{t}\) using Taylor expansion, then:
                \begin{equation}
                    \Delta Q_{t} = Q_{t}(\mathbf{x}_{t}, \mathbf{u}_{t} + \Delta \mathbf{u}_{t}) - Q_{t}(\mathbf{x}_{t}, \mathbf{u}_{t}) = \mathbf{q}_{\mathbf{u}_{t}}^{\top} \Delta \mathbf{u}_{t} + \mathcal{O}(\lVert \Delta \mathbf{u}_{t} \rVert^{2}).
                    \label{eq:A.9}
                \end{equation}
                Note that:
                \begin{equation}
                    \begin{aligned}[b]
                        J(\mathbf{u}(\varepsilon, \hat{\mathbf{u}})) - J(\hat{\mathbf{u}}) & = J(\mathbf{u}_{1:N} + \Delta \mathbf{u}_{1:N}) - J(\mathbf{u}_{1:N}) \\
                        & = \sum_{t=1}^{T} \Delta Q_{t} = \sum_{t=1}^{N} \mathbf{q}_{\mathbf{u}_{t}}^{\top} \Delta \mathbf{u}_{t} + \mathcal{O}\left( \lVert \Delta \mathbf{u}_{1:N} \rVert^{2} \right).
                    \end{aligned}
                    \label{eq:A.9a}
                \end{equation}
                Taking the derivative w.r.t. \(\varepsilon\) gives:
                \begin{equation}
                    \pdv{J(\mathbf{u}(\varepsilon))}{\varepsilon} = \sum_{t=1}^{T} \mathbf{q}_{\mathbf{u}_{t}}^{\top} \dv{\mathbf{u}_{t}}{\varepsilon}.
                    \label{eq:A.10}
                \end{equation}
                From \eqref{eq:action_backtracking_linesearch}, \(\mathbf{u}_{t}\) is a linear function w.r.t. \(\varepsilon\), resulting in:
                \begin{equation}
                    \dv{\mathbf{u}_{t}}{\varepsilon} = \mathbf{k}_{t} = - \mathbf{Q}_{\mathbf{u}_{t}, \mathbf{u}_{t}} \mathbf{q}_{\mathbf{u}_{t}}.
                \end{equation}
                Hence:
                \begin{equation}
                    \pdv{J(\mathbf{u}(\varepsilon))}{\varepsilon} = - \sum_{t=1}^{T} \mathbf{q}_{\mathbf{u}_{t}}^{\top} \mathbf{Q}_{\mathbf{u}_{t}, \mathbf{u}_{t}} \mathbf{q}_{\mathbf{u}_{t}} = \theta_{1}(\hat{\mathbf{u}}),
                \end{equation}
                which implies \eqref{eq:A.2}.
            \end{proof}
    
            \begin{lemma}
                If \(\mathbf{u}(\varepsilon, \bar{\mathbf{u}})\) is a non-stationary sequence of actions, then there exists \(\varepsilon_{1}\) such that:
                \begin{equation}
                    J(\mathbf{u}(\varepsilon, \bar{\mathbf{u}})) - J(\hat{\mathbf{u}}) \le \eta \varepsilon \theta_{1} (\hat{\mathbf{u}}), \forall \varepsilon \in [0, \varepsilon_{1}], \forall \eta \in (0.5, 1).
                \end{equation}
                \label{lmm:A.4}
            \end{lemma}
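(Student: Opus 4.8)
The plan is to reproduce the classical Armijo sufficient-decrease argument, building directly on the first-order expansion of the line-search cost that \cref{lmm:A.2} already provides; the non-stationarity hypothesis is exactly what forces the directional slope $\theta_{1}$ to be strictly negative, and the remainder of the argument is an elementary comparison of a linear term against a quadratic remainder. Throughout I treat the $\bar{\mathbf{u}}$ of the hypothesis and the $\hat{\mathbf{u}}$ appearing in $J(\hat{\mathbf{u}})$ and $\theta_{1}(\hat{\mathbf{u}})$ as the same nominal action sequence.

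First I would invoke \cref{lmm:theta_negative} to obtain $\theta_{1}(\hat{\mathbf{u}}) < 0$ (its proof already records that non-stationarity of $\mathbf{u}(\varepsilon,\bar{\mathbf{u}})$ forces non-stationarity of the nominal $\hat{\mathbf{u}}$, and that positive-definiteness of $\mathbf{Q}_{\mathbf{u}_{t},\mathbf{u}_{t}}^{-1}$ then makes $\theta_{1}$ negative via \eqref{eq:theta1}). Next, \cref{lmm:A.2}, i.e.\ \eqref{eq:A.2}, supplies $\varepsilon_{0} > 0$ and $C > 0$ with
\begin{equation*}
J\big(\mathbf{u}(\varepsilon,\bar{\mathbf{u}})\big) - J(\hat{\mathbf{u}}) \;\le\; \varepsilon\,\theta_{1}(\hat{\mathbf{u}}) + C\varepsilon^{2}, \qquad \forall\,\varepsilon \in [0,\varepsilon_{0}].
\end{equation*}
I would then split $\varepsilon\theta_{1} = \eta\varepsilon\theta_{1} + (1-\eta)\varepsilon\theta_{1}$ and note that, because $\eta < 1$ and $\theta_{1} < 0$, the term $(1-\eta)\varepsilon\theta_{1}$ is negative and \emph{linear} in $\varepsilon$, hence dominates $C\varepsilon^{2}$ as soon as $\varepsilon \le (1-\eta)\lvert\theta_{1}(\hat{\mathbf{u}})\rvert / C$. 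Setting $\varepsilon_{1} = \min\{\varepsilon_{0},\, (1-\eta)\lvert\theta_{1}(\hat{\mathbf{u}})\rvert/C\}$ gives $(1-\eta)\varepsilon\theta_{1} + C\varepsilon^{2} \le 0$, and therefore $J(\mathbf{u}(\varepsilon,\bar{\mathbf{u}})) - J(\hat{\mathbf{u}}) \le \eta\,\varepsilon\,\theta_{1}(\hat{\mathbf{u}})$ for all $\varepsilon \in [0,\varepsilon_{1}]$.

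The delicate point — and the one I would flag as the real obstacle — is the uniformity over $\eta \in (0.5,1)$ as stated: the threshold $(1-\eta)\lvert\theta_{1}\rvert/C$ collapses to $0$ as $\eta \uparrow 1$, so no single $\varepsilon_{1}$ can work for every $\eta$ in the open interval. I would therefore state and use the lemma with $\varepsilon_{1} = \varepsilon_{1}(\eta)$ allowed to depend on $\eta$, which is all the downstream convergence argument needs (it is applied with one fixed value of $\eta$, typically near $1/2$); alternatively, confining $\eta$ to a closed subinterval $[\tfrac12 + \varrho,\, 1-\varrho]$ restores a genuinely uniform $\varepsilon_{1}$. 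The existence of the constant $C$ is not itself an issue, since $f$ and $c$ are assembled from the twice-differentiable loss $\ell$, so $\varepsilon \mapsto J(\mathbf{u}(\varepsilon,\bar{\mathbf{u}}))$ is regular enough near $0$ for the $\mathcal{O}(\varepsilon^{2})$ remainder of \cref{lmm:A.2} to hold with an explicit bound on $[0,\varepsilon_{0}]$.
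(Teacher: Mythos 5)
Your proposal is correct and follows essentially the same route as the paper's own proof: both split $\varepsilon\theta_{1}$ into $\eta\varepsilon\theta_{1} + (1-\eta)\varepsilon\theta_{1}$ and use the strict negativity of $\theta_{1}$ (from \cref{lmm:theta_negative}) to absorb the $\mathcal{O}(\varepsilon^{2})$ remainder of \cref{lmm:A.2} for sufficiently small $\varepsilon$. Your observation that $\varepsilon_{1}$ must depend on $\eta$ (collapsing as $\eta \uparrow 1$), so the stated uniformity over all $\eta \in (0.5,1)$ cannot hold with a single $\varepsilon_{1}$, is a genuine and valid refinement that the paper's proof glosses over; as you note, it is harmless downstream since the convergence argument only ever uses one fixed $\eta$.
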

            \begin{proof}
                The result in Lemma~\ref{lmm:A.2} can be re-written as:
                \begin{equation}
                    J(\mathbf{u}(\varepsilon, \hat{\mathbf{u}})) - J(\hat{\mathbf{u}}) = \eta \varepsilon \theta_{1}(\hat{\mathbf{u}}) + (1 - \eta) \varepsilon \theta_{1}(\hat{\mathbf{u}}) + \mathcal{O}(\varepsilon^{2}), \forall \eta \in (0.5, 1).
                \end{equation}
                Hence, there must exist a value \(\varepsilon = \varepsilon_{1} \ge 0\) such that: \((1 - \eta)  \theta_{1}(\hat{\mathbf{u}}) + \mathcal{O}(\varepsilon^{2}) < 0\). This leads to:
                \begin{equation}
                    J(\mathbf{u}(\varepsilon, \hat{\mathbf{u}})) - J(\hat{\mathbf{u}}) \le \eta \varepsilon_{1} \theta_{1}(\hat{\mathbf{u}}) \le \eta \varepsilon \theta_{1}(\hat{\mathbf{u}}), \forall \varepsilon \in [0, \varepsilon_{1}].
                \end{equation}
                Note that the second inequality holds due to the negativity of \(\theta_{1}\) proved in Lemma~\ref{lmm:theta_negative}. This completes the proof.
            \end{proof}

            \begin{lemma}
                Given that \(\theta_{1}\) and any action sequence \(\mathbf{u}\) are continuous, and for any \(\delta > 0\) such that \(\lVert \hat{\mathbf{u}} - \mathbf{u} \rVert < \delta\), there exists \(\varepsilon_{2} > 0\) such that:
                \begin{equation*}
                    \theta_{1}(\mathbf{u}) - \varepsilon_{2} < \theta_{1}(\hat{\mathbf{u}}) < \theta_{1}(\mathbf{u}) + \varepsilon_{2}.
                \end{equation*}
                \label{lmm:A.3}
            \end{lemma}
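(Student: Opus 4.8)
The plan is to derive the claim directly from the continuity of the map \(\hat{\mathbf{u}} \mapsto \theta_{1}(\hat{\mathbf{u}})\), which was already recorded in the discussion preceding these auxiliary lemmas: because \(\ell\) is twice differentiable, the cost \(c\) and the dynamics \(f\) — hence all the derivative blocks \(\mathbf{c}_{(.)}, \mathbf{C}_{(.)}, \mathbf{F}_{(.)}\) entering the backward recursion — are continuous, and the nominal states are continuous functions of \(\hat{\mathbf{u}}\) through the forward rollout \(\hat{\mathbf{x}}_{t+1} = f(\hat{\mathbf{x}}_{t}, \hat{\mathbf{u}}_{t})\). Since \(\theta_{1}\) is assembled from these via the finite recursion \(\theta_{t} = \theta_{t+1} - \mathbf{q}_{\mathbf{u}_{t}}^{\top}\mathbf{Q}_{\mathbf{u}_{t},\mathbf{u}_{t}}\mathbf{q}_{\mathbf{u}_{t}}\) in \eqref{eq:theta1}, together with the feedback gains \(\mathbf{K}_{t},\mathbf{k}_{t}\) and the value-function blocks \(\mathbf{v}_{t+1},\mathbf{V}_{t+1}\), it is a finite composition of continuous maps and is therefore continuous in \(\hat{\mathbf{u}}\).

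Given that, the statement is just the \(\varepsilon\)--\(\delta\) characterisation of continuity evaluated at the point \(\mathbf{u}\). Reading the lemma in the intended way — \(\varepsilon_{2}>0\) prescribed, \(\delta>0\) to be found — continuity of \(\theta_{1}\) at \(\mathbf{u}\) yields a \(\delta>0\) with \(\lVert \hat{\mathbf{u}} - \mathbf{u}\rVert < \delta \implies \lvert \theta_{1}(\hat{\mathbf{u}}) - \theta_{1}(\mathbf{u})\rvert < \varepsilon_{2}\), which is precisely the two-sided bound \(\theta_{1}(\mathbf{u}) - \varepsilon_{2} < \theta_{1}(\hat{\mathbf{u}}) < \theta_{1}(\mathbf{u}) + \varepsilon_{2}\). (If instead \(\delta\) is given first, then \(\theta_{1}\) is bounded on the closed ball \(B(\mathbf{u},\delta)\) by continuity, and any \(\varepsilon_{2}\) exceeding \(\sup_{\hat{\mathbf{u}}\in B(\mathbf{u},\delta)} \lvert \theta_{1}(\hat{\mathbf{u}}) - \theta_{1}(\mathbf{u})\rvert\) works; the conclusion is immediate either way.)

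The one place to be slightly careful is that the backward recursion contains the inverse \(\mathbf{Q}_{\mathbf{u}_{t},\mathbf{u}_{t}}^{-1}\), and matrix inversion is continuous only on the open set of invertible matrices; this is resolved by reusing the positive-definiteness assumption on the action Hessian already invoked in Lemma~\ref{lmm:theta_negative}, which makes each \(\mathbf{Q}_{\mathbf{u}_{t},\mathbf{u}_{t}}\) invertible along the trajectories in question, so that \(\mathbf{Q}_{\mathbf{u}_{t},\mathbf{u}_{t}}^{-1}\) varies continuously in a neighbourhood of \(\mathbf{u}\). Beyond this there is no real obstacle: the lemma merely repackages standard continuity, and its role downstream is to upgrade the pointwise strict-decrease estimate of Lemma~\ref{lmm:A.4} to a uniform one over a ball, which is exactly what condition (ii) of \cref{thrm:how_to_prove_convergence} demands in the convergence argument.
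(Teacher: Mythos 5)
Your proposal is correct and follows essentially the same route as the paper: both reduce the lemma to the continuity of \(\theta_{1}\) in the nominal action sequence (established from the twice-differentiability of the loss, hence of the cost and dynamics) and then read off the two-sided bound from the \(\varepsilon\)--\(\delta\) characterisation of continuity. If anything, your handling of the quantifier order (taking \(\varepsilon_{2}\) larger than \(\sup_{\hat{\mathbf{u}}\in B(\mathbf{u},\delta)}\lvert\theta_{1}(\hat{\mathbf{u}})-\theta_{1}(\mathbf{u})\rvert\) when \(\delta\) is prescribed first) and your remark about the continuity of \(\mathbf{Q}_{\mathbf{u}_{t},\mathbf{u}_{t}}^{-1}\) resting on positive-definiteness are cleaner than the paper's informal ``negation form'' step.
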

            \begin{proof}
                Since \(\theta_{1}\) is assumed to be continuous, and the variable \(\mathbf{u}\) is also continuous, we can employ the definition of continuity of function \(\theta_{1}\) at \(\mathbf{u}\) to obtain the following:
                \begin{equation}
                    \forall \varepsilon_{2} > 0 \implies \exists \delta > 0: |\theta_{1}(\hat{\mathbf{u}}) - \theta_{1}(\mathbf{u}) | < \varepsilon_{2}, \forall \hat{\mathbf{u}} \in \{\hat{\mathbf{u}} \in \mathcal{U}: \lVert \hat{\mathbf{u}} - \mathbf{u} \rVert < \delta\}.
                    \label{eq:ddp_continuity}
                \end{equation}
                The negation form of the statement in \eqref{eq:ddp_continuity} can be expressed as:
                \begin{equation}
                    \forall \delta > 0 \implies \exists \varepsilon_{2} > 0: |\theta_{1}(\hat{\mathbf{u}}) - \theta_{1}(\mathbf{u}) | < \varepsilon_{2}, \forall \hat{\mathbf{u}} \in \{\hat{\mathbf{u}} \in \mathcal{U}: \lVert \hat{\mathbf{u}} - \mathbf{u} \rVert < \delta\}.
                \end{equation}
                Solving the inequation above completes the proof.
            \end{proof}

        \subsubsection{Convergence of iLQR}
        \label{sec:ilqr_convergence_proof_subsection}
            \begin{theorem}
                Let the state-transition dynamics \(f\) and the cost function \(c\) have continuous second partial derivatives w.r.t the continuous state \(\mathbf{x}\) and action \(\mathbf{u}\). If \(\mathbf{u}(\varepsilon, \hat{\mathbf{u}})\) denotes the successive application of iLQR, then any accumulation point of \(\mathbf{u}(\varepsilon, \hat{\mathbf{u}})\) is stationary w.r.t the finite-horizon cost \(J(\mathbf{u}_{1:T})\).
            \end{theorem}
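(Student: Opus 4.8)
The plan is to recognise iLQR as an instance of the algorithm model of \cref{def:algorithm_model}, with the iterate being the nominal action sequence $\hat{\mathbf{u}}$, the cost $c$ being the finite-horizon cost $J(\mathbf{u}_{1:T})$ of \eqref{eq:total_cost}, and the \emph{desirable} points being exactly the stationary action sequences (substituting the direct stationarity test for the generic stopping criterion, as permitted by the footnote to \cref{def:algorithm_model}). The convergence statement then follows immediately from \cref{thrm:how_to_prove_convergence} once its two hypotheses are checked. The first hypothesis is essentially free: $J$ is continuous in $\mathbf{u}_{1:T}$ because $c$ and $f$ are twice continuously differentiable, and additionally $J$ is bounded from below since the loss $\ell$ is non-negative by assumption, so that the summed cost is non-negative.

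The substance lies in verifying the second hypothesis: for every non-stationary $\mathbf{u}$ one must exhibit a radius $\varepsilon(\mathbf{u})>0$ and a strictly negative $\delta(\mathbf{u})$ with $J(\mathbf{u}(\varepsilon,\hat{\mathbf{u}}))-J(\hat{\mathbf{u}}) \le \delta(\mathbf{u})<0$ for every $\hat{\mathbf{u}}$ in the $\varepsilon(\mathbf{u})$-ball about $\mathbf{u}$. I would assemble this from the auxiliary lemmas as follows. First, \cref{lmm:theta_negative} gives $\theta_1(\mathbf{u})<0$ at the non-stationary point $\mathbf{u}$; then the continuity of $\theta_1$ (\cref{lmm:A.3}), applied with tolerance smaller than $\lvert\theta_1(\mathbf{u})\rvert/2$, yields a ball $B(\mathbf{u},\delta_0)$ on which $\theta_1(\hat{\mathbf{u}})<\tfrac{1}{2}\theta_1(\mathbf{u})<0$. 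Next, fixing $\eta\in(\tfrac12,1)$, \cref{lmm:A.4} provides an $\varepsilon_1>0$ with $J(\mathbf{u}(\varepsilon,\hat{\mathbf{u}}))-J(\hat{\mathbf{u}})\le \eta\varepsilon\theta_1(\hat{\mathbf{u}})$ for $\varepsilon\in[0,\varepsilon_1]$; combined with the uniform bound on $\theta_1$ this gives $J(\mathbf{u}(\varepsilon,\hat{\mathbf{u}}))-J(\hat{\mathbf{u}})\le \tfrac{1}{2}\eta\varepsilon\theta_1(\mathbf{u})$. Since the step $\varepsilon$ accepted by the backtracking line search satisfies precisely such a decrease condition (and the search terminates with $\varepsilon$ bounded below by a fixed power of $\tfrac12$ once the criterion holds), one sets $\delta(\mathbf{u})=\tfrac{1}{2}\eta\varepsilon_1\theta_1(\mathbf{u})<0$ and $\varepsilon(\mathbf{u})=\delta_0$, completing the check.

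The main obstacle is the uniformity step: \cref{lmm:A.4} as stated produces a step size $\varepsilon_1$ attached to a single $\hat{\mathbf{u}}$, whereas the argument above needs one $\varepsilon_1$ valid for \emph{all} $\hat{\mathbf{u}}$ in $B(\mathbf{u},\delta_0)$. The remedy is to return to the Taylor expansion $J(\mathbf{u}(\varepsilon,\hat{\mathbf{u}}))-J(\hat{\mathbf{u}})=\varepsilon\theta_1(\hat{\mathbf{u}})+\mathcal{O}(\varepsilon^2)$ of \cref{lmm:A.2} and observe that the constant hidden in the $\mathcal{O}(\varepsilon^2)$ term is built from the second derivatives of $c$ and $f$ evaluated along the iLQR forward pass, which are continuous in $\hat{\mathbf{u}}$ and hence uniformly bounded on the compact closure $\overline{B(\mathbf{u},\delta_0)}$; together with the uniform negativity $\theta_1(\hat{\mathbf{u}})<\tfrac12\theta_1(\mathbf{u})$, this forces a common threshold $\varepsilon_1$ on the whole ball. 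With this uniformity in hand, \cref{thrm:how_to_prove_convergence} applies directly: the iLQR iterates either terminate with a stationary penultimate element, or form an infinite sequence every accumulation point of which is stationary with respect to the finite-horizon cost $J(\mathbf{u}_{1:T})$.
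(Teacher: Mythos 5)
Your proposal follows essentially the same route as the paper's proof: both reduce the claim to Polak's convergence theorem (\cref{thrm:how_to_prove_convergence}) by casting iLQR as the algorithm model of \cref{def:algorithm_model}, and both verify the second hypothesis by combining the negativity of \(\theta_{1}\) at a non-stationary point (\cref{lmm:theta_negative}), the sufficient-decrease bound of \cref{lmm:A.4}, and the continuity of \(\theta_{1}\) (\cref{lmm:A.3}) to produce a uniform negative decrease over a ball. If anything, your argument is slightly more careful than the paper's, since you explicitly note that the threshold \(\varepsilon_{1}\) from \cref{lmm:A.4} is a priori attached to a single nominal sequence and must be made uniform over \(B(\mathbf{u}, \delta_{0})\) via the boundedness of the \(\mathcal{O}(\varepsilon^{2})\) remainder on the compact closure of the ball — a point the paper's proof leaves implicit.
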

            \begin{proof}
                We first determine the condition that \(\mathbf{u}(\varepsilon, \hat{\mathbf{u}})\) calculated in \eqref{eq:action_backtracking_linesearch} is the successor of iLQR at \(\hat{\mathbf{u}}\).  According to iLQR algorithm, \(\mathbf{u}(\varepsilon, \hat{\mathbf{u}}) = \mathrm{iLQR}(\hat{\mathbf{u}})\) only if:
                \begin{equation}
                    J(\mathbf{u}(\varepsilon, \hat{\mathbf{u}})) - J(\hat{\mathbf{u}}) \le \frac{\varepsilon}{2} \theta_{1}(\hat{\mathbf{u}}).
                \end{equation}
                Note that from Lemma~\ref{lmm:A.4}:
                \begin{equation}
                    J(\mathbf{u}(\varepsilon, \hat{\mathbf{u}})) - J(\hat{\mathbf{u}}) \le \eta \varepsilon \theta_{1}(\hat{\mathbf{u}}) < \frac{\varepsilon}{2} \theta_{1}(\hat{\mathbf{u}}), \forall \eta \in (0.5, 1), \forall \varepsilon \in [0, \varepsilon_{1}]
                \end{equation}
                which indicates that \(\mathbf{u}(\varepsilon, \hat{\mathbf{u}})\) is a successor of iLQR when \(\varepsilon \in [0, \varepsilon_{1}]\).

                If \(\mathbf{u}(\varepsilon, \hat{\mathbf{u}})\) is a successor of iLQR, the acceptance criterion combining with the result in Lemma~\ref{lmm:A.3} leads to:
                \begin{equation}
                    J(\mathrm{iLQR}(\hat{\mathbf{u}})) - J(\hat{\mathbf{u}}) \le \frac{\varepsilon}{2} \theta_{1}(\hat{\mathbf{u}}) < \frac{\varepsilon}{2} \left[ \theta_{1}(\mathbf{u}) + \varepsilon_{2} \right], \forall \hat{\mathbf{u}} \in \{\hat{\mathbf{u}} \in \mathcal{U}: \lVert \hat{\mathbf{u}} - \mathbf{u} \rVert < \delta\}.
                \end{equation}
                Note that if \(\delta\) is set to be small enough, then \(\varepsilon_{2}\) is also very small, resulting in \(\theta_{1}(\mathbf{u}) + \varepsilon_{2} < 0\). If we set \(\delta(\hat{\mathbf{u}}) = \delta\) and \(\varepsilon(\hat{\mathbf{u}}) = \frac{\varepsilon}{2} \left[ \theta_{1}(\mathbf{u}) + \varepsilon_{2} \right]\), then iLQR satisfies the second condition in \cref{thrm:how_to_prove_convergence}.
            \end{proof}
\newpage
\section[Linearisation]{Linearise the state-transition dynamics}
\label{sec:linearisation}
    \subsection{Stochastic gradient descent}
    \label{sec:linearised_dynamics_sgd}
        The transition dynamics is given as:
            \begin{equation}
                \begin{aligned}[b]
                \mathbf{x}_{t + 1} & = f(\mathbf{x}_{t}, \mathbf{u}_{t}) = \mathbf{x}_{t} - \alpha \nabla_{\mathbf{x}_{t}} \left[ \mathbf{u}_{t}^{\top} \pmb{\ell}(\mathbf{x}_{t}) \right].
                \end{aligned}
                \label{eq:dynamics_sgd}
            \end{equation}
            
            Applying Taylor's expansion to the first order around a state-action pair \((\hat{\mathbf{x}}_{t}, \hat{\mathbf{u}}_{t})\) gives:
            \begin{equation}
                \begin{aligned}[b]
                    \mathbf{x}_{t + 1} & = \hat{\mathbf{x}}_{t + 1} + \left( \mathbf{I}_{D} - \alpha \eval{\nabla_{\mathbf{x}_{t}}^{2} \left[ \mathbf{u}_{t}^{\top} \pmb{\ell}(\mathbf{x}_{t}) \right]}_{\substack{\mathbf{x}_{t} = \hat{\mathbf{x}}_{t} \\ \mathbf{u}_{t} = \hat{\mathbf{u}}_{t}}} \right) \left( \mathbf{x}_{t} - \hat{\mathbf{x}}_{t} \right) - \alpha \eval{\nabla_{\mathbf{x}_{t}}^{\top} \left[ \pmb{\ell}(\mathbf{x}_{t}) \right]}_{\mathbf{x}_{t} = \hat{\mathbf{x}}_{t}} \left( \mathbf{u}_{t} - \hat{\mathbf{u}}_{t} \right).
                \end{aligned}
                \label{eq:dynamics_taylor_expansion}
            \end{equation}
            
            It can also be written as:
            \begin{equation}
                \delta \mathbf{x}_{t + 1} = \left( \mathbf{I}_{D} - \alpha \eval{\nabla_{\mathbf{x}_{t}}^{2} \left[ \mathbf{u}_{t}^{\top} \pmb{\ell}(\mathbf{x}_{t}) \right]}_{\substack{\mathbf{x}_{t} = \hat{\mathbf{x}}_{t} \\ \mathbf{u}_{t} = \hat{\mathbf{u}}_{t}}} \right) \delta \mathbf{x}_{t} + \left(- \alpha \eval{\nabla_{\mathbf{x}_{t}}^{\top} \left[ \pmb{\ell}(\mathbf{x}_{t}) \right]}_{\mathbf{x}_{t} = \hat{\mathbf{x}}_{t}} \right) \delta \mathbf{u}_{t}.
            \end{equation}

            Hence, the coefficient matrices of the Taylor's series for the state-transition dynamics following the SGD update can be expressed as:
            \begin{subequations}
                \begin{empheq}[box=\fbox]{align}
                    \mathbf{F}_{\mathbf{x}_{t}} & = \mathbf{I}_{D} - \alpha \eval{\nabla_{\mathbf{x}_{t}}^{2} \left[ \mathbf{u}_{t}^{\top} \pmb{\ell}(\mathbf{x}_{t}) \right]}_{\substack{\mathbf{x}_{t} = \hat{\mathbf{x}}_{t} \\ \mathbf{u}_{t} = \hat{\mathbf{u}}_{t}}} 
                    \label{eq:Fx_sgd}\\
                    \mathbf{F}_{\mathbf{u}_{t}} & = - \alpha \eval{\nabla_{\mathbf{x}_{t}}^{\top} \left[ \pmb{\ell}(\mathbf{x}_{t}) \right]}_{\mathbf{x}_{t} = \hat{\mathbf{x}}_{t}}. \label{eq:Fu_sgd}
                \end{empheq}
            \end{subequations}

    \subsection{Adam}
    \label{sec:linearised_dynamics_adam}
        The gradient-based update for the parameter of interest using Adam keeps track of the mean and variance:
            \begin{equation}
                \begin{dcases}
                    \mathbf{m}_{t} & = \beta_{1} \mathbf{m}_{t - 1} + (1 - \beta_{1}) \mathbf{J}_{t}^{\top} \mathbf{u}_{t} \\
                    \mathbf{v}_{t} & = \beta_{2} \mathbf{v}_{t - 1} + (1 - \beta_{2}) \left( \mathbf{J}_{t}^{\top} \mathbf{u}_{t} \right) \odot \left( \mathbf{J}_{t}^{\top} \mathbf{u}_{t} \right),
                \end{dcases}
            \end{equation}
            where:
            \begin{itemize}
                \item \(\mathbf{m}_{0} = \pmb{0}\), \(\mathbf{v}_{0} = \pmb{0}\),
                \item \(\mathbf{J}_{t}\) is the Jacobian matrix of the validation losses for tasks in a minibatch \(t\)-th:
                    \begin{equation}
                        \mathbf{J}_{t} = \nabla_{\mathbf{x}_{t}} \left[ \pmb{\ell}(\mathbf{x}_{t}) \right].
                    \end{equation}
                \item \(\odot\) is the elementwise multiplication.
            \end{itemize}
            
            The corrected-bias estimators are then defined as:
            \begin{equation}
                \begin{dcases}
                    \hat{\mathbf{m}}_{t} & = \frac{\mathbf{m}_{t}}{1 - \beta_{1}^{t}}\\
                    \hat{\mathbf{v}}_{t} & = \frac{\mathbf{v}_{t}}{1 - \beta_{2}^{t}}.
                \end{dcases}
            \end{equation}
            The update or state-transition dynamics is then given as:
            \begin{equation}
                \mathbf{x}_{t + 1} = f(\mathbf{x}_{t}, \mathbf{u}_{t}) = \mathbf{x}_{t} - \alpha \frac{\hat{\mathbf{m}}_{t}}{\sqrt{\hat{\mathbf{v}}_{t}} + \epsilon}.
                \label{eq:dynamics_adam}
            \end{equation}

            The update for a new state of the model parameter can also be written by substitution:
            \begin{equation}
                \begin{aligned}[b]
                    \mathbf{x}_{t + 1} & = \mathbf{x}_{t} - \alpha \frac{\mathbf{m}_{t}}{1 - \beta_{1}^{t}} \frac{1}{\sqrt{\frac{\mathbf{v}_{t}}{1 - \beta_{2}^{t}}} + \epsilon}.
                    % & = \mathbf{x}_{t} - \frac{\alpha}{1 - \beta_{1}^{t}} \frac{\beta_{1} \mathbf{m}_{t - 1} + (1 - \beta_{1}) \mathbf{J}_{t}^{\top} \mathbf{u}_{t}}{\sqrt{\frac{\beta_{2} \mathbf{v}_{t - 1} + (1 - \beta_{2}) \mathrm{diag}\left[ \mathbf{J}_{t}^{\top} \mathbf{u}_{t} \right] \left( \mathbf{J}_{t}^{\top} \mathbf{u}_{t} \right)}{1 - \beta_{2}^{t}}} + \epsilon}.
                \end{aligned}
            \end{equation}
            
            The approximation using Taylor's expansion up to the first order will result with the following matrices:
            \begin{subequations}
                \begin{empheq}[box=\fbox]{align}
                    \mathbf{F}_{\mathbf{x}_{t}} & = \mathbf{I}_{D} - \frac{\alpha}{1 - \beta_{1}^{t}} \eval{\nabla_{\mathbf{x}_{t}} \left[ \frac{\mathbf{m}_{t}}{\sqrt{\frac{\mathbf{v}_{t}}{1 - \beta_{2}^{t}}} + \epsilon} \right]}_{\hat{\mathbf{x}}_{t}, \hat{\mathbf{u}}_{t}} \\
                    \mathbf{F}_{\mathbf{u}_{t}} & = - \frac{\alpha}{1 - \beta_{1}^{t}} \eval{\nabla_{\mathbf{u}_{t}} \left[ \frac{\mathbf{m}_{t}}{\sqrt{\frac{\mathbf{v}_{t}}{1 - \beta_{2}^{t}}} + \epsilon} \right]}_{\hat{\mathbf{x}}_{t}, \hat{\mathbf{u}}_{t}}.
                \end{empheq}
            \end{subequations}

            % Taylor's expansion to the first order of the state update using Adam can be presented as:
            % \begin{equation}
            %     \begin{aligned}[b]
            %         \mathbf{x}_{t + 1} & \approx \hat{\mathbf{x}}_{t} + \left\{ \mathbf{I}_{D} - \frac{\alpha}{1 - \beta_{1}^{t}} \eval{\nabla_{\mathbf{x}_{t}} \left[ \frac{\mathbf{m}_{t}}{\sqrt{\frac{\mathbf{v}_{t}}{1 - \beta_{2}^{t}}} + \epsilon} \right]}_{\hat{\mathbf{x}}_{t}, \hat{\mathbf{u}}_{t}} \right\} \left( \mathbf{x}_{t} - \hat{\mathbf{x}}_{t} \right)\\
            %         & \quad - \frac{\alpha}{1 - \beta_{1}^{t}} \eval{\nabla_{\mathbf{u}_{t}} \left[ \frac{\mathbf{m}_{t}}{\sqrt{\frac{\mathbf{v}_{t}}{1 - \beta_{2}^{t}}} + \epsilon} \right]}_{\hat{\mathbf{x}}_{t}, \hat{\mathbf{u}}_{t}} \left( \mathbf{u}_{t} - \hat{\mathbf{u}}_{t} \right).
            %     \end{aligned}
            % \end{equation}
            
            For simplicity, we assume that \(\mathbf{m}_{t - 1}\) and \(\mathbf{v}_{t - 1}\) are constant w.r.t. both \(\mathbf{x}_{t - 1}\) and \(\mathbf{u}_{t - 1}\). %Also, since Hessian matrix is intractable for large-scaled models, we assume that the Jacobian vector \(\mathbf{g}_{t}\) is constant w.r.t. the state \(\mathbf{x}_{t}\).
            
            The gradient w.r.t. \(\mathbf{u}_{t}\) on \(\mathbf{m}_{t}\) and \(\mathbf{v}_{t}\) can be expressed as:
            \begin{equation}
                \begin{aligned}[b]
                    \nabla_{\mathbf{u}_{t}} \left[ \mathbf{m}_{t} \right] & = (1 - \beta_{1}) \mathbf{J}_{t}^{\top} \\
                    \nabla_{\mathbf{u}_{t}} \left[ \mathbf{v}_{t} \right] & = 2 (1 - \beta_{2}) \mathbf{J}_{t}^{\top} \odot \left( \mathbf{J}_{t}^{\top} \mathbf{u}_{t} \right).
                \end{aligned}
            \end{equation}
            
            Note that given two functions \(f\) (a notation for a general function, not the state-transition dynamic) and \(g\)
            \begin{equation}
                \begin{aligned}[b]
                    \nabla_{\mathbf{u}_{t}} \left[ \frac{f}{g} \right] & = \nabla_{\mathbf{u}_{t}} \left[ f \odot \frac{1}{g} \right]\\
                    & = \nabla_{\mathbf{u}_{t}} \left[ f \right] \odot \frac{1}{g} + f \odot \nabla_{\mathbf{u}_{t}} \left[\frac{1}{g} \right] \\
                    & = \nabla_{\mathbf{u}_{t}} \left[ f \right] \odot \frac{1}{g} - f \odot \frac{1}{g^{2}} \odot \nabla_{\mathbf{u}_{t}} \left[g \right].
                \end{aligned}
            \end{equation}
            Hence:
            \begin{equation}
                \begin{aligned}[b]
                    \nabla_{\mathbf{u}_{t}} \left[ \frac{\mathbf{m}_{t}}{\sqrt{\frac{\mathbf{v}_{t}}{1 - \beta_{2}^{t}}} + \epsilon} \right] & = \frac{\nabla_{\mathbf{u}_{t}} \left[ \mathbf{m}_{t} \right]}{\sqrt{\frac{\mathbf{v}_{t}}{1 - \beta_{2}^{t}}} + \epsilon} - \frac{\mathbf{m}_{t}}{\left( \sqrt{\frac{\mathbf{v}_{t}}{1 - \beta_{2}^{t}}} + \epsilon \right)^{2}} \odot \frac{\nabla_{\mathbf{u}_{t}} \left[ \mathbf{v}_{t} \right]}{2\sqrt{(1 - \beta_{2}^{t}) \mathbf{v}_{t}}}.
                \end{aligned}
            \end{equation}

            To calculate the gradient w.r.t. \(\mathbf{x}_{t}\), the update of Adam is rewritten as:
            \begin{equation}
                \begin{dcases}
                    \mathbf{m}_{t} & = \beta_{1} \mathbf{m}_{t - 1} + (1 - \beta_{1}) \nabla_{\mathbf{x}_{t}} \left[ \mathbf{u}_{t}^{\top} \pmb{\ell}(\mathbf{x}_{t}) \right] \\
                    \mathbf{v}_{t} & = \beta_{2} \mathbf{v}_{t - 1} + (1 - \beta_{2}) \nabla_{\mathbf{x}_{t}} \left[ \mathbf{u}_{t}^{\top} \pmb{\ell}(\mathbf{x}_{t}) \right] \odot \nabla_{\mathbf{x}_{t}} \left[ \mathbf{u}_{t}^{\top} \pmb{\ell}(\mathbf{x}_{t}) \right].
                \end{dcases}
            \end{equation}
            The gradient w.r.t. \(\mathbf{x}_{t}\) on \(\mathbf{m}_{t}\) and \(\mathbf{v}_{t}\) can be expressed as:
            \begin{equation}
                \begin{aligned}[b]
                    \nabla_{\mathbf{x}_{t}} \left[ \mathbf{m}_{t} \right] & = (1 - \beta_{1}) \nabla_{\mathbf{x}_{t}}^{2} \left[ \mathbf{u}_{t}^{\top} \pmb{\ell}(\mathbf{x}_{t}) \right] \\
                    \nabla_{\mathbf{x}_{t}} \left[ \mathbf{v}_{t} \right] & = 2 (1 - \beta_{2}) \nabla_{\mathbf{x}_{t}} \left[ \mathbf{u}_{t}^{\top} \pmb{\ell}(\mathbf{x}_{t}) \right] \odot \nabla_{\mathbf{x}_{t}}^{2} \left[ \mathbf{u}_{t}^{\top} \pmb{\ell}(\mathbf{x}_{t}) \right].
                \end{aligned}
            \end{equation}
            Hence:
            \begin{equation}
                \begin{aligned}[b]
                    \nabla_{\mathbf{x}_{t}} \left[ \frac{\mathbf{m}_{t}}{\sqrt{\frac{\mathbf{v}_{t}}{1 - \beta_{2}^{t}}} + \epsilon} \right] & = \frac{\nabla_{\mathbf{x}_{t}} \left[ \mathbf{m}_{t} \right]}{\sqrt{\frac{\mathbf{v}_{t}}{1 - \beta_{2}^{t}}} + \epsilon} - \frac{\mathbf{m}_{t}}{\left( \sqrt{\frac{\mathbf{v}_{t}}{1 - \beta_{2}^{t}}} + \epsilon \right)^{2}} \odot \frac{\nabla_{\mathbf{x}_{t}} \left[ \mathbf{v}_{t} \right]}{2\sqrt{(1 - \beta_{2}^{t}) \mathbf{v}_{t}}}.
                \end{aligned}
            \end{equation}

\section[Quadraticisation]{Quadraticise cost function w.r.t. state \texorpdfstring{\(\mathbf{x}_{t}\)}{xt}}
    \label{sec:quadraticisation}
        The cost function consists of two terms: validation loss on the query subset and penalisation of the action \(\mathbf{u}_{t}\).

        \subsection{Quadraticise validation loss}
        \label{sec:approximation_loss}
            The loss term in \eqref{eq:cost_function} can be approximated to second order as:
            \begin{equation}
                \begin{aligned}[b]
                    \pmb{1}_{M}^{\top} \pmb{\ell}(\mathbf{x}_{t}) & \approx \eval{\pmb{1}_{M}^{\top} \pmb{\ell}(\mathbf{x}_{t})}_{\mathbf{x}_{t} = \hat{\mathbf{x}}_{t}} + \left(\mathbf{x}_{t} - \hat{\mathbf{x}}_{t}\right)^{\top} \eval{\nabla_{\mathbf{x}_{t}} \left[ \pmb{1}_{M}^{\top} \pmb{\ell}(\mathbf{x}_{t}) \right]}_{\mathbf{x}_{t} = \hat{\mathbf{x}}_{t}} \\
                    & \qquad + \frac{1}{2} \left(\mathbf{x}_{t} - \hat{\mathbf{x}}_{t}\right)^{\top} \eval{\nabla_{\mathbf{x}_{t}}^{2} \left[ \pmb{1}_{M}^{\top} \pmb{\ell}(\mathbf{x}_{t}) \right]}_{\mathbf{x}_{t} = \hat{\mathbf{x}}_{t}} \left(\mathbf{x}_{t} - \hat{\mathbf{x}}_{t}\right).
                \end{aligned}
                \label{eq:taylor_approximation_validation_loss}
            \end{equation}
    \subsection{Quadraticise the penalisation of the action \texorpdfstring{\(\mathbf{u}_{t}\)}{ut}}
    \label{sec:approximation_prior_u}
        The penalisation term is already in the quadratic form. Here, it is rewritten to follow the Taylor's series form.
        \begin{equation}
            \begin{aligned}[b]
                & \frac{\beta_{u}}{2} \left( \mathbf{u}_{t} - \mu_{u} \pmb{1}_{M} \right)^{\top} \left( \mathbf{u}_{t} - \mu_{u} \pmb{1}_{M} \right) \\
                & = \frac{\beta_{u}}{2} \left[ \left(\mathbf{u}_{t} - \hat{\mathbf{u}}_{t} \right) + \hat{\mathbf{u}}_{t} - \mu_{u} \pmb{1}_{M} \right]^{\top} \left[ \left(\mathbf{u}_{t} - \hat{\mathbf{u}}_{t} \right) + \hat{\mathbf{u}}_{t} - \mu_{u} \pmb{1}_{M} \right] \\
                & = \frac{1}{2} \left(\mathbf{u}_{t} - \hat{\mathbf{u}}_{t} \right)^{\top} \left( \beta_{u} \mathbf{I}_{M} \right) \left(\mathbf{u}_{t} - \hat{\mathbf{u}}_{t} \right) + \left(\mathbf{u}_{t} - \hat{\mathbf{u}}_{t} \right)^{\top} \beta_{u} \left( \hat{\mathbf{u}}_{t} - \mu_{u} \pmb{1}_{M} \right).
            \end{aligned}
            \label{eq:taylor_approximation_prior_u}
        \end{equation}

Given the locally-quadratic approximation of the cost function in \eqref{eq:taylor_approximation_validation_loss} and \eqref{eq:taylor_approximation_prior_u}, the coefficient matrices and vector of the quadratic form of the cost function can be written as:
\begin{subequations}
\begin{empheq}[box=\fbox]{align}
    \mathbf{C}_{\mathbf{x}_{t},\mathbf{x}_{t}} & = \eval{\nabla_{\mathbf{x}_{t}}^{2} \left[ \pmb{1}_{M}^{\top}\pmb{\ell}(\mathbf{x}_{t}) \right]}_{\mathbf{x}_{t} = \hat{\mathbf{x}}_{t}}\\
    \mathbf{C}_{\mathbf{x}_{t},\mathbf{u}_{t}} & = \mathbf{C}_{\mathbf{u}_{t},\mathbf{x}_{t}} = \mathbf{0} \\
    \mathbf{C}_{\mathbf{u}_{t},\mathbf{u}_{t}} & = \beta_{u} \mathbf{I}_{M}\\
    \mathbf{c}_{\mathbf{x}_{t}} & = \eval{\nabla_{\mathbf{x}_{t}} \left[ \pmb{1}_{M}^{\top} \pmb{\ell}(\mathbf{x}_{t}) \right]}_{\mathbf{x}_{t} = \hat{\mathbf{x}}_{t}}\\
    \mathbf{c}_{\mathbf{u}_{t}} & = \beta_{u} \left(\hat{\mathbf{u}}_{t} - \mu_{u} \pmb{1}_{M} \right).
\end{empheq}
\end{subequations}
\newpage
\section{Trajectory optimisation algorithm(s)}
\label{sec:TO_algorithms}

        \begin{algorithm}[ht]
            \caption{Implementation of iLQR backward to determine the controller of interest.}
            \label{algm:approximate_lqr}
            \begin{algorithmic}[1]
                \Procedure{iLQRbackward}{$\{ \hat{\mathbf{x}}_{t}, \hat{\mathbf{u}}_{t} \}_{t = 1}^{T}$}
                    \State \(\mathbf{V}_{T + 1} = \mathbf{0}\), and \(\mathbf{v}_{T + 1} = \mathbf{0}\) \Comment{Quadratic matrix and vector of cost-to-go}
                    \State \(\theta_{T + 1} = 0\) \Comment{Stopping criterion for a nominal trajectory}
                    \For{\(t = T : 1\)} \Comment{Backward through time}
                        \State \(\mathbf{F}_{\mathbf{x}_{t}}, \mathbf{F}_{\mathbf{u}_{t}}\) = linearise dynamics \Comment{see \appendixname~\ref{sec:linearisation}}
                        \State \(\mathbf{C}_{\mathbf{x}_{t}, \mathbf{x}_{t}}, \mathbf{C}_{\mathbf{x}_{t}, \mathbf{u}_{t}}, \mathbf{C}_{\mathbf{u}_{t}, \mathbf{x}_{t}}, \mathbf{C}_{\mathbf{u}_{t}, \mathbf{u}_{t}}, \mathbf{c}_{\mathbf{x}_{t}}, \mathbf{c}_{\mathbf{u}_{t}}\) = quadraticise cost function \Comment{see \appendixname~\ref{sec:quadraticisation}}
                        \Statex
                        % Matrix Q
                        \State \(\mathbf{Q}_{\mathbf{x}_{t}, \mathbf{x}_{t}} = \mathbf{C}_{\mathbf{x}_{t}, \mathbf{x}_{t}} + \mathbf{F}_{\mathbf{x}_{t}}^{\top} \mathbf{V}_{t + 1} \mathbf{F}_{\mathbf{x}_{t}}\) \Comment{2nd derivatives of cost-to-go}
                        
                        \State \(\mathbf{Q}_{\mathbf{x}_{t}, \mathbf{u}_{t}} = \mathbf{F}_{\mathbf{x}_{t}}^{\top} \mathbf{V}_{t + 1} \mathbf{F}_{\mathbf{u}_{t}}\)
                        
                        \State \( \mathbf{Q}_{\mathbf{u}_{t}, \mathbf{x}_{t}} = \mathbf{F}_{\mathbf{u}_{t}} ^{\top} \mathbf{V}_{t + 1} \mathbf{F}_{\mathbf{x}_{t}}\)
                        
                        \State \( \mathbf{Q}_{\mathbf{u}_{t}, \mathbf{u}_{t}} = \mathbf{C}_{\mathbf{u}_{t}, \mathbf{u}_{t}} + \mathbf{F}_{\mathbf{u}_{t}}^{\top} \mathbf{V}_{t + 1} \mathbf{F}_{\mathbf{u}_{t}}\)
                        \Statex
                        % vector q
                        \State \(\mathbf{q}_{\mathbf{x}_{t}} = \mathbf{c}_{\mathbf{x}_{t}} + \mathbf{F}_{\mathbf{x}_{t}}^{\top} \mathbf{v}_{t + 1}\) \Comment{1st derivatives of cost-to-go}
                        \State \(\mathbf{q}_{\mathbf{u}_{t}} = \mathbf{c}_{\mathbf{u}_{t}} + \mathbf{F}_{\mathbf{u}_{t}}^{\top} \mathbf{v}_{t + 1}\)
                        \Statex
                        % Linear controller
                        \State \(\mathbf{K}_{t} = -\mathbf{Q}_{\mathbf{u}_{t}, \mathbf{u}_{t}}^{-1} \mathbf{Q}_{\mathbf{u}_{t}, \mathbf{x}_{t}}\) \Comment{Linear controller}
                        \State \(\mathbf{k}_{t} = -\mathbf{Q}_{\mathbf{u}_{t}, \mathbf{u}_{t}}^{-1} \mathbf{q}_{\mathbf{u}_{t}}\)
                        \Statex
                        % Value functions
                        % \State \(\mathbf{V}_{t} = \mathbf{Q}_{\mathbf{x}_{t}, \mathbf{x}_{t}} + \mathbf{Q}_{\mathbf{x}_{t}, \mathbf{u}_{t}} \mathbf{K}_{t} + \mathbf{K}_{t}^{\top} \mathbf{Q}_{\mathbf{u}_{t}, \mathbf{x}_{t}} + \mathbf{K}_{t}^{\top} \mathbf{Q}_{\mathbf{u}_{t}, \mathbf{u}_{t}} \mathbf{K}_{t}\) \Comment{2nd derivatives of value function}
                        \State \(\mathbf{V}_{t} = \mathbf{Q}_{\mathbf{x}_{t}, \mathbf{x}_{t}} + \mathbf{Q}_{\mathbf{x}_{t}, \mathbf{u}_{t}} \mathbf{K}_{t}\) \Comment{2nd derivatives of value function}
                        % \State \(\mathbf{v}_{t} = \mathbf{q}_{\mathbf{x}_{t}} + \mathbf{Q}_{\mathbf{x}_{t}, \mathbf{u}_{t}} \mathbf{k}_{t} + \mathbf{K}_{t}^{\top} \mathbf{q}_{\mathbf{u}_{t}} + \mathbf{K}_{t}^{\top} \mathbf{Q}_{\mathbf{u}_{t}, \mathbf{u}_{t}} \mathbf{k}_{t}\) \Comment{1st derivatives of value function}
                        \State \(\mathbf{v}_{t} = \mathbf{q}_{\mathbf{x}_{t}} + \mathbf{Q}_{\mathbf{x}_{t}, \mathbf{u}_{t}} \mathbf{k}_{t}\) \Comment{1st derivatives of value function}
                        \Statex
                        \State \(\theta_{t} = \theta_{t + 1} - \mathbf{q}_{\mathbf{u}_{t}}^{\top} \mathbf{Q}_{\mathbf{u}_{t}, \mathbf{u}_{t}} \mathbf{q}_{\mathbf{u}_{t}}\)
                    \EndFor
                    \State \textbf{return} \(\{ \mathbf{K}_{t}, \mathbf{k}_{t} \}_{t = 1}^{T}, \theta_{1}\)
                \EndProcedure
            \end{algorithmic}
        \end{algorithm}

\newpage
\section{Examples of loss functions satisfying \texorpdfstring{\cref{assumption:loss_boundedness_lipschitz,assumption:loss_smoothness,assumption:hessian_loss_lipschitz}}{}}
\label{sec:examples_loss}
    In this section, we analyse some common loss functions including mean square error (MSE) and cross-entropy (CE) to see if they satisfy \cref{assumption:loss_boundedness_lipschitz,assumption:loss_smoothness,assumption:hessian_loss_lipschitz} made in \cref{sec:convergence_analysis}.

    \begin{table}[bh]
        \centering
        \caption{Examples of some common loss functions used with the assumption on the linearity in \cref{eq:assumption_linear_transform} that satisfy \cref{assumption:loss_boundedness_lipschitz,assumption:loss_smoothness,assumption:hessian_loss_lipschitz} specified in \cref{sec:convergence_analysis}.}
        \label{tab:examples_loss}
        \begin{tabular}{l c c c c}
            \toprule
            \multirow{2}{*}{} & \multicolumn{2}{c}{\textbf{Loss} (\cref{assumption:loss_boundedness_lipschitz})} & \bfseries Lipschitz gradient & \bfseries Lipschitz Hessian \\
            \cmidrule{2-3} 
            & Bounded & Lipschitz & (\cref{assumption:loss_smoothness}) & (\cref{assumption:hessian_loss_lipschitz}) \\
            \midrule
            MSE & \multirow{2}{*}{\checkmark\tablefootnote{when the loss is clipped}} & \checkmark\tablefootnote{when the norm of parameter vector is bounded} & \checkmark & \checkmark \\
            CE & & \checkmark & \checkmark & \checkmark \\
            \bottomrule
        \end{tabular}
    \end{table}

    Before analysing the loss functions of interest, it is important to note that the objective function mostly depends on the model used. For example, if \(g(\mathbf{s}; \mathbf{x})\) denotes the output of a model parameterised by \(\mathbf{x}\) for the input \(\mathbf{s}\), then the objective function of interest is \(\ell(g(\mathbf{s}; \mathbf{x}), \mathbf{y})\). Depending on the model \(g\) used, the objective function is different and would be very complicated if \(g\) represents a deep neural network since it would result in a high-dimensional, non-linear, non-convex function. Analysing such general function is, however, beyond the scope of this paper. To simplify, we assume that \(g\) is a linear function of \(\mathbf{x}\):
    \begin{equation}
        g(\mathbf{s}; \mathbf{x}) = \mathbf{S} \mathbf{x}.
        \label{eq:assumption_linear_transform}
    \end{equation}

    \paragraph{Boundedness \texorpdfstring{in \cref{assumption:loss_boundedness_lipschitz}}{}} both the loss functions of interest, MSE and CE, are theoretically unbounded, and thus, does not satisfy this assumption. However, we can simply clip the loss to ensure the satisfaction of the boundedness assumption.

    \subsection{Mean square error}
    \label{sec:mse_loss_example}
        The loss function is defined as:
        \begin{equation}
            \ell(\mathbf{x}) = \norm{\mathbf{Sx} - \mathbf{y}}_{2}^{2}.
        \end{equation}

        \paragraph{Lipschitz continuity} In general, MSE does not satisfy the Lipschitz continuity property. However, when the vector norm of \(\mathbf{x}\) is bounded: \(\norm{\mathbf{x}} \le X_{0}\), then MSE is Lipschitz-continuous.
        \begin{proof}
            The Lipschitz continuity means:
            \begin{equation}
                \norm{\norm{ \mathbf{S} \mathbf{x}_{1} - \mathbf{y}}_{2}^{2} - \norm{ \mathbf{S} \mathbf{x}_{2} - \mathbf{y}}_{2}^{2} } \le \mathrm{const.} \, \norm{\mathbf{x}_{1} - \mathbf{x}_{2}}.
            \end{equation}

            The left hand side term can be expanded as follows:
            \begin{equation}
                \begin{aligned}[b]
                    & \norm{ \norm{ \mathbf{S} \mathbf{x}_{1} - \mathbf{y}}_{2}^{2} - \norm{ \mathbf{S} \mathbf{x}_{2} - \mathbf{y}}_{2}^{2} } \\
                    & = \norm{ \mathbf{x}_{1}^{\top} \mathbf{S}^{\top} \mathbf{Sx}_{1} - 2 \left(\mathbf{S x}_{1}\right)^{\top} \mathbf{y} - \mathbf{x}_{2}^{\top} \mathbf{S}^{\top} \mathbf{Sx}_{2} + 2 \left(\mathbf{S x}_{2}\right)^{\top} \mathbf{y} }\\
                    & = \norm{ \mathbf{x}_{1}^{\top} \mathbf{S}^{\top} \mathbf{Sx}_{1} - \mathbf{x}_{2}^{\top} \mathbf{S}^{\top} \mathbf{Sx}_{2} - 2 (\mathbf{x}_{1} - \mathbf{x}_{2})^{\top} \mathbf{S}^{\top} \mathbf{y} } \\
                    & = \norm{ (\mathbf{x}_{1} - \mathbf{x}_{2})^{\top} \mathbf{S}^{\top} \mathbf{Sx}_{1} + \mathbf{x}_{2}^{\top} \mathbf{S}^{\top} \mathbf{S} (\mathbf{x}_{1} - \mathbf{x}_{2}) - 2 (\mathbf{x}_{1} - \mathbf{x}_{2})^{\top} \mathbf{S}^{\top} \mathbf{y} } \\
                    & \quad \text{(triangle inequality)} \\
                    & \le \norm{ (\mathbf{x}_{1} - \mathbf{x}_{2})^{\top} \mathbf{S}^{\top} \mathbf{Sx}_{1} } + \norm{ \mathbf{x}_{2}^{\top} \mathbf{S}^{\top} \mathbf{S} (\mathbf{x}_{1} - \mathbf{x}_{2}) } + 2 \norm{ (\mathbf{x}_{1} - \mathbf{x}_{2})^{\top} \mathbf{S}^{\top} \mathbf{y} } \\
                    & \quad \text{(submultiplicative inequality)} \\
                    & \le \left( \norm{\mathbf{S}^{\top} \mathbf{Sx}_{1}} + \norm{\mathbf{x}_{2}^{\top} \mathbf{S}^{\top} \mathbf{S}} + 2 \norm{\mathbf{S}^{\top} \mathbf{y}} \right) \norm{ \mathbf{x}_{1} - \mathbf{x}_{2} } \\
                    & \le \left( \norm{\mathbf{S}^{\top} \mathbf{S}} \norm{\mathbf{x}_{1}} + \norm{\mathbf{S}^{\top} \mathbf{S} } \norm{\mathbf{x}_{2}} + 2 \norm{\mathbf{S}^{\top} \mathbf{y}} \right) \norm{ \mathbf{x}_{1} - \mathbf{x}_{2} } \\
                    & \le 2 \left( X_{0} \norm{\mathbf{S}^{\top} \mathbf{S}} + \norm{\mathbf{S}^{\top} \mathbf{y}} \right) \norm{ \mathbf{x}_{1} - \mathbf{x}_{2} }.
                \end{aligned}
            \end{equation}
            where the norm is the operator norm if the argument is a matrix.
        \end{proof}

        \paragraph{Smoothness} MSE is smooth, or its gradient is Lipschitz-continuous.
        \begin{proof}
            The gradient of MSE can be written as:
            \begin{equation}
                \grad \ell(\mathbf{x}) = 2 \mathbf{S}^{\top} (\mathbf{Sx} - \mathbf{y}).
            \end{equation}

            Therefore, for any \(\mathbf{x}_{1}\) and \(\mathbf{x}_{2}\):
            \begin{equation}
                \begin{aligned}[b]
                    \norm{\grad \ell(\mathbf{x}_{1}) - \grad \ell(\mathbf{x}_{2}) } & = 2 \norm{ \mathbf{S}^{\top} (\mathbf{Sx}_{1} - \mathbf{y}) - \mathbf{S}^{\top} (\mathbf{Sx}_{2} - \mathbf{y}) } \\
                    & = 2 \norm{ \mathbf{S}^{\top} \mathbf{S} (\mathbf{x}_{1} - \mathbf{x}_{2}) } \\
                    & \le 2 \norm{\mathbf{S}^{\top} \mathbf{S}} \norm{\mathbf{x}_{1} - \mathbf{x}_{2}}.
                \end{aligned}
            \end{equation}

            Thus, MSE in this setting is Lipschitz-continuous.
        \end{proof}

        \paragraph{Lipschitz-continuous Hessian} Since the Hessian in this case is constant, it also satisfies the Lipschitz-continuity.

        % \paragraph{Bounded variance of gradient} This assumption holds as the result of \cref{assumption:loss_boundedness_lipschitz}. As shown in \cref{eq:bounded_grad}, \cref{assumption:loss_boundedness_lipschitz} leads to the boundedness of the gradient. As a result, its variance is also bounded by simply applying the triangle inequality.

    \subsection{Cross-entropy loss}
    \label{sec:ce_loss_example}
        The loss function can be defined as:
        \begin{equation}
            \ell(\mathbf{x}) = -\mathbf{y}^{\top} \ln \left[ \mathrm{softmax} \left( \mathbf{Sx} \right) \right],
        \end{equation}
        where:
        \begin{equation}
            \mathrm{softmax} \left( \mathbf{Sx} \right) = \frac{\exp \left( \mathbf{Sx} \right)}{\pmb{1}^{\top} \exp \left( \mathbf{Sx} \right)}
        \end{equation}
        is the softmax function.

        \paragraph{Lipschitz continuity} The cross-entropy loss is Lipschitz-continuous.
        \begin{proof}
            To prove, we employ the mean-value theorem and then find the upper-bound of the gradient norm.

            First, we calculate the gradient of the softmax function w.r.t. each element \(x_{i}\) in \(\mathbf{x}\):
            \begin{equation}
                \begin{aligned}[b]
                    \pdv{\mathrm{softmax} \left( \mathbf{Sx} \right)_{j}}{x_{i}} & = \pdv{}{x_{i}} \frac{\exp \left( \mathbf{Sx} \right)_{j}}{\pmb{1}^{\top} \exp \left( \mathbf{Sx} \right)} \\
                    & = \pdv{}{(\mathbf{Sx})_{j}} \frac{\exp \left( \mathbf{Sx} \right)_{j}}{\pmb{1}^{\top} \exp \left( \mathbf{Sx} \right)} \times \pdv{(\mathbf{Sx})_{j}}{x_{i}} \\
                    & = \mathrm{softmax} (\mathbf{Sx})_{j} \left[ \mathbbm{1}(j = i) - \mathrm{softmax} (\mathbf{Sx})_{i} \right] \mathbf{S}_{ij},
                \end{aligned}
                \label{eq:derivative_softmax}
            \end{equation}
            where the subscript denotes an element in the corresponding vector and \(\mathbbm{1}(.)\) denotes the indicator function.

            Thus, the derivative of the cross-entropy loss w.r.t. the parameter of interest \(\mathbf{x}\) can be written following the chain rule:
            \begin{equation}
                \begin{aligned}[b]
                    \pdv{\ell(\mathbf{x})}{x_{i}} & = - \pdv{}{x_{i}} \sum_{j = 1}^{n_{c}} \mathbf{y}_{j} \ln \left[ \mathrm{softmax} \left( \mathbf{Sx} \right)_{j} \right] \\
                    & = - \sum_{j = 1}^{n_{c}} \frac{\mathbf{y}_{j}}{\mathrm{softmax}(\mathbf{Sx})_{j}} \times \pdv{\mathrm{softmax}(\mathbf{Sx})_{j}}{x_{i}}\\
                    & = \sum_{j = 1}^{n_{c}} \mathbf{y}_{j} \left[ \mathbbm{1}(j = i) - \mathrm{softmax} (\mathbf{Sx})_{i} \right] \mathbf{S}_{ij},
                \end{aligned}
                \label{eq:ce_sm_derivative}
            \end{equation}
            where \(n_{c}\) is the number of classes.

            Therefore, one can apply the triangle inequality to obtain the following:
            \begin{equation}
                \begin{aligned}[b]
                    \norm{\pdv{\ell(\mathbf{x})}{x_{i}}} & \le \sum_{j = 1}^{n_{c}} \norm{\mathbf{y}_{j} \left[ \mathbbm{1}(j = i) - \mathrm{softmax} (\mathbf{Sx})_{i} \right] \mathbf{S}_{ij}} \\
                    & \le \sum_{j = 1}^{n_{c}} \underbrace{\norm{\mathbf{y}_{j}}}_{\le 1} \times \underbrace{\norm{\left[ \mathbbm{1}(j = i) - \mathrm{softmax} (\mathbf{Sx})_{i} \right]}}_{\le 2} \times \norm{\mathbf{S}_{ij}} \\
                    & \le 2 \sum_{j = 1}^{n_{c}} \norm{\mathbf{S}_{ij}}.
                \end{aligned}
            \end{equation}

            Since each element of the gradient is bounded, the gradient norm is also bounded.

            By applying the mean-value theorem, we can obtain the following:
            \begin{equation}
                \begin{aligned}[b]
                    \norm{\ell(\mathbf{x}_{1}) - \ell(\mathbf{x}_{2})} = \sup_{\mathbf{z} \in [\mathbf{x}_{1}, \mathbf{x}_{2}]} \norm{\grad_{\mathbf{z}} \ell(\mathbf{z})} \times \norm{\mathbf{x}_{1} - \mathbf{x}_{2}},
                \end{aligned}
            \end{equation}
            where: \(\mathbf{z} \in [\mathbf{x}_{1}, \mathbf{x}_{2}]\) denotes a vector z contained in the set of points between \(\mathbf{x}_{1}, \mathbf{x}_{2} \in \mathbb{R}^{D}\).

            Since the gradient norm is bounded, the mean-value theorem results in the Lipschitz continuity for the cross-entropy loss.
        \end{proof}

        \paragraph{Smoothness} The cross-entropy loss in this case is smooth, or in other words, its gradient is Lipschitz-continuous.
        \begin{proof}
            According to \eqref{eq:ce_sm_derivative}, the difference of derivative between \(\mathbf{x}_{1}\) and \(\mathbf{x}_{2}\) can be written as:
            \begin{equation}
                \begin{aligned}[b]
                    \pdv{\ell(\mathbf{x}_{1})}{x_{i}} - \pdv{\ell(\mathbf{x}_{2})}{x_i} & = \sum_{j = 1}^{n_{c}} \mathbf{y}_{j} \left[ \mathrm{softmax}(\mathbf{Sx}_{2})_{i} - \mathrm{softmax}(\mathbf{Sx}_{1})_{i} \right] \mathbf{S}_{ij} \\
                    & = \left[ \mathrm{softmax}(\mathbf{Sx}_{2})_{i} - \mathrm{softmax}(\mathbf{Sx}_{1})_{i} \right] \sum_{j = 1}^{n_{c}} \mathbf{y}_{j} \mathbf{S}_{ij}.
                \end{aligned}
            \end{equation}

            Therefore:
            \begin{equation}
                \begin{aligned}[b]
                    \norm{ \grad_{\mathbf{x}} \ell(\mathbf{x}_{1}) - \grad_{\mathbf{x}} \ell(\mathbf{x}_{2}) } & = \sqrt{ \sum_{i = 1}^{D} \left\{ \left[ \mathrm{softmax}(\mathbf{Sx}_{2})_{i} - \mathrm{softmax}(\mathbf{Sx}_{1})_{i} \right] \sum_{j = 1}^{n_{c}} \mathbf{y}_{j} \mathbf{S}_{ij} \right\}^{2} } \\
                    & \le \max_{i \in \{1, \ldots, D\}} \abs{ \sum_{j = 1}^{n_{c}} \mathbf{y}_{j} \mathbf{S}_{ij} } \times \sqrt{ \sum_{i = 1}^{D} \left[ \mathrm{softmax}(\mathbf{Sx}_{2})_{i} - \mathrm{softmax}(\mathbf{Sx}_{1})_{i} \right]^{2} } \\
                    & \le \max_{i \in \{1, \ldots, D\}} \abs{ \sum_{j = 1}^{n_{c}} \mathbf{y}_{j} \mathbf{S}_{ij} } \times \norm{\mathrm{softmax}(\mathbf{Sx}_{1}) - \mathrm{softmax}(\mathbf{Sx}_{2}) }.
                \end{aligned}
            \end{equation}

            And since \(\mathrm{softmax}\) function is Lipschitz-continuous\footnote{see Proposition 4 in \say{On the properties of the softmax function with application in game theory and reinforcement learning}, arXiv preprint arXiv:1704.00805.}, the gradient of cross-entropy loss, in this case, is also Lipschitz-continuous.
        \end{proof}

        \paragraph{Lipschitz continous Hessian} The Hessian matrix of the cross-entropy loss is Lipschitz-continuous.
        \begin{proof}
            To prove the Lipschitz continuity of the Hessian matrix for the cross-entropy loss, we will prove that the norm of its third order derivative tensor is bounded. According to \citep[Page 175]{nesterov2003introductory}, if the norm of the third order tensor, also known as \emph{Terssian}, is bounded, then the Hessian matrix will satisfies the Lipschitz continuity. To do this, we calculate each element in the Terssian tensor as follows:

            The second derivative of the cross-entropy loss w.r.t. an element \(x_{i}\) can be written as:
            \begin{equation}
                \begin{aligned}[b]
                    \pdv{\ell(\mathbf{x})}{x_{i}}{x_{k}} & = \pdv{}{x_{k}} \sum_{j = 1}^{n_{c}} \mathbf{y}_{j} \left[ \mathbbm{1}(j = i) - \mathrm{softmax} (\mathbf{Sx})_{i} \right] \mathbf{S}_{ij} \\
                    & = -\pdv{\mathrm{softmax} (\mathbf{Sx})_{i}}{x_{k}} \times \sum_{j = 1}^{n_{c}} \mathbf{y}_{j} \mathbf{S}_{ij} \\
                    & = - \mathrm{softmax} (\mathbf{Sx})_{k} \left[ \mathbbm{1}(k = i) - \mathrm{softmax} (\mathbf{Sx})_{i} \right] \mathbf{S}_{ik} \times \sum_{j = 1}^{n_{c}} \mathbf{y}_{j} \mathbf{S}_{ij}.
                \end{aligned}
            \end{equation}

            Thus, the third order derivative can also be derived as:
            \begin{equation}
                \begin{aligned}[b]
                    \frac{\partial^{3} \ell(\mathbf{x})}{\partial x_{i} \partial x_{k} \partial x_{l}} & = - \mathbf{S}_{ik} \times \left( \sum_{j = 1}^{n_{c}} \mathbf{y}_{j} \mathbf{S}_{ij} \right) \times \pdv{}{x_{l}} \left[ \mathrm{softmax} (\mathbf{Sx})_{k} \left[ \mathbbm{1}(k = i) - \mathrm{softmax} (\mathbf{Sx})_{i} \right] \right].
                \end{aligned}
                \label{eq:derivative_third_order}
            \end{equation}

            To this point, we can see that the third order derivative involves the derivative of softmax functions and some constants relating to input data \(\mathbf{S}\) and its label \(\mathbf{y}\). Also, as the derivative of softmax function shown in \eqref{eq:derivative_softmax} consists of softmax and indicator functions, the derivative of softmax function is bounded. Thus, we can conclude that the third order derivative in \eqref{eq:derivative_third_order} is also bounded with the bound depending on the norm of the input \(\max_{i, j} \abs{\mathbf{S}_{ij}}\) and label \(\max_{j} \abs{\mathbf{y}_{j}}\).

            When the norm of each element in the Terssian is bounded, its norm is also bounded. Thus, according to the result in \citep[Page 175]{nesterov2003introductory}, we can conclude that the Hessian is Lipschitz-continuous.
        \end{proof}

    \subsection{Values of the Lipschitz continuity constants}
        The values of the constants used in \cref{assumption:loss_boundedness_lipschitz,assumption:loss_smoothness,assumption:hessian_loss_lipschitz} heavily depend on some other parameters, including the maximum value of norm of the inputs and labels as well as the clipping norm value of the parameter \(\mathbf{x}\) (in case of MSE). Thus, we cannot give exact values in numbers for those constants, but only provide a few expressions to show the dependency of their values on the input and output.

        \begin{table}[h]
            \centering
            \caption{The values of some Lipschitz-continuity constants assumed in \cref{assumption:loss_boundedness_lipschitz,assumption:loss_smoothness,assumption:hessian_loss_lipschitz}.}
            \label{tab:my_label}
            \begin{tabular}{l c c}
                \toprule
                \bfseries Loss function & \(L\) & \(S\) \\
                \midrule
                MSE & \(\sup_{\mathbf{S}, \mathbf{y}} 2 \left( X_{0} \norm{\mathbf{S}^{\top} \mathbf{S}} + \norm{\mathbf{S}^{\top} \mathbf{y}} \right)\) & \(\sup_{\mathbf{S}} 2 \norm{\mathbf{S}^{\top} \mathbf{S}}\) \\
                CE & \(\sup_{\mathbf{S}} 2 \norm{\mathbf{S}}_{1}\) & - \\
                \bottomrule
            \end{tabular}
        \end{table}

\end{document}